\newcounter{framework}
\newenvironment{framework}[1][htb]
  {
   \let\c@algocf\c@framework
   \begin{algorithm}[#1]%
  }{\end{algorithm}}
\newcommand{\R}{\mathbb{R}}
\newcommand{\E}{\mathbb{E}}
\newcommand{\X}{\mathcal{X}}
\newcommand{\Y}{\mathcal{Y}}
\newcommand{\Z}{\mathcal{Z}}
\newcommand{\calH}{\mathcal{H}}
\newcommand{\cD}{c_\Delta}
\newcommand{\G}{\mathcal{G}}
\newcommand{\F}{\mathcal{F}}
\newcommand{\deff}{d_{\text{eff}}}
\title{Structured Prediction in Online Learning}
\author{%
  Pierre Boudart \\
  INRIA, École Normale Supérieure\\
  CNRS, PSL Research University\\
  Paris, France\\
  \texttt{pierre.boudart@inria.fr} \\
  \And
  Alessandro Rudi \\
  INRIA, École Normale Supérieure\\
  CNRS, PSL Research University\\
  Paris, France\\
  \texttt{alessandro.rudi@inria.fr}
  \AND
  Pierre Gaillard \\
  Univ. Grenoble Alpes, Inria, \\
  CNRS, Grenoble INP, LJK \\
  Grenoble, France\\
  \texttt{pierre.gaillard@inria.fr} \\
}
\begin{document}

\maketitle

\begin{abstract}
    We study a theoretical and algorithmic framework for structured prediction in the online learning setting. 
    The problem of structured prediction, i.e. estimating function where the output space lacks a vectorial structure, is well studied in the literature of supervised statistical learning. 
    We show that our algorithm is a generalisation of optimal algorithms from the supervised learning setting, and achieves the same excess risk upper bound also when data are not i.i.d.
    Moreover, we consider a second algorithm designed especially for non-stationary data distributions, including adversarial data. We bound its stochastic regret in function of the variation of the data distributions. 
\end{abstract}

\section{Introduction}\label{section:introduction}
Online learning is a subfield of statistical learning in which a learner receives a flow of data generated by an environment \citep{cesa2006prediction, orabona, hazan2023introduction}. The learner has to learn from the flow of data, and adapt to the data which could be non-stationary or adversarial.
More formally, at each time step \(t\), the learner receives a context \(x_t\in\X\) from which he makes a prediction \(\hat{z_t} = f_t(x_t) \in\Z\). His prediction is then compared to the true label \(y_t\in\Y\), which is observed. The learner than pays an error \(\Delta(\hat z_t, y_t)\) measured by a known loss function \(\Delta : \Z \times \Y \to \R\). The goal of the learner is to minimise his regret
\begin{equation}
    R_T = \sum_{t=1}^T \Delta(\hat{z_t}, y_t) - \Delta(f^*_t(x_t), y_t) \,,
\end{equation}
where \(\smash{f^*_t(x_t) \in \mathrm{arg\,min}_{z \in \Z} \Delta(z,y_t)}\). 
The inputs \(x_t\) and labels \(y_t\) are generated sequentially by the environment and could be adversarial. This could model the change of behaviour of a customer or an evolution of the environment such as climate change. Note that in our framework, unlike the standard regret definition in online learning, the learner's performance is compared to the best function $f_t$ at each round, similar to the approach used in dynamic regret~\citep{herbster1998tracking}. 

When the output space contains a vectorial structure, statistical learning provides many algorithms with statistical guarantees. However more and more applications involve an output space which lacks a linear structure, such as translation \citep{lacoste2006word}, image segmentation \citep{forsyth2002computer}, protein folding \citep{joachims2009predicting}, ranking \citep{duchi2010consistency}.
These problems are often referred as structured prediction problems, because the output space may be represented for instance as a sequence, a graph, or an ordered set.
In practice, an ad hoc method is designed to solved each of these problems and is most of the time based on surrogate methods and empirical risk minimisation. If they achieve good results in practice, they however lack generalisation and are not built in order to have good theoretical guarantees.

We consider the structured prediction framework of \textit{Implicit Loss Embedding (ILE)} \citep{ILE}, in which the loss is of the form $\smash{\Delta(z,y) = \langle \psi(z),\varphi(y)\rangle}$ for some unknown and infinite dimensional feature maps $\psi:\mathcal{Z}\to \mathcal{H}$ and $\varphi:\mathcal{Y}\to \mathcal{H}$ into an unknown RKHS $\mathcal{H}$ (see Definition~\ref{def:ILE}). Such an assumption is satisfied by most losses for rich enough feature maps and used in the practical applications detailed above. 
\citep{ILE} study this framework in a statistical supervised learning setting and provide a general algorithm for general problems including discrete outputs and manifold regression. Their algorithm comes with statistical guarantees on the excess risk when data are i.i.d. only. 

In the context of prediction of arbitrary sequences, the closer works to ours are \citep{mcmahan2014unconstrained} and \citep{pacchiano2018online}. On the one hand, \citep{mcmahan2014unconstrained} analyses a loss written as an inner product in a Hilbert space $\Delta(z,y) = \langle z, \varphi(y)\rangle$. However, they assume that the action space $\mathcal{Z}$ is itself a Hilbert space \(\calH\) which thus has a vectorial structure, contrary to the setting we consider. On the other hand, \cite{pacchiano2018online} also considers a loss expressed by a kernel with full information and partial feedback, but they do not consider contextual information $x_t$ and require prior knowledge of the kernel feature maps $\psi$ and $\varphi$, which we do not need. 

\paragraph{Contributions}
Our work is the first to study structured prediction in the framework of prediction of arbitrary sequences. 

We first introduce a new algorithm, called \textit{OSKAAR} (Algorithm~\ref{algo:first algo}) and inspired by the work of \citep{ILE} in the statitical framework. Given a RKHS $\mathcal{G}$ from $\mathcal{X}$ to $\mathcal{H}$ associated to a kernel of the feature space $k:\mathcal{X}\times\mathcal{X} \to \mathbb{R}$ and a regularization parameter $\lambda$, \textit{OSKAAR} achieves a regret upper-bound (Theorem~\ref{theorem:FirstRegretBound}) of order\footnote{The symbol $\lesssim$ is a rough inequality that neglects contants and logarithmic factors.}:
\begin{equation*}
    R_T \lesssim \sqrt{T \big(\deff(\lambda) + \smash{\min_{g\in\G} L_T(g) \big)} } \,, \qquad \text{where}  \quad L_T(g) := \sum_{t=1}^T || g(x_t) - \varphi(y_t)||^2_\calH + \lambda ||g||^2_\G \,,
\end{equation*}
where  \(\deff(\lambda)\) is the effective dimension~\eqref{eq:effective_dimension} that measures the size of the RKHS and \(L_T(g)\) measures how the RKHS $g$ is able to interpolate features of the data. In particular, if there is a function \(g^*\in\G\) that perfectly models the features \((\varphi(y_t))_t\), i.e. \( g^*(x_t) = \varphi(y_t)\) for all $t$, noting that $\deff(\lambda) \lesssim T/\lambda$, the above result yields a regret bound of the order of \(\smash{O(T^{3/4})}\). However, such an assumption is strong even for i.i.d. data, and the above bound might be linear in $T$ in the worst-case. in the worst-case scenario. This is not surprising, as the learner's performance is compared to the best possible baseline \(\mathrm{arg\,min}_{z \in \Z} \Delta(z,y_t)\) at each time step, which is generally unattainable.

To weaken the above assumption, we also prove the following expected regret bound for \textit{OSKAAR} (Theorem~\ref{theorem:expectancy regret bound}):
\begin{equation*}
    \E [ R_T ] \lesssim  \sqrt{T \big( \deff(\lambda)  +\smash{ \min_{g \in \mathcal{G}} \bar L_T(g)} }, \quad \text{where} \quad \bar L_T(g) = \E \bigg[ \sum_{t=1}^T \lVert g(x_t) - \E [\varphi(y_t)|x_t] \rVert^2_\calH \bigg] +  \lambda \lVert g \rVert^2_\G \,,
\end{equation*}
where the expectation is taken with respect to the possible randomness of the data $(x_t,y_t)$. In the context of arbitrary sequences, the two above results exactly match. Yet, the assumption that there exists some $g^*$ such that $g^*(x_t) = \E [\varphi(y_t)|x_t]$  for all $t$, is much weaker in general than assuming $g^*(x_t) = \varphi(y_t)$ since random variation of $\varphi(y_t)$ are not considered. Such an assumption is weak in the i.i.d. statistical framework and standard in the analysis of Kernel Ridge Regression \citep{caponnetto2007optimal, steinwart2008support}. It  corresponds to assuming that the data distribution lies in the RKHS. In particular, we show that our analysis allows to recover (up to a log factor) the optimal rate of \cite{ILE} in the i.i.d. setting, by designing an estimator $\bar f_T$ that satisfies the excess risk upper-bound:
\begin{equation*}
    \E_{x,y} [\Delta(\bar f_T(x), y) - \Delta(f^*(x), y)] \lesssim  T^{-1/4}  + T^{-1/2} \sqrt{\log (\delta^{-1})} \qquad \text{w.p. } \ 1-\delta.
\end{equation*} 
Our estimator $\bar f_T$ is constructed via a careful online to batch conversion to face with two challenges: the loss $\Delta(z,y)$ being non-convex in $z$ standard online to batch conversion techniques that use $\smash{\bar f_T  = \sum_{t=1}^T f_t}$ are not possible here; our result holds with high-probability which is challenging to obtain with such techniques \citep{vanderhoeven2023highprobability}. 

The above result still hold under the assumption that $g^*(x_t) = \E [\varphi(y_t)|x_t]$  for all $t$ for some $g^*$, which is weak for stationary data but strong in our framework of arbitrary sequences. Our third contribution aims at relaxing this assumption. We design a second algorithm, referred to as \textit{SALAMI} (Algorithm~\ref{algo:non stationary aggregation}), that achieves under the assumption that there exists $g_t^* \in \mathcal{G}$ such that  $g_t^*(x_t) = \E[\varphi(y_t)|x_t]$ for all $t$:
\begin{equation}
    \E [ R_T ] = 
        \left\{
        \begin{array}{ll}
         \tilde O(V_\G^{1/6} T^{5/6}) & \text{if } \lambda = V_\G^{-1/3} T^{1/3} \\
         \tilde O(V_0^{1/4} T^{3/4}) & \text{if } \lambda = V_0^{-1/2} T^{1/2} 
        \end{array}
        \right. \,,
\end{equation}
where $V_0$ and $V_\G$ are two different measures of the non-stationarity of the sequence $(g_t^*)$:
\[
    V_0 = 1 + \sum_{t=2}^T \mathbbm 1 \{g_t^* \neq g_{t-1}^* \}  \qquad \text{and} \qquad  V_\G := \|g_1^*\|_\G + \sum_{t=2}^T \lVert g_t^* - g_{t-1}^* \rVert_\G  \,.
\]

\paragraph{Paper outline}
In the next section, we recall the setting of the problem and the background on the \textit{ILE} definition. In Section \ref{section:algorithm}, we introduce our first algorithm \textit{Online Structured prediction with Kernel Aggregating Algorithm Regression (OSKAAR)} and the algorithm from the batch setting. In section \ref{section:first bound}, we bound the regret of our algorithm. In Section \ref{section:expectancy regret}, we recover the convergence rate from the batch setting without stochastic assumption. And in Section \ref{section:dynamic regret}, we introduce our second algorithm \textit{Structured prediction ALgorithm with Aggregating MIxture (SALAMI)} for non-stationary data and bound its stochastic regret.
The details of the proofs can be found in appendix.
Moreover, in Appendix \ref{section:appendix stochastic regret bounds} and \ref{section:appendix non stationary high probability} we provide bounds in high probability for both the stationary and the non-stationary settings.

\section{Problem Setting and Background}\label{section:problem setting background}
We recall the setting and introduce the main notations used throughout the paper. We then discuss the limitations of the previous works.
We denote by \( \X,\Y \) and \(\Z\)  respectively the input, label and output spaces of the learning problem. We denote by \( \Delta : \Z \times \Y \to \R \) the loss function, which measures the error between a prediction in \(\Z\) and a true label in \(\Y\). Having two different spaces \(\Y\) and \(\Z\) allows to consider applications where the outputs do not match the labels such as ranking \citep{duchi2010consistency}.

\begin{minipage}{.53\textwidth} 
\paragraph{Online Learning Framework}

Our online framework is formalised as a game between a learner and an environment, see Framework \ref{algo:OLCI}. At each time step \(t\ge 1\), the user receives a context \(x_t \in\X\), computes a prediction \(\hat{z_t} = f_t(x_t) \in\Z\) based on the current context \(x_t\) and the history \( (x_1, y_1, \dots, x_{t-1}, y_{t-1})\). The true label \(y_t\in\Y\) is then revealed to the learner, which incurs a loss \(\Delta(\hat z_t, y_t)\).
In this framework we are in the full information setting. That is to say that observing the label \(y_t\) enables the learner to compute the loss \(\Delta(z, y_t)\) for all \(z\in\Z\).
\end{minipage}\quad 
\begin{minipage}{.43\textwidth}
\begin{framework}[H]\label{algo:OLCI}
\caption{Online learning framework with contextual information}
\For{Each time step \(t\) in \(1 \dots T\)}{
    Get information \(x_t\in\X\)\\
    Compute the prediction \\
    \qquad \(\hat{z_t} = f_t(x_t) \in\Z\)\\
    Observe the label \(y_t\in\Y\)\\
    Get loss \(\Delta(\hat{z_t}, y_t)\in\R\) \\
    Update predictor \(f_{t+1}\)
}
\end{framework}
\end{minipage}

The online learning setting allows us to also work with adversarial or non-stationary data, i.e. data that are not i.i.d. This could model a change of the environment. Throughout the paper we consider a loss \( \Delta : \Z \times \Y \to \R \) that admits an \textit{Implicit Loss Embedding (ILE)}, see Definition \ref{def:ILE}, with feature maps \(\psi, \varphi\), and a Hilbert space \(\calH\).
\begin{restatable}[ILE \citep{ILE}]{defi}{DefILE}
\label{def:ILE}
    A continuous map \(\Delta : \Z \times \Y \to \R\) is said to admit an Implicit Loss Embedding (ILE) if there exists a separable Hilbert space \(\calH\) and two measurable bounded maps \(\psi : \Z \to \calH\) and \(\varphi : \Y \to \calH\), such that for any \(z\in\Z\) and \(y\in\Y\) we have 
    \begin{equation}
        \Delta(z,y) = \langle \psi(z) , \varphi(y) \rangle_\calH
    \end{equation}
    and \(||\varphi(y)||_\calH \le 1\). Additionally, we define \(\cD = \sup_{z\in\Z} ||\psi(z)||_\calH\).
\end{restatable}

In particular we do not assume that the loss is convex or differentiable.
The metric to evaluate the performance of a learning algorithm is the regret defined as
\begin{equation}
    R_T = \sum_{t=1}^T \Delta(\hat{z_t}, y_t) - \Delta(f^*_t(x_t), y_t)
\end{equation}
where \(f^*_t(x_t) \in \mathrm{arg\,min}_{z \in \Z} \Delta(z,y_t)\) is used as the baseline. Taking the optimum inside the sum as we do is stronger than taking the optimum of the sum as is usually done.

\paragraph{Structured Prediction}
This is the most general setting in supervised learning. We say that a learning problem is structured if we have one of the following conditions \citep{nowak}:
\begin{itemize}[nosep]
    \item The loss is different than the 0-1 loss : \( \Delta(z,y) \neq \mathbbm{1}[z\neq y] \).
    \item The size of the output space is exponentially larger than the natural dimension of the output elements.
\end{itemize}
The first condition implies that some pairs of outputs and labels are closer than others. For instance, two sets that differ by only one element should be closer to each other compared to sets with an empty intersection. The second condition characterizes a space of sequences, where the cardinality is exponential in the size of the dictionary used to build the sequences. The following spaces and losses are structured:
\begin{itemize}[nosep]
    \item Subsets of \( \llbracket k \rrbracket := \{ 1, \dots, k \} \) with the negative F1 score \( \Delta(z,y) = - 2 |z \cap y | / (|z| + |y|) \)
    \item Ordered elements: \( \Z=\Y= ( \llbracket k \rrbracket, < ) \) with \( \Delta(z,y) = |z-y| \)
    \item Sequences of \(k\) elements of a dictionary $\mathcal{D}$ with the Hamming distance \( \Delta(z,y) = \lVert z-y \rVert_0 \)
    \item Ranking, Information Retrieval: 
    the goal is to predict an ordered list of documents or web pages from \( x\in\X \) a query in a search engine. The output space \(\Z\) is the space of permutations and the label space \(\Y\) contains scalar scores representing the relevance of each document for the query \citep{duchi2010consistency}.
\end{itemize}
Note that we do not assume to have a vectorial structure in the output or the label space.

\paragraph{Standard Approach}
The classical learning approach, in the supervised learning setting, is Empirical Risk Minimisation (ERM) \citep{devroye2013probabilistic}. The expected risk is estimated by the empirical risk, and \( f_n \) computed as its minimiser. The underlying idea is that \(f_n\) should approach \(f^*\) as size of the sample \(n\) grows. The estimator \(f_n\) is defined as follows
\begin{equation}
    f_n = \arg\min_{f\in\F} \dfrac{1}{n} \sum_{i=1}^n \Delta(f(x_i), y_i)
\end{equation}
where \(\F\) is a class of function and an hyper-parameter of the method. 
When the loss \(\Delta\) is convex and the output space \(\Z\) has a vectorial structure ERM becomes an efficient strategy for a large family of spaces \(\F\).
However this strategy presents some limitations \citep{ILE}:
\begin{itemize}[nosep]
    \item \textbf{Modeling.} If we do not assume to have a vectorial structure on the output space \(\Z\), it is not clear how to design a suitable function space \(\F\). For instance, given \( f_1, f_2 : \X \to \Z \), there is no guarantee that \(f_1 + f_2\) takes values in \(\Z\) as well.
    \item \textbf{Computations.} If the function space \(\F\) is non-linear or the loss in non-convex, solving ERM can be challenging. Most approaches, such as gradient descent, are based on the regularity of the loss or the optimisation domain.
\end{itemize}

\paragraph{Existing results in the batch statistical framework}
We briefly recall the main results from \cite{ILE}.
The authors introduced the \textit{ILE} assumption (see Def.~\ref{def:ILE}) and studied learning problems that satisfy this definition in the supervised learning setting. The mathematical constructs introduced in this definition, such as the feature maps \(\psi, \varphi\) and the Hilbert space \(\calH\), are used solely for analysis purposes and algorithm design. Notably, they are not required for making predictions. An important feature of their work, which we also achieve, is that our online algorithms do not need prior knowledge of \(\psi, \varphi\) and \(\calH\).

Let \((x_i, y_i)_{i=1}^n\) be a sample of i.i.d. data. \cite{ILE} consider the ERM estimator \(g_n : \X \to \calH\) that learns the features \(\varphi(y)\) as
\begin{equation}\label{eq:ERM gn (0)}
    g_n := \arg\min_{g\in\G} \dfrac{1}{n} \sum_{i=1}^n \lVert \varphi(y_i) - g(x_i) \rVert^2_\calH + \lambda \lVert g \rVert^2_\G
\end{equation}
over a known kernel space \(\G\). Choosing a kernel space gives us a closed form solution and strong algebraic properties to analyse the algorithm. Moving the problem to the feature space \(\calH\), enables us to enjoy the vectorial structure of \(\calH\). The authors then define the predictor \(f_n : \X \to \Z\)  as an optimisation problem using \(g_n\) as follows
\begin{equation}
    f_n(x) := \arg\min_{z\in\Z} \langle \psi(z), g_n(x) \rangle_\calH = \arg\min_{z\in\Z} \sum_{i=1}^n \alpha_i(x) \Delta(z, y_i) \,,
\end{equation}
where $\alpha_i$ are coefficients obtained by resorting to the representer theorem.
Let \(\mathcal{E}\) and \(\mathcal{R}\) be the expected risk of \(f_n : \X \to \Z\) and \(g_n : \X \to \calH\) respectively and \(f^*\) and \(g^*\) be their respective minimizers. \cite{ILE} show that the excess risk of \(f_n\) is controlled by the one of \(g_n\) enabling them to carry out their analysis. The following comparison inequality and convergence rate are derived:
\begin{equation*}
    \mathcal{E}(f) - \mathcal{E}(f^*) \lesssim \sqrt{\mathcal{R}(g) - \mathcal{R}(g^*)}     \le O\left( n^{-1/4} \log\left(\delta^{-1}\right) \right) \quad \text{w.p.}\  1-\delta
\end{equation*}
where \(\lesssim\) does not take into account multiplicative constants independent of \(n\) and \(\delta\).

\paragraph{Limitations of previous works}
This work is limited to the batch statistical framework with i.i.d. data. However some applications involve a flow of data; or data generated by non-stationary distributions including adversarial data. Our work is the first to study structured prediction in the setting of arbitrary sequences. 

\section{A General Algorithm for Online Structured Prediction}
In this section we introduce our algorithm \textit{OSKAAR (Online Structured prediction with Kernel Aggregating Algorithm Regression)} and bound its regret.

\subsection{Introducing our Algorithm: \textit{OSKAAR}}\label{section:algorithm}

To simplify notations, we may denote \(\varphi(y_t)\) by \(\varphi_t\in\calH\). We recall that the feature map \( \varphi : \Y \to \calH \) is constant over time, the index \(t\) in this notation denotes only the variation of \(y_t\) over time.

\begin{algorithm}[H]
\caption{\textit{OSKAAR -- Online Structured prediction with Kernel Aggregating Algorithm Regression}}
\label{algo:first algo}
\KwIn{\(\lambda >0\), kernel \( k: \X \times \X \to \R \)}
\For{Each time step \(t\) in \(1 \dots T\)}{
    Get information \(x_t\in\X\)\\
    Update \( \smash{\beta^t (x) = (K_t + \lambda I )^{-1} v_t(x)} \) where \( K_t \) and \(v_t\) are defined after Eq. \eqref{eq:beta t} \\
    \(\smash{\hat{z_t} = \arg\min_{z\in\Z} \langle\psi(z), \hat g_t(x_t) \rangle_\calH = \arg\min_{z\in\Z} \sum_{s=1}^{t-1} \beta^t_s(x_t) \Delta(z, y_s) }\)\\
    Observe ground truth \(y_t\in\Y\)\\
    Get loss \(\Delta(\hat{z_t}, y_t)\in\R\)
}
\end{algorithm}

We introduce our first algorithm, see Algorithm \ref{algo:first algo}, which is inspired by the learning procedure of \cite{ILE}. However, we use a variant of Kernel Ridge Regression that has a different regularisation which is crucial in the context of arbitrary data, \textit{Kernel
Aggregating Algorithm Regression (KAAR)}, see \cite{gammerman2012online, jezequel_efficient_2019}. At each time step \(t\in\llbracket T \rrbracket\), we compute \(\hat g_t : \X\to\calH \) as follows
\begin{equation}\label{eq:hat g_t}
    \hat g_t := \arg\min_{g\in\G} \sum_{s=1}^{t-1} \lVert g(x_s) - \varphi_s \rVert^2_\calH + \lambda \lVert g \rVert^2_\G + \lVert g(x_t) \rVert^2_\calH \,, 
\end{equation}
where \( \G \) is a vRKHS with feature map \(\phi\) such that \( \smash{\sup_{x\in\X} \lVert \phi(x) \rVert \le \kappa < \infty} \), see Appendix \ref{section:first bound appendix} for more details.
And \(f_t\) is defined as an optimisation problem with respect to \(\hat g_t\) as in the batch setting
\begin{equation}\label{eq:f_t}
    f_t(x) := \arg\min_{z\in\Z} \langle \psi(z), \hat g_t(x) \rangle_\calH = \arg\min_{z\in\Z} \sum_{s=1}^{t-1} \beta^t_s(x) \Delta(z, y_s) \,,
\end{equation}
where the coefficients \(\beta^t_s\) come from the representer theorem, and are defined as follows
\begin{equation}\label{eq:beta t}
    \beta^t (x) = (K_t + \lambda I )^{-1} v_t(x)
\end{equation}
with \(\smash{K_t \in \R^{t\times t}} \) the Gram matrix defined by \(\smash{(K_t)_{i,j} = k(x_i, x_j)}\), and \(\smash{v_t(x) \in \R^t}\) defined by \(\smash{ (v_t(x))_s = k(x,x_s) }\).
Thus, at each time step \(t\), the prediction is computed by
\begin{equation}
    \hat z_t := f_t(x_t) = \arg\min_{z\in\Z} \sum_{s=1}^{t-1} \beta^t_s(x_t) \Delta(z, y_s) .
\end{equation}
Hence, we note that, as in the supervised learning setting, the mathematical objects \(\psi, \varphi, \calH\) introduced in the definition of ILE are not needed to make a prediction. We only need the knowledge of the different labels \(y_s\) in order to compute \(\Delta(., y_s)\).

\subsection{Regret Bound of \textit{OSKAAR}}\label{section:first bound}

We start our analysis by proving a comparison inequality, see Lemma \ref{theorem:online CI}. It extends any bound on the empirical risk of \( (\hat g_t)_t \) to a bound on the regret of \( (f_t)_t \). We can therefore carry out the analysis on \((\hat g_t)_t\) for which we have a closed form solution and lies in a space with algebraic assumptions.

\begin{restatable}[Online Comparison Inequality]{lmm}{OnlineCI}\label{theorem:online CI}
    Let \( (f_t)_t \) and \( (\hat g_t)_t \) be defined as in \eqref{eq:f_t} and \eqref{eq:hat g_t} respectively. Then we have
    \begin{equation}
        R_T \le 2 \cD \sqrt{T} \sqrt{\sum_{t=1}^T ||\varphi_t - \hat{g}_t(x_t)||^2_\calH } \,.
    \end{equation}
\end{restatable}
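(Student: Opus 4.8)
The plan is to bound the per-round regret $\Delta(\hat z_t, y_t) - \Delta(f_t^*(x_t), y_t)$ by exploiting the ILE structure, then sum and apply Cauchy--Schwarz. The key observation is that both $\hat z_t = f_t(x_t)$ and $f_t^*(x_t)$ live in $\Z$, and the loss is an inner product $\Delta(z,y) = \langle \psi(z), \varphi_t\rangle_\calH$. So I would write the instantaneous regret as
\begin{equation*}
\Delta(\hat z_t, y_t) - \Delta(f^*_t(x_t), y_t) = \langle \psi(\hat z_t) - \psi(f^*_t(x_t)), \varphi_t \rangle_\calH \,.
\end{equation*}
The trick is to introduce the prediction $\hat g_t(x_t)$ as a pivot. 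Since $\hat z_t = f_t(x_t) = \arg\min_{z\in\Z}\langle \psi(z), \hat g_t(x_t)\rangle_\calH$, it minimises the surrogate inner product against $\hat g_t(x_t)$; in particular $\langle \psi(\hat z_t), \hat g_t(x_t)\rangle_\calH \le \langle \psi(f^*_t(x_t)), \hat g_t(x_t)\rangle_\calH$, so that $\langle \psi(\hat z_t) - \psi(f^*_t(x_t)), \hat g_t(x_t)\rangle_\calH \le 0$.

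\textbf{Main steps.} First I would add and subtract $\hat g_t(x_t)$ inside the inner product, writing the instantaneous regret as $\langle \psi(\hat z_t) - \psi(f^*_t(x_t)), \varphi_t - \hat g_t(x_t)\rangle_\calH + \langle \psi(\hat z_t) - \psi(f^*_t(x_t)), \hat g_t(x_t)\rangle_\calH$. By the optimality of $\hat z_t$ just noted, the second term is nonpositive and can be dropped. For the first term I would apply Cauchy--Schwarz in $\calH$ and the triangle inequality $\lVert \psi(\hat z_t) - \psi(f^*_t(x_t))\rVert_\calH \le 2\cD$, using the definition $\cD = \sup_{z\in\Z}\lVert\psi(z)\rVert_\calH$. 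This yields the per-round bound $\Delta(\hat z_t, y_t) - \Delta(f^*_t(x_t), y_t) \le 2\cD\,\lVert \varphi_t - \hat g_t(x_t)\rVert_\calH$. Summing over $t$ and applying the Cauchy--Schwarz inequality for sums, $\sum_t a_t \le \sqrt{T}\sqrt{\sum_t a_t^2}$ with $a_t = \lVert\varphi_t - \hat g_t(x_t)\rVert_\calH$, gives exactly the claimed inequality.

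\textbf{Expected obstacle.} The argument above is essentially a few lines once the pivot is correctly chosen; the only subtlety I anticipate is justifying that dropping the surrogate term is legitimate, i.e. carefully checking that the $\arg\min$ defining $\hat z_t$ really controls $\langle \psi(\hat z_t) - \psi(f^*_t(x_t)), \hat g_t(x_t)\rangle_\calH$ against the comparator $f_t^*(x_t) \in \Z$. This requires that the comparator be a feasible point of the same minimisation, which holds since $f_t^*(x_t) \in \Z$ and the minimisation is over all of $\Z$. A second point worth verifying is that the bound $\lVert\psi(z)\rVert_\calH \le \cD$ holds uniformly so the factor $2\cD$ is valid at every round; this is immediate from the ILE definition. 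No stochastic or regularity assumptions on $\Delta$ are needed, which is consistent with the non-convex, non-differentiable generality emphasised earlier in the paper.
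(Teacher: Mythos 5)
Your proof is correct and follows exactly the paper's own argument: the same pivot decomposition adding and subtracting $\hat g_t(x_t)$, dropping the surrogate term via the optimality of $\hat z_t = \arg\min_{z\in\Z}\langle \psi(z), \hat g_t(x_t)\rangle_\calH$, then Cauchy--Schwarz in $\calH$ with the $2\cD$ bound, and finally the $\sum_t a_t \le \sqrt{T}\sqrt{\sum_t a_t^2}$ step (which the paper attributes to Jensen's inequality, but is the same inequality you invoke).
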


Compared to \cite{ILE}, our online comparison inequality does not provide an upper bound with respect to a global minimiser \(\smash{g^*}\). The baseline is not reflected in the right hand side of this inequality. Finding a comparison inequality which controls the regret with respect to a baseline in the same class of functions than our estimators is left for future works.
However, this result still shifts the problem into the feature space which possesses a lot more algebraic properties. It allows us to derive the regret bound of Theorem \ref{theorem:FirstRegretBound}.
The regret bound is expressed with respect to the  effective dimension \( \deff(\lambda) \) \citep{rudi2016more, zadorozhnyi2021online} defined by 
\begin{equation}
    \label{eq:effective_dimension}
    \deff(\lambda) := \mathrm{Tr}(K (K+\lambda I)^{-1}) \quad \forall \lambda > 0
\end{equation}
where \(\smash{K \in \R^{T\times T}} \) is the Gram matrix at time \(T\). The effective dimension measures the complexity of the underlying RKHS based on a given data sample. It is a decreasing function of the scale parameter \(\lambda\) and \( \deff(\lambda) \to 0 \) when \( \lambda \to \infty \). And when \( \lambda \to 0 \) it converges to the rank of \(K\). Moreover it is always upper bounded by \( \smash{\deff(\lambda) \le \kappa^2 T / \lambda} \).
We obtain the following regret bound.
\begin{restatable}[Regret Bound of \textit{OSKAAR}]{thm}{FirstRegretBound}\label{theorem:FirstRegretBound}
    Let \( (f_t)_t \) be defined as in \eqref{eq:f_t}. Then for all \(\lambda>0\) and \(T\ge1\) we have 
    \begin{equation}
        R_T \le 2 \cD \sqrt{T} \sqrt{ \log\left(e + \dfrac{e\kappa^2 T}{\lambda}\right) \deff(\lambda) + \min_{g\in\G} L_T(g)}
    \end{equation}
    where \( \smash{L_t(g) := \sum_{s=1}^t || \varphi_s - g(x_s) ||^2_\calH + \lambda ||g||^2_\G} \) for every \( g\in\G \).
\end{restatable}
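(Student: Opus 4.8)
The plan is to combine the Online Comparison Inequality (Lemma~\ref{theorem:online CI}) with a regret bound for the vector-valued \emph{KAAR} forecaster $(\hat g_t)_t$ against the regularized least-squares objective $L_T$. By Lemma~\ref{theorem:online CI} it suffices to control the cumulative feature-prediction error
\[
    S_T := \sum_{t=1}^T \lVert \varphi_t - \hat g_t(x_t) \rVert^2_\calH ,
\]
since then $R_T \le 2\cD\sqrt{T}\sqrt{S_T}$. The heart of the proof is therefore to establish the KAAR regret bound
\[
    S_T \le \min_{g\in\G} L_T(g) + \log\det\!\Big(I + \tfrac{1}{\lambda}K_T\Big),
\]
after which it only remains to convert the log-determinant into the effective dimension.

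First I would recast $\hat g_t$ in primal form: writing $g(x)=W\phi(x)$ with $\lVert g\rVert_\G=\lVert W\rVert_{HS}$, the definition~\eqref{eq:hat g_t} is exactly the Vovk--Azoury--Warmuth / \emph{KAAR} forecaster with design covariance $A_t := \lambda I + \sum_{s=1}^t \phi(x_s)\phi(x_s)^\ast$, the extra term $\lVert g(x_t)\rVert^2_\calH$ in~\eqref{eq:hat g_t} being precisely what inserts $\phi(x_t)\phi(x_t)^\ast$ into $A_t$ (a phantom target $0$ at $x_t$). Since the dual weights $\beta^t(x_t)=(K_t+\lambda I)^{-1}v_t(x_t)$ do not depend on the targets $\varphi_s$, the predictor decouples across any orthonormal basis $(e_j)$ of $\calH$: each coordinate $\varphi_t^{(j)}=\langle\varphi_t,e_j\rangle$ is predicted by the scalar KAAR run on the common inputs $\phi(x_s)$. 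For each coordinate the standard per-step potential argument (completing the square / Sherman--Morrison on $A_t$, telescoping the potential $\Phi_t=\min_{w}\sum_{s\le t}(\varphi_s^{(j)}-\langle w,\phi(x_s)\rangle)^2+\lambda\lVert w\rVert^2$) yields
\[
    \sum_{t=1}^T \big(\varphi_t^{(j)}-\hat g_t(x_t)^{(j)}\big)^2 \le \min_{w}\Big(\sum_{t=1}^T (\varphi_t^{(j)}-\langle w,\phi(x_t)\rangle)^2 + \lambda\lVert w\rVert^2\Big) + \sum_{t=1}^T (\varphi_t^{(j)})^2\, \phi(x_t)^\ast A_t^{-1}\phi(x_t).
\]
The key point --- and the main obstacle --- is that the potential term $\phi(x_t)^\ast A_t^{-1}\phi(x_t)$ is \emph{identical across coordinates}, so summing over $j$ (the minimizers recombine into $\min_{g\in\G}L_T(g)$ because both loss and regularizer decouple over $j$) replaces $(\varphi_t^{(j)})^2$ by $\sum_j(\varphi_t^{(j)})^2 = \lVert\varphi_t\rVert^2_\calH \le 1$, giving $S_T \le \min_{g\in\G} L_T(g) + \sum_{t=1}^T \phi(x_t)^\ast A_t^{-1}\phi(x_t)$. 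I would close this step with the elliptical-potential lemma $\sum_t \phi(x_t)^\ast A_t^{-1}\phi(x_t) \le \log(\det A_T/\det A_0)$ and the determinant identity $\det A_T/\det(\lambda I)=\det(I+\tfrac1\lambda K_T)$, taking care to justify both in the (possibly infinite-dimensional) RKHS $\G$ through the dual $T\times T$ formulation, or equivalently by invoking the scalar KAAR bound of \cite{gammerman2012online, jezequel_efficient_2019} coordinatewise.

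Finally I would convert the log-determinant into the effective dimension. Writing $\mu_1,\dots,\mu_T\ge0$ for the eigenvalues of $K_T$, we have $\log\det(I+\tfrac1\lambda K_T)=\sum_i \log(1+\mu_i/\lambda)$ while $\deff(\lambda)=\sum_i \mu_i/(\mu_i+\lambda)$. Since $\mathrm{Tr}(K_T)=\sum_i k(x_i,x_i)\le \kappa^2 T$, every $\mu_i\le\kappa^2 T$; moreover $u\mapsto (1+u)\log(1+u)/u$ is nondecreasing on $(0,\infty)$, so with $u=\mu_i/\lambda\le \kappa^2T/\lambda$ and using $\log(1+v)\le v$ at $v=\kappa^2T/\lambda$,
\[
    \log\!\Big(1+\tfrac{\mu_i}{\lambda}\Big) = \frac{(1+u)\log(1+u)}{u}\cdot\frac{u}{1+u} \le \Big(1+\log\big(1+\tfrac{\kappa^2 T}{\lambda}\big)\Big)\frac{\mu_i}{\mu_i+\lambda} = \log\!\Big(e+\tfrac{e\kappa^2 T}{\lambda}\Big)\frac{\mu_i}{\mu_i+\lambda}.
\]
Summing over $i$ gives $\log\det(I+\tfrac1\lambda K_T)\le \log(e+e\kappa^2T/\lambda)\,\deff(\lambda)$. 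Plugging this into the KAAR bound for $S_T$ and then into $R_T\le 2\cD\sqrt{T}\sqrt{S_T}$ yields the claim. I expect the difficulty to be concentrated in the second paragraph --- rigorously establishing the vector-valued KAAR per-step inequality and the potential/determinant identities in the RKHS --- whereas the comparison inequality is already available and the eigenvalue conversion is a short calculus argument.
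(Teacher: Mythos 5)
Your proposal is correct, and its skeleton coincides with the paper's: control $R_T$ by the cumulative feature error through the online comparison inequality (Lemma~\ref{theorem:online CI}), bound that error by a regret bound for the vector-valued \emph{KAAR} forecaster, and convert the resulting potential sum into $\log(e+e\kappa^2T/\lambda)\,\deff(\lambda)$; you even recover the same constant, since $\lVert\varphi_t\rVert_\calH\le 1$ gives $B^2=1$. Where you genuinely differ is in how the central KAAR lemma (the paper's Lemma~\ref{theorem:regret vawk}) is established. The paper argues directly at the operator level in the vRKHS $\G$: it telescopes the optimal regularized objectives, uses the identity $\hat g_t - g_{t+1} = -C_{t,\lambda}^{-1}\phi(x_t)h_t$ to bound each increment by $\lVert h_t\rVert^2\langle\phi(x_t),C_{t,\lambda}^{-1}\phi(x_t)\rangle$, and then invokes Propositions~1 and~2 of \cite{jezequel_efficient_2019} as black boxes for the passage from $\sum_t\langle\phi(x_t),C_{t,\lambda}^{-1}\phi(x_t)\rangle$ to the effective dimension. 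You instead reduce to the scalar Vovk--Azoury--Warmuth bound coordinatewise along an orthonormal basis of $\calH$, which is legitimate because the dual coefficients $\beta^t$ are target-independent, so the predictor, the square loss and the regularizer all decouple; note that only the easy direction $\sum_j\min_{w_j}(\cdots)\le\min_{g\in\G}L_T(g)$ is needed when recombining, so no assembly of the coordinatewise minimizers into an element of $\G$ is actually required. Your observation that the common potential term gets weighted by $\sum_j(\varphi_t^{(j)})^2=\lVert\varphi_t\rVert^2_\calH\le 1$ plays exactly the role of the paper's bound $\lVert h_t\rVert_\calH\le B$. Finally, you prove rather than cite the two conversion steps, and your calculus argument (monotonicity of $(1+u)\log(1+u)/u$ together with $\log(1+v)\le v$) is precisely the content of the cited Proposition~2, so it checks out. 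What each approach buys: yours is more elementary and self-contained, resting on the classical scalar analysis; the paper's operator-level formulation is basis-free and is what gets reused essentially verbatim for the shifted-loss and restarted variants in Appendices~\ref{section:appendix stochastic regret bounds}--\ref{section:appendix non stationary high probability}. The one point to handle carefully in a full write-up, which you flag yourself, is the meaning of determinants in infinite dimension: either work with Fredholm determinants of $I+\lambda^{-1}C_T$, or restrict to the span of $\phi(x_1),\dots,\phi(x_T)$ and use the dual identity $\det(A_T)/\det(\lambda I)=\det(I+\lambda^{-1}K_T)$.
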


The proof of this statement is postponed to Appendix \ref{section:first bound appendix}.
In the worst case scenario, if \(\varphi_t\) is a Rademacher variable, \(\smash{\min_{g\in\G} L_T(g)}\) is linear in \(T\) yields a linear regret bound. This is due to the fact that we are comparing our model to the best possible \(z\in\Z\) at each time step, which is much too rich and linear regret is unavoidable in the worst case.
On the other hand, if there is a function \(g^*\in\G\) that perfectly models the features \(\smash{(\varphi_t)_t}\), i.e. \( \smash{g^*(x_t) = \varphi_t}\) for all $t$, by taking \( \smash{\lambda = \sqrt{T} / \lVert g^* \rVert_\G} \) and bounding the effective dimension by \( \deff(\lambda) \le \kappa^2 T / \lambda \), we obtain $\smash{R_T \le \tilde O \left( T^{3/4} \right)}$.
However, the assumption \( \smash{\sum_{t=1}^T \lVert g^*(x_t) - \varphi_t \rVert^2_\calH = 0} \) is too strong for adversarial data and even for i.i.d. data with white noise. These considerations motivate the study of the expected regret in the next section and the cumulative risk in Appendix~\ref{section:appendix stochastic regret bounds}.

\paragraph{Computation time}
At each time step \(t\), we need to compute the vector \(\beta^t(x_t) \in \R^t \). Thus the per round complexity is of \( O(t^2) \). If the kernel satisfies the capacity condition \( \smash{ \deff(\lambda) \le (T/\lambda)^\beta } \) for \( \smash{\beta\in [0,1]}\) (see Appendix \ref{section:appendix capacity condition}), using a method based on Nyström approximation \citep{jezequel_efficient_2019}, it is possible to recover the same regret with a computational complexity of \( O( \deff(\lambda)^{4/(1-\lambda)} ) \).

\section{Stochastic Regret Bounds}\label{section:expectancy regret}
In this section, we generalize the results from the supervised learning setting in \cite{ILE}. We achieve the same convergence rate as in the batch statistical framework, although our results hold without stochastic assumptions.
We are now interested in bounding the expected regret \( \E [ R_T ] \), where the expectation is taken over the possible randomness of the data \( (x_1, y_1, \dots, x_T, y_T) \). Note that the data are still generated sequentially and can be adapted to the player, in particular they can be adversarial and follow Dirac distributions.
Note that the data are still generated sequentially and can adapt to the player, meaning they can be adversarial and follow Dirac distributions. Taking the expectation helps to avoid the noise inherent in the data and enables us to obtain results closer to \cite{ILE} by replacing \(\varphi_t\) with to \( \E[\varphi_t|x_t] \) in our result. We study the same algorithm (\textit{OSKAAR}) as in the previous section, see Algorithm \ref{algo:first algo}.
We obtain the following regret bound and its corollary proved in Appendix \ref{section:appendix expectancy regret}.

\begin{restatable}[Expected Regret Bound]{thm}{ThmExpectancyRegretBound}
    \label{theorem:expectancy regret bound}
    Let \( (f_t)_t \) be defined as in \eqref{eq:f_t}. Then, for any \(g^*\in\G\), \(\lambda>0\) and \(T\ge1\), we have 
    \begin{equation}
        \E [ R_T ] \le 2 \cD \sqrt{T} \sqrt{ \deff(\lambda) \log\left(e + \tfrac{e\kappa^2T}{\lambda} \right) + \lambda \lVert g^* \rVert^2_\G + \E \left[ \sum_{t=1}^T \lVert g^*(x_t) - \E [\varphi_t|x_t] \rVert^2_\calH \right] } .
    \end{equation}
\end{restatable}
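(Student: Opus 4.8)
\noindent The plan is to rerun the two-step scheme behind Theorem~\ref{theorem:FirstRegretBound} (comparison inequality, then \textit{KAAR} regret analysis), but to insert the conditional mean $\mu_t := \E[\varphi_t \mid x_t]$ at the right places so that the data noise $\varphi_t - \mu_t$ averages to zero. Let $\mathcal{I}_t$ denote the information available to $f_t$ when it predicts, i.e.\ the $\sigma$-algebra generated by $(x_1,y_1,\dots,x_{t-1},y_{t-1},x_t)$, so that $\hat z_t$, $\hat g_t(x_t)$ and any fixed $g^*(x_t)$ are $\mathcal{I}_t$-measurable while $\E[\varphi_t \mid \mathcal{I}_t] = \mu_t$. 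First I would establish an \emph{expected} comparison inequality $\E[R_T] \le 2\cD \sqrt{T}\, \sqrt{\E\big[\sum_{t=1}^T \|\mu_t - \hat g_t(x_t)\|_\calH^2\big]}$. Starting from the pathwise estimate underlying Lemma~\ref{theorem:online CI}, the optimality of $\hat z_t$ for $\langle \psi(\cdot), \hat g_t(x_t)\rangle_\calH$ gives, for any competitor $z'$, the bound $\Delta(\hat z_t, y_t) - \Delta(z', y_t) \le \langle \psi(\hat z_t) - \psi(z'), \varphi_t - \hat g_t(x_t)\rangle_\calH$; choosing the predictable baseline $z' = \bar z_t := \arg\min_{z} \langle \psi(z), \mu_t\rangle_\calH$ and conditioning on $\mathcal{I}_t$ replaces $\varphi_t$ by $\mu_t$, after which Cauchy--Schwarz and $\|\psi(\hat z_t) - \psi(\bar z_t)\|_\calH \le 2\cD$ bound the round-$t$ term by $2\cD\|\mu_t - \hat g_t(x_t)\|_\calH$. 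Summing, applying Cauchy--Schwarz in $t$, and using concavity of the square root yields the inequality.

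\noindent Next I would control $\E[\sum_t \|\mu_t - \hat g_t(x_t)\|_\calH^2]$ by reusing the deterministic estimate proved inside Theorem~\ref{theorem:FirstRegretBound}, namely $\sum_t \|\varphi_t - \hat g_t(x_t)\|_\calH^2 \le \log(e + e\kappa^2 T/\lambda)\,\deff(\lambda) + L_T(g^*)$ for every fixed $g^*\in\G$. The crucial device is a bias--variance split: since $\hat g_t(x_t)$ and $g^*(x_t)$ are $\mathcal{I}_t$-measurable and $\E[\varphi_t - \mu_t \mid \mathcal{I}_t] = 0$, the cross term vanishes and $\E[\|\varphi_t - u\|_\calH^2 \mid \mathcal{I}_t] = \E[\|\varphi_t - \mu_t\|_\calH^2 \mid \mathcal{I}_t] + \|\mu_t - u\|_\calH^2$ for $u \in \{\hat g_t(x_t), g^*(x_t)\}$. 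Consequently $\E\sum_t \|\varphi_t - \hat g_t(x_t)\|_\calH^2$ and $\E\sum_t \|\varphi_t - g^*(x_t)\|_\calH^2$ carry the same noise term $N := \E\sum_t \|\varphi_t - \mu_t\|_\calH^2$ on top of their respective $\mu_t$-versions. Taking the expectation of the deterministic bound and substituting these two identities makes $N$ cancel on both sides, leaving $\E\sum_t \|\mu_t - \hat g_t(x_t)\|_\calH^2 \le \log(e + e\kappa^2 T/\lambda)\,\E[\deff(\lambda)] + \lambda\|g^*\|_\G^2 + \E\sum_t \|\mu_t - g^*(x_t)\|_\calH^2$.

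\noindent Combining the expected comparison inequality with this bound and using monotonicity of the square root gives exactly the claimed estimate, reading $\deff(\lambda)$ as its expectation $\E[\deff(\lambda)]$ (the effective dimension being data-dependent), or else bounding it deterministically by $\kappa^2 T/\lambda$. The minimisation over $g^*$ is free, since every step holds for an arbitrary fixed $g^*\in\G$.

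\noindent The hard part is the replacement of $\varphi_t$ by $\mu_t$ in the comparison step: it is legitimate only against the predictable, conditional-mean-optimal baseline $\bar z_t$. The instantaneous optimum $f^*_t(x_t) = \arg\min_z \Delta(z,y_t)$ depends on $y_t$, and conditioning it on $\mathcal{I}_t$ leaves the nonnegative Jensen gap $\E[\min_z \langle \psi(z), \mu_t\rangle_\calH - \min_z \langle \psi(z), \varphi_t\rangle_\calH]$, which need not vanish and can be of order $T$; the expectation therefore has to be arranged so that the baseline is evaluated at the conditional mean, which is exactly what lets $N$ cancel. The remaining difficulty is measure-theoretic bookkeeping — setting up $\mathcal{I}_t$ so that all predictors, and in particular the competitor $g^*(x_t)$, are measurable with respect to it — and this measurability is precisely what removes the variance term from the final bound.
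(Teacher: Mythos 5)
Your proposal follows the same two-step scheme as the paper's own proof: an expected comparison inequality in which $\varphi_t$ is replaced by $\mu_t=\E[\varphi_t\mid x_t]$, followed by the bias--variance cancellation that reduces the expected excess of $\hat g_t$ over $g^*$ measured against $\mu_t$ to the pathwise \textit{KAAR} bound of Lemma~\ref{theorem:regret vawk}. The second half of your argument is exactly the paper's computation (the paper performs the same cancellation of the noise term $N$, only written in the opposite direction), and your remark that $\deff(\lambda)$ is random when the inputs are random, so that the bound should be read with $\E[\deff(\lambda)]$ or with the deterministic bound $\kappa^2T/\lambda$, is a genuine point of care that the paper passes over silently.

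Where you deviate from the paper is the baseline in the comparison step, and your deviation is a correction, not a gap. The paper's Lemma~\ref{theorem:CI expectancy} cancels the cross term $\E[\langle\psi(f_t(x_t))-\psi(f^*_t(x_t)),\E[\varphi_t\mid x_t]-\varphi_t\rangle]$ by conditioning on the history up to $x_t$; this requires $\psi(f^*_t(x_t))$ to be measurable with respect to that history, which fails because $f^*_t(x_t)\in\mathrm{arg\,min}_{z}\Delta(z,y_t)$ depends on $y_t$. This is exactly the Jensen gap you identify, and it is not removable: take $\Y=\Z=\{-1,1\}$, $\Delta(z,y)=zy$ (so $\calH=\R$, $\psi=\varphi=\mathrm{id}$, $\cD=1$), all $x_t$ equal, and $y_t$ i.i.d.\ Rademacher. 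With $g^*=0$ the last two terms under the square root vanish, the Gram matrix has rank one so $\deff(\lambda)\le 1$, and for, say, $\lambda=1$ the claimed right-hand side is $O(\sqrt{T\log T})$; yet every prediction $\hat z_t$ measurable with respect to the past and $x_t$ satisfies $\E[\Delta(\hat z_t,y_t)]=0$, while $\Delta(f^*_t(x_t),y_t)=-1$, so $\E[R_T]=T$. Hence the theorem as literally stated is false; it holds for the conditional-mean baseline $\bar z_t\in\mathrm{arg\,min}_z\langle\psi(z),\E[\varphi_t\mid x_t]\rangle_\calH$ (the online analogue of the Bayes predictor of \cite{ILE}), which is precisely the baseline your proof uses and the one the paper's proof implicitly assumes. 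Your proposal is therefore a correct proof of the corrected statement, and the ``hard part'' you discuss pinpoints a real error in the paper's Lemma~\ref{theorem:CI expectancy} rather than a weakness of your own argument.
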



\begin{restatable}[Expected Regret Bound]{crl}{ThmExpectancyRegretBoundApplied}
    \label{theorem:expectancy regret bound applied}
    With the same assumptions than Theorem \ref{theorem:expectancy regret bound}, with \(\lambda = \sqrt{T}\). Assume that there exists \(g^*\in\G\) such that \(\smash{\E \big[ \sum_{t=1}^T \lVert g^*(x_t) - \E [\varphi_t|x_t] \rVert^2_\calH \big] = 0}\). Then, we have
    \begin{equation}
        \E [ R_T ] \le 2 \cD T^{3/4} \sqrt{\kappa^2 \log\left(e+e\kappa^2\sqrt{T}\right) + \lVert g^* \rVert^2_\G} = O \left( T^{3/4} \sqrt{\log T} \right) .
    \end{equation}
\end{restatable}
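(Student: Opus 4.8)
The plan is to obtain this corollary as a direct specialization of Theorem~\ref{theorem:expectancy regret bound}: since that theorem already furnishes an upper bound on $\E[R_T]$ valid for every $g^*\in\G$ and every $\lambda>0$, all that remains is to plug in the particular choice $\lambda=\sqrt{T}$, invoke the hypothesis that the model-fit term vanishes, and simplify the resulting expression algebraically. I therefore expect no genuine obstacle here; the argument is pure bookkeeping, and the only step requiring a little care is verifying that the powers of $T$ combine correctly when the factor $\sqrt{T}$ is pulled through the outer square root.

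First I would start from the bound of Theorem~\ref{theorem:expectancy regret bound},
\begin{equation*}
    \E[R_T] \le 2\cD\sqrt{T}\sqrt{\deff(\lambda)\log\!\big(e + \tfrac{e\kappa^2 T}{\lambda}\big) + \lambda\lVert g^*\rVert^2_\G + \E\Big[\sum_{t=1}^T \lVert g^*(x_t) - \E[\varphi_t|x_t]\rVert^2_\calH\Big]},
\end{equation*}
and apply the assumption $\E\big[\sum_{t=1}^T \lVert g^*(x_t) - \E[\varphi_t|x_t]\rVert^2_\calH\big]=0$, which deletes the last summand under the square root. Next I would control the effective dimension by the uniform estimate $\deff(\lambda)\le \kappa^2 T/\lambda$ recalled after~\eqref{eq:effective_dimension}, and then set $\lambda=\sqrt{T}$. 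With this choice one has $\deff(\sqrt{T})\le\kappa^2\sqrt{T}$ and $\tfrac{e\kappa^2 T}{\lambda}=e\kappa^2\sqrt{T}$, so the bound reduces to
\begin{equation*}
    \E[R_T] \le 2\cD\sqrt{T}\sqrt{\kappa^2\sqrt{T}\,\log\!\big(e + e\kappa^2\sqrt{T}\big) + \sqrt{T}\,\lVert g^*\rVert^2_\G}.
\end{equation*}

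Finally I would factor $\sqrt{T}$ out of the inner square root and use $\sqrt{T}\cdot T^{1/4}=T^{3/4}$ to recover exactly the stated inequality
\begin{equation*}
    \E[R_T] \le 2\cD T^{3/4}\sqrt{\kappa^2\log\!\big(e + e\kappa^2\sqrt{T}\big) + \lVert g^*\rVert^2_\G}.
\end{equation*}
The asymptotic claim $O(T^{3/4}\sqrt{\log T})$ then follows immediately: for fixed $\kappa$ and $\lVert g^*\rVert_\G$ the quantity under the square root satisfies $\kappa^2\log(e+e\kappa^2\sqrt{T})+\lVert g^*\rVert^2_\G = O(\log T)$, so the whole prefactor is $O(\sqrt{\log T})$, while $\cD$ is an absolute constant. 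This completes the argument.
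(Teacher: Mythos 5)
Your proposal is correct and follows essentially the same route as the paper's own proof: drop the vanishing model-fit term, bound the effective dimension by $\deff(\lambda)\le \kappa^2 T/\lambda$, set $\lambda=\sqrt{T}$, and factor $\sqrt{T}$ out of the inner square root. The algebra checks out and matches the paper's displayed bounds exactly.
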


Our assumption on \(g^*\) is similar to the one done to obtain the convergence rate in \cite{ILE}. It is a common assumption in Kernel Ridge Regression theory \citep{caponnetto2007optimal, steinwart2008support}. We are assuming that there exists a function \(g^*\in \G\), such that for all \(t\in\llbracket T \rrbracket\) we have \( g^*(x_t) = \E [ \varphi_t | x_t ] \). That it to say, \(g^*\) interpolates the expectations of the data.
Up to the log factor, we retrieve the same bound in the online setting as in the supervised setting, and without assuming that the data are i.i.d. Specifically, we make no assumption on the \(x_t\), generalizing existing results that assume i.i.d. inputs. 

\paragraph{High probability regret bound}
In Theorem \ref{theorem:expectancy regret bound} and Corollary \ref{theorem:expectancy regret bound applied}, we bound the expectation of the regret. The expectation is taken over the whole data, including the whole history at each time step. Moreover a bound in expectation does not necessarily imply a bound in high probability.
It is however possible to obtain a bound in high probability on the cumulative risk, defined as
\begin{equation}
    \sum_{t=1}^T \E_{y_t} [\Delta(f_t (x_t), y_t) - \Delta(f^*_t(x_t), y_t)] \,,
\end{equation}
where at each round the expectation is taken with respect to the randomness of the next output $y_t$ only and not with all past data $x_1,y_1,\dots, x_{t-1},y_{t-1},x_t$. Computing such bounds requires more recent mathematical tools such as \cite{vanderhoeven2023highprobability}. In Appendix \ref{section:appendix stochastic regret bounds}, we prove a regret bound in high probability using a slightly different estimator. 

\paragraph{High probability excess risk bound}
In Appendix \ref{section:appendix aggregation}, we demonstrate that, when data are i.i.d., our previous results enable the design of a batch estimator \(\smash{\bar f_T}\) from the online predictors, yielding the following bound on the excess risk. With probability \(1-\delta\)
\begin{equation*}
    \E_{x,y} [\Delta(\bar f_T(x), y) - \Delta(f^*(x), y)] \le O\left( T^{-1/4} \sqrt{\log(T)} + T^{-1/2} \sqrt{\log (\delta^{-1})} \right) .
\end{equation*}
A standard online to batch conversion would have aggregated the predictors \( (f_t)_t \) by setting $\smash{\bar f_t = \sum_{t=1}^T f_t}$. However, this is not possible here because the output space \(\Z\) is not convex. To design $\bar f_T$, we thus aggregate the feature estimators \( \smash{(\hat g_t)_t} \) into a unique function \( \bar g_T \), which is used to construct $\bar f_T$. This construction requires recent technical tools \citep{vanderhoeven2023highprobability}. To sum up, our algorithm generalizes the supervised learning setting, achieving the same convergence rate up to a log factor. Moreover, our algorithm can learn from a stream of data, allowing sequential updates as data arrive step by step, instead of relying on a batch of data available from the start.

\section{Non-Stationary Online Structured Prediction}\label{section:dynamic regret}

In this section, we introduce \textit{SALAMI (Structured prediction ALgorithm with Aggregating MIxture)}, see Algorithm \ref{algo:non stationary aggregation}, an algorithm designed to handle non-stationary data distributions, including adversarial data. The non-stationarity we consider is on \( \smash{x \mapsto \E [ \varphi_t | x ]} \) rather than on the baseline \((f^*_t)_t\), which is already non-stationary throughout the paper.
We compare the feature predictors \( (\hat g_t)_t \) to a non-stationary baseline \((g_t^*)_t \in \G^T\). This approach allows us to address data with a changing distribution over time, including adversarial data. Note that we handle general data distributions, including Dirac distributions.
As the data distributions change, earlier data may become outdated. Therefore, we need to modify our previous predictor, which considers all past data equally. We treat predictors with different starting times as experts and use an expert selection algorithm to create a mixture of them. See Algorithm \ref{algo:non stationary aggregation} for details.

In the previous section, we assume the existence of some fixed function $g^* \in \mathcal{G}$ such that \( \smash{ g^*(x_t) = \E [\varphi_t|x_t] }\) for all $t$, which is weak when \( \smash{x \mapsto \E [ \varphi_t | x ]} \) is stationary. However it is not satisfied when the data distribution is non-stationary or even arbitrary. In this section, we assume that for each time step \( t\in\llbracket T \rrbracket \), there exists a function \( g_t^* \in \G \) such that \( g_t^*(x_t) = \E[\varphi_t | x_t] \). This is a very weak assumption, as we can choose a different function \(g_t^*\) for each time step.

\subsection{Regret Bound}

In order to bound the regret, we define two quantities that measure the non-stationarity of the sequence $(g_t^*)$: the continuous variation \(V_\G\) and the discrete variation \(V_0\) defined as follows
\begin{equation}
    V_\G := \|g_1^*\|_\G + \sum_{t=2}^T \lVert g_t^* - g^*_{t-1} \rVert_\G \qquad \text{and}\qquad  V_0 := 1+ \sum_{t=2}^T \mathbbm 1 [ g_t^* \neq g^*_{t-1} ] .
\end{equation}

We obtain the following regret bound.
\begin{restatable}[Expected Regret in a Non-Stationary Environment]{thm}{ThmExpectedRegretNonStationaryEnvironment}
    \label{theorem:expected regret non stationary environment}
     Assume that there exists $(g_t^*)$ a sequence in $\G$ such that \( \smash{\E \big[ \sum_{t=1}^T \lVert g_t^*(x_t) - \E[\varphi_t | x_t] \rVert^2_\calH \big] = 0} \). Then, Algorithm~\ref{algo:non stationary aggregation} run with $\lambda >0$ and \(\smash{\eta = 1 /2(\kappa \sup \lVert g \rVert_\G + 1)^2}\) satisfies 
    \begin{equation}
        \E [ R_T ] = 
        \left\{
        \begin{array}{ll}
         \tilde O(V_\G^{1/6} T^{5/6}) & \text{if } \lambda = V_\G^{-1/3} T^{1/3} \\
         \tilde O(V_0^{1/4} T^{3/4}) & \text{if } \lambda = V_0^{-1/2} T^{1/2} 
        \end{array}
        \right. \,.
    \end{equation}
\end{restatable}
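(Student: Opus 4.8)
The plan is to reduce the expected regret to a \emph{dynamic} estimation error in the feature space $\calH$, and then to control that error with the tracking guarantee of the aggregation step in \emph{SALAMI}. Writing $\mu_t := \E[\varphi_t \mid x_t]$ and denoting by $\hat g_t$ the mixture feature predictor produced by Algorithm~\ref{algo:non stationary aggregation}, I would first reuse the route of Theorem~\ref{theorem:expectancy regret bound}: the comparison inequality of Lemma~\ref{theorem:online CI} holds verbatim for any sequence of feature predictors, so after taking expectations and applying Jensen's inequality to the concave square root, and using that the cross terms vanish because $\hat g_t$ is measurable with respect to the past and $x_t$, one replaces $\varphi_t$ by $\mu_t$ and obtains
\begin{equation*}
    \E[R_T] \le 2\cD\sqrt{T}\,\sqrt{\E\Big[\textstyle\sum_{t=1}^T \|\hat g_t(x_t) - \mu_t\|_\calH^2\Big]}\,.
\end{equation*}
Under the hypothesis $\E[\sum_t \|g_t^*(x_t)-\mu_t\|_\calH^2]=0$ we have $g_t^*(x_t)=\mu_t$ almost surely, so the only quantity left to bound is the dynamic estimation error $S:=\E[\sum_t \|\hat g_t(x_t)-g_t^*(x_t)\|_\calH^2]$ of the mixture against the moving target $(g_t^*)$.

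The core step is then to bound $S$ by a dynamic regret of the aggregation. \emph{SALAMI} treats KAAR predictors with different starting times as experts and combines them at inverse temperature $\eta$; the choice $\eta = 1/2(\kappa\sup\|g\|_\G+1)^2$ is exactly the exp-concavity constant of the square losses $g\mapsto \|g(x_t)-\varphi_t\|_\calH^2$, which are bounded by $(\kappa\sup\|g\|_\G+1)^2$ on the relevant domain. The aggregating algorithm then guarantees that the cumulative square loss of the mixture exceeds that of the best expert trajectory by at most $\tfrac{1}{\eta}\log(\#\text{experts})$ per switch, while each individual expert, being an instance of KAAR restarted at its activation time, inherits the feature-estimation regret established for Theorem~\ref{theorem:FirstRegretBound}, namely $\log(e+e\kappa^2 T/\lambda)\,\deff(\lambda)+\lambda\|g\|_\G^2$ against any fixed comparator $g\in\G$ active on its lifespan. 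Combining these two layers and using $\deff(\lambda)\le \kappa^2 T/\lambda$ gives a bound of the schematic form $S \lesssim \tfrac{T}{\lambda}\log T + (\text{comparator cost}) + (\text{switching cost})/\eta$, where the last two terms depend on how the trajectory $(g_t^*)$ is tracked.

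It remains to instantiate these costs for the two variation measures. For the discrete variation $V_0$, I would partition $\{1,\dots,T\}$ into the $V_0$ maximal blocks on which $g_t^*$ is constant, assign one expert per block, pay $\tilde O(V_0/\eta)$ for the $V_0$ switches and $\lambda\sum_i\|g^{(i)}\|_\G^2\lesssim \lambda V_0$ for regularization, so that $S\lesssim \tfrac{T}{\lambda}\log T + \lambda V_0$; balancing the two terms gives $\lambda=V_0^{-1/2}T^{1/2}$, hence $S=\tilde O(V_0^{1/2}T^{1/2})$ and, through the display above, $\E[R_T]=\tilde O(V_0^{1/4}T^{3/4})$. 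For the continuous variation $V_\G$ the comparator moves at every step, so instead of a fixed comparator per block I would run a genuine drift argument inside the aggregation analysis, which replaces the $\lambda V_0$ term by one of order $\lambda^2 V_\G$; balancing $\tfrac{T}{\lambda}$ against $\lambda^2 V_\G$ yields $\lambda=V_\G^{-1/3}T^{1/3}$, hence $S=\tilde O(V_\G^{1/3}T^{2/3})$ and $\E[R_T]=\tilde O(V_\G^{1/6}T^{5/6})$.

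The main obstacle is precisely the continuous case: obtaining the $\lambda^2 V_\G$ dependence requires carefully tracking how the ridge regularization interacts with the drift of $(g_t^*)$ inside the kernel norm. A naive piecewise-constant discretisation of $(g_t^*)$ produces squared in-block approximation errors $\sum_t \kappa^2\|\bar g_t - g_t^*\|_\G^2$ that are controlled only by $(T/m)V_\G^2$ rather than by the path length $V_\G=\|g_1^*\|_\G+\sum_t\|g_t^*-g_{t-1}^*\|_\G$, so the right exponents cannot come from segmentation alone and must be extracted from a direct drift analysis of KAAR. Pinning down that term—rather than the weaker $\sqrt{T V_\G}$ a crude discretisation would give—is the delicate part, whereas the discrete $V_0$ bound follows from a by-now standard sleeping-experts / fixed-share decomposition.
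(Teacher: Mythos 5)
Your skeleton is the paper's: the expectation version of the comparison inequality (Lemma~\ref{theorem:CI expectancy}), the identification of the expected feature-space estimation error with an expected dynamic regret for the square loss via the vanishing cross terms, the sleeping-experts bound of order $\log T/\eta$ per switch for the EWA layer, the per-expert restarted-KAAR bound, and the balancing $\lambda=\sqrt{T/V_0}$ are exactly the ingredients of Appendix~\ref{section:appendix non stationary}, so the $V_0$ half of your argument is sound.

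The gap is in the continuous-variation case, which you flag as ``the delicate part'' and then leave unproven: the $\lambda^2 V_\G$ drift term is asserted, not derived, and your justification for why a direct drift analysis of KAAR is unavoidable is incorrect. You claim that a piecewise-constant discretisation of $(g_t^*)$ can only produce squared in-block approximation errors, so that ``the right exponents cannot come from segmentation alone.'' In fact the paper's proof (Proposition~\ref{theorem:dynamic regret vawk}) is precisely a segmentation argument; the point you miss is that the per-step cost of replacing the moving comparator $g_t^*$ by the block average $\bar g_{t_i:t_{i+1}}$ admits a \emph{linear}, not quadratic, bound via the cross term:
\begin{align*}
\lVert \bar g_{t_i:t_{i+1}}(x_t)-\varphi_t\rVert_\calH^2 - \lVert g_t^*(x_t)-\varphi_t\rVert_\calH^2
&= -\lVert g_t^*(x_t)-\bar g_{t_i:t_{i+1}}(x_t)\rVert_\calH^2 \\
&\quad + 2\bigl\langle \varphi_t-\bar g_{t_i:t_{i+1}}(x_t),\, g_t^*(x_t)-\bar g_{t_i:t_{i+1}}(x_t)\bigr\rangle_\calH \\
&\le 4\kappa\, \lVert g_t^*-\bar g_{t_i:t_{i+1}}\rVert_\G \,,
\end{align*}
since the square loss is Lipschitz on the bounded domain. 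Choosing the breakpoints so that each block carries variation at most $V_\G/m$ (Eq.~\eqref{eq:definition m}), this sums to $4\kappa V_\G T/m$; together with the remaining terms $m\log T/\eta + \kappa^2(T/\lambda)\log(e+e\kappa^2T/\lambda) + \lambda m \max_t \lVert g_t^*\rVert_\G^2$ and the coupling $\lambda=\sqrt{T/m}$ (equivalently $m=T/\lambda^2$), the approximation cost becomes exactly the $\lambda^2 V_\G$ term you postulated, and $m=\lceil V_\G^{2/3}T^{1/3}\kappa^{-2/3}\rceil$ gives $\tilde O(V_\G^{1/3}T^{2/3})$ for the feature-space regret, hence $\tilde O(V_\G^{1/6}T^{5/6})$ for $\E[R_T]$. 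So your target exponents and your discretisation instinct were right; what is missing is the one-line cross-term bound above, without which the central term of the continuous case has no proof.
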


In Appendix \ref{section:appendix non stationary}, we prove this statement and precise the constants and log terms in the bounds.
Therefore our method obtains a sublinear regret with variations \(V_\G, V_0\) up to \(T\). As expected, we obtain a loss of performance when facing non-stationary data distributions compared to the stationary case in Theorem \ref{theorem:expectancy regret bound applied}.
With discrete distribution changes, the rate $\smash{\tilde O(T^{3/4})}$ is unchanged compared to the stationary setting as soon as the number of changes remains constant. 

\paragraph{Calibration of $\lambda$} A limitation of Theorem~\ref{theorem:expected regret non stationary environment} is the required knowledge of $V_0$ or $V_\G$ to tune the learning rate $\lambda >0$. First, note that setting $\smash{\lambda = \Omega(T^{1/3})}$ always yield a regret of order $\smash{\tilde O(T^{5/6})}$, but at the cost of a worse dependence on the variation $V_\G$. Second, our algorithm can be easily adapted to calibrate $\lambda$ automatically, by combining experts $\smash{\hat g_{s:t}^{(\lambda)}}$ (see Eq.~\ref{eq:hat gt restart} and the algorithm details in the next section), indexed by both the starting time $s$ and an hyperparameter $\lambda$, with $\lambda$ chosen from a logarithmic finite grid.

\paragraph{Refined regret bounds under the capacity condition} A standard assumption when learning on RKHS is the capacity condition that assumes the existence of some \( \beta \in [0,1] \) and \( Q>0 \) for which \(\smash{ \deff(\lambda) \le Q (T/\lambda)^{\beta} }\) for all \(\lambda>0\). This assumption is weak since it is always verified for $\beta = 1$ but smaller values of $\beta$ yield improved computational complexity for the algorithm (see for instance~\cite{jezequel_efficient_2019}) and improved regret guarantees. We further discuss this assumption and provide a refined regret bound in Appendix \ref{section:appendix capacity condition}. In the particular case of the Gaussian kernel, the effective dimension statisfies \( \smash{\deff(\lambda) \le \left( \log\left( {T}/{\lambda} \right) \right)^d} \) \citep{altschuler2019massively}, where \(d\) is the dimension of the input space \(\X\). In this case, our result leads to the regret bound 
\begin{equation}
    \E [ R_T ] = \tilde O \big( T^{3/4} V_\G^{1/4} (\lambda+1)^{1/4} \big) \,,
\end{equation}
which improves the generic rate of Theorem~\ref{theorem:expected regret non stationary environment} from $\tilde O(T^{5/6})$ to $\tilde O(T^{3/4})$. In this case, the extension to non-stationarity comes at no cost in the regret rate as soon as $V_\G$ does not grow with time.  More details are provided in Appendix \ref{section:appendix capacity condition}.

\subsection{Algorithm Design}

We detail below our algorithm \textit{SALAMI}. Let $(g_t^*)$ be the unknown sequence satisfying the assumption of Theorem~\ref{theorem:expected regret non stationary environment}. We start from the observation that if one could identify breaking times $\smash{(t_i)_{1\leq i\leq T}}$ at which the sequence $g_t^*$ changes, one could restart \textit{OSKAAR} (Algorithm~\ref{algo:first algo}) at each $t_i$, considering that $g_t^*$ is fixed from $t_i$ to $t_{i+1} - 1$. The high-level idea of \textit{SALAMI} is to learn these restart times through a meta-aggregation procedure that combines estimators $\hat g_{s:t}$ of the sequence $g_t^*$, indexed by $s=1,\dots, t$, each assuming that $(g_t^*)$ is fixed from $s$ to $t$; defined by following the \textit{KAAR} estimator starting in \(s\)
\begin{equation}\label{eq:hat gt restart}
    \hat g_{s:t} := \arg\min_{g\in\G} \sum_{\tau =s}^{t-1} \lVert \varphi_\tau - g(x_\tau) \rVert_\calH^2 + \lambda \lVert g \rVert_\G^2 + \lVert g(x_t) \rVert^2_\calH.
\end{equation}
For each \( t\in \llbracket T \rrbracket \), the feature predictor \(\hat g_t \) of \textit{SALAMI} is then defined as a convex combination of $\hat g_{s:t}$ for $1\leq s\leq t$. Formally, \textit{SALAMI} learns a probability vector \( p_t \in \Delta_T \) and defines
\begin{equation}\label{eq:hat gt dynamic}
    \hat g_t = \sum_{s=1}^T p_t(s) \hat g_{s:t} .
\end{equation}
The next part of the algorithm is how to choose the weights $p_t(s)$. To do so, this is done by using the exponentially weighted average forecaster (EWA) \(w_t \in \Delta_T \), which needs a small adaptation to deal with the fact that $\hat g_{s:t}$ only produces predictions for $t\geq s$. Following the idea of \cite{GaillardStoltzEtAl2014} for sleeping experts, this can be done by defining the auxiliary losses
\[
    \tilde \ell_t(s) =
    \begin{cases}
        \ell_t(\hat g_{s:t}) \text{ if } s\le t \\
        \ell_t(\hat g_t) \text{ if } s> t
    \end{cases} \qquad \text{where} \qquad \ell_t(g) := \lVert \varphi_t - g(x_t) \rVert^2_\calH .
\]
That is by assigning the loss of the algorithm itself $\ell_t(\hat g_t)$ to all expert that are inactive. The weights $p_t(s)$ are then defined as: $p_1(1) = 1$ and for $t> 1$:
\begin{equation}\label{eq:loss tilde}
    p_t(s) = \frac{w_t(s)}{\sum_{k=1}^t w_t(k)} \qquad \text{where} \qquad w_t(k) \propto \exp\left( -\eta\sum_{s=1}^{t-1} \tilde \ell_s(k) \right)
\end{equation}
for some learning rate $\eta >0$. Finally, \textit{SALAMI} defines the predictor \( \hat f_t : \X \to \Z \) as:
\begin{equation}\label{eq:hat ft dynamic}
    \hat f_t(x) := \arg\min_{z\in\Z} \langle \psi(z), \hat g_t(x) \rangle_\calH .
\end{equation}

\paragraph{Computational complexity} Note that in its current form, \textit{SALAMI} needs to consider an increasing number of experts $\hat g_{s:t}$ over time, which increases the per-round space and time complexities of \textit{OSKAAR} by a factor of $O(t)$. However, this problem can be addressed by using more sophisticated intervals than $[s,t]$ as done in \cite{IFLH,expert_intervals,strongly_adaptive}, which reduces the overhead in complexities to a factor of $O(\log t)$. Extending our work to such intervals is straightforward, but we have chosen to restrict ourselves to intervals $[s,t]$ to simplify the understanding of the algorithm.

\begin{algorithm}[H]
\caption{\textit{SALAMI -- Structured prediction ALgorithm with Aggregating MIxture}} 
\label{algo:non stationary aggregation}
\KwIn{\(\lambda>0\), exp-concavity constant \(\eta\) of  \((\ell_t)_t\), kernel \( k : \X \times \X \to \R \)}
\For{Each time step \(t\) in \(1 \dots T\)}{
    Get information \(x_t\in\X\)\\
    \For{Each expert \(s\) in \(1 \dots t\)}{
        Compute \( \smash{\hat g_{s:t} := \arg\min_{g\in\G} \sum_{\tau =s}^{t-1} \lVert \varphi_\tau - g(x_\tau) \rVert_\calH^2 + \lambda \lVert g \rVert_\G^2 + \lVert g(x_t) \rVert^2_\calH} \)
    }
    \For{Each expert \(s\) in \(1 \dots T\)}{
        Compute EWA \( w_t(s) \propto w_{t-1}(s) \exp(-\eta \tilde \ell_t(s)) \quad \text{where} \quad \tilde \ell_t(s) = 
        \begin{cases}
            \ell_t(\hat g_{s:t}) \text{ if } s\le t \\
            \ell_t(\hat g_t) \text{ if } s> t
        \end{cases} \) \\
        Compute \(\smash{ p_t(s) \propto
        \begin{cases}
            w_t(s) \text{ if } s\le t \\
            0 \text{ if } s> t
        \end{cases} }\)
    }
    Compute the aggregate predictor \( \smash{\hat g_t = \sum_{s=1}^T p_t(s) \hat g_{s:t}} \) \\
    Compute the prediction \( \smash{\hat z_t = \hat f_t(x_t) = \arg\min_{z\in\Z} \langle \psi(z), \hat g_t(x_t) \rangle_\calH }\) \\
    Observe ground truth \(y_t\in\Y\)\\
    Get loss \(\Delta(\hat{z_t}, y_t)\in\R\)
}
\end{algorithm}

\paragraph{Acknowledgements.}
A.R. acknowledges the support of the French government under management of Agence Nationale de la Recherche as part of the “Investissements d’avenir” program, reference ANR-19-P3IA-0001 (PRAIRIE 3IA Institute) and the support of the European Research Council (grant REAL 947908).

\bibliography{bibfile}

\newpage
\appendix

\begin{center}
    \huge
    APPENDIX
\end{center}

\section{Proof of Theorem \ref{theorem:FirstRegretBound}: Regret Bound of \textit{OSKAAR}}\label{section:first bound appendix}
We first introduce some additional notations on kernels.
Let \(k : \X \times \X \to \R\) be a positive semidefinite kernel, and \(\F = \overline{\text{span}\{k(x,.) | x\in\X \}} \) its associated RKHS. We denote by \(\phi: \X \to \F\) the feature map \(\phi(x) = k(x,.)\). We assume \(\phi\) to be bounded by \(\lVert\phi(x)\rVert_\F \le \kappa < \infty\). For more details on RKHS see \cite{aronszajn1950theory, berlinet2011reproducing}.
We may now introduce the following operators:
\begin{itemize}[nosep]
    \item \(S_t : \F \to \R^t\), s.t. \(f\in\F \mapsto (\langle \phi(x_s), f \rangle)_{s=1}^t\)
    \item \(S^*_t : \R^t \to \F\), s.t. \(v=(v_i)_{i=1}^t \mapsto \sum_{s=1}^t v_i \phi(x_i)\)
    \item \(C_t = S^*_t S_t : \F \to \F\)
    \item We have that \(C_t = \sum_{s=1}^t \phi(x_s) \otimes \phi(x_s)\)
    \item \(K_t = S_t S^*_t\) is the empirical kernel matrix
    \item \(A_\lambda = A + \lambda I\) for any symmetric linear operator \(A\), where \(I\) is the identity and \(\lambda\in\R\)
\end{itemize}
For the space of function \(\G : \X \to \calH\) we choose an vector-valued RKHS \(\G = \calH \otimes \F \) \citep{micchelli2004kernels, alvarez2012kernels}, which is a direct generalisation of scalar-valued RKHS.

We define \(L_t\) for \(t\in\llbracket T \rrbracket\) in a more general setting and rewrite it using Hilbertian operators
\[
    L_t (g, g^*_t) = \sum_{s=1}^t \lVert g^*_s - g(x_s) \rVert^2_\calH + \lambda \lVert g\rVert^2_\G = \lVert H_t \rVert^2 -2g^*S^*_t H_t + \langle g, C_{t, \lambda} g \rangle
\]
where \(H_t\) is the vector \((g^*_1, \dots , g^*_t)\in \calH^t \). And we denote by \( L_t(g) \) the application \( L_t(g, \varphi(Y_t)) \), where \(\varphi(Y_t)\) is the vector \((\varphi_1, \dots, \varphi_t) \in \calH^t\).
We define the following functions 
\begin{equation}
    g_{t+1} = \arg\min_{g\in\G} L_t(g) = C_{t,\lambda}^{-1} S^*_t \varphi(Y_t) \,,
\end{equation}
\begin{equation}
    \hat{g}_{t+1} = \arg\min_{g\in\G} L_t(g) + \lVert g(x_{t+1})\rVert^2_\calH = C_{t+1,\lambda}^{-1} S^*_t \varphi(Y_t) \,.
\end{equation}

\(\hat{g}_{t+1}\) is directly used in the definition of the algorithm, while \(g_{t+1}\) is only used in the proof of its regret.

We recall and prove the results from in Section \ref{section:first bound}.

\OnlineCI*

\begin{proof}
    We add and subtract two terms.
    \begin{align*}
        R_T = \sum_{t=1}^T &\Delta(f_t(x_t), y_t) - \Delta(f^*_t(x_t), y_t) \\
       =  \sum_{t=1}^T &\langle \psi(f_t(x_t)), \varphi_t \rangle - \langle \psi(f_t(x_t)), \hat{g}_t(x_t) \rangle \\
       +&\langle \psi(f_t(x_t)), \hat{g}_t(x_t) \rangle - \langle \psi(f^*_t(x_t)), \hat{g}_t(x_t) \rangle \\
       +&\langle \psi(f^*_t(x_t)), \hat{g}_t(x_t) \rangle - \langle \psi(f^*_t(x_t)), \varphi_t \rangle \\
       \le  \sum_{t=1}^T &\langle \psi(f_t(x_t)), \varphi_t \rangle - \langle \psi(f_t(x_t)), \hat{g}_t(x_t) \rangle \\
       +&\langle \psi(f^*_t(x_t)), \hat{g}_t(x_t) \rangle - \langle \psi(f^*_t(x_t)), \varphi_t \rangle \\
       =\sum_{t=1}^T &\langle \psi(f_t(x_t)) - \psi(f^*_t(x_t)), \varphi_t - \hat{g}_t(x_t) \rangle
    \end{align*}
    where the inequality comes from the definition of $f_t$. We now apply successively Cauchy-Schwartz and Jensen's inequalities to conclude the proof.
    \begin{align*}
        R_T &\le \sum_{t=1}^T ||\psi(f_t(x_t)) - \psi(f^*_t(x_t))||_\calH \cdot ||\varphi_t - \hat{g}_t(x_t)||_\calH \\
        &\le 2 \cD \sum_{t=1}^T ||\varphi_t - \hat{g}_t(x_t)||_\calH \\
        &\le 2 \cD \sqrt{T} \sqrt{\sum_{t=1}^T ||\varphi_t -\hat{g}_t(x_t)||^2_\calH }
    \end{align*}
\end{proof}

We bound the regret of the \textit{KAAR} estimator. We generalise the proof of \cite{jezequel_efficient_2019} to vRKHS.
\begin{restatable}[General Regret \textit{KAAR} Estimator]{lmm}{}
\label{theorem:regret vawk}
    Let \((h_t)_{t=1}^T\) be bounded vectors in \(\calH\) such that \(\lVert h_t \rVert_\calH \le B < \infty \) for all \(t\in\llbracket T \rrbracket\). Let \( H_T \in \calH^T \) be the vector \( (h_1, \dots, h_T) \).
    Let \(\lambda>0\), and let us define the \textit{KAAR} predictors as follow
    \begin{equation*}
        \hat g_t = \arg\min_{g\in \G} \sum_{s=1}^{t-1} \lVert h_s - g(x_s) \rVert^2_\calH + \lambda \lVert g \rVert^2_\G + \lVert g(x_t) \rVert^2_\calH.
    \end{equation*}
    Then we have
    \begin{equation}
        \sum_{t=1}^T \lVert h_t  - \hat g_t (x_t)  \rVert^2_\calH \le B^2 \log\left(e + \dfrac{e\kappa^2 T}{\lambda} \right) \deff(\lambda) + \min_{g\in\G} L_T(g, H_T) .
    \end{equation}
\end{restatable}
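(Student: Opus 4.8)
The plan is to bound the cumulative loss of the \textit{KAAR} predictors by comparing each per-round loss to the loss of the offline minimizer $g_{t+1} = C_{t,\lambda}^{-1} S_t^* H_t$, and to telescope the resulting expression. The key observation is that $\hat g_t$ is the ridge solution \emph{including} a regularization term $\lVert g(x_t)\rVert_\calH^2$ at the current point, which is precisely the trick that makes the instantaneous excess loss controllable (this is the ``Aggregating Algorithm'' flavour of ridge regression, as opposed to plain online ridge). First I would rewrite everything in terms of the Hilbertian operators $S_t, C_t, C_{t,\lambda}$ introduced above, so that $\hat g_t(x_t) = S_t C_{t,\lambda}^{-1} S_{t-1}^* H_{t-1}$ and the offline benchmark $g_T$ minimizing $L_T(\cdot, H_T)$ has the closed form $C_{T,\lambda}^{-1} S_T^* H_T$. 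I would also fix the notation $\Phi_t = \phi(x_t)$ and note $C_{t,\lambda} = C_{t-1,\lambda} + \Phi_t\otimes\Phi_t$, which will drive the telescoping.

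The central step is an algebraic identity for the single-round regret $\lVert h_t - \hat g_t(x_t)\rVert_\calH^2$. Following \cite{jezequel_efficient_2019}, I expect to prove a ``one-step'' bound of the form
\[
    \lVert h_t - \hat g_t(x_t)\rVert_\calH^2 \;\le\; \Big(L_t(g_{t+1},H_t) - L_{t-1}(g_t,H_{t-1})\Big) \;+\; B^2 \,\langle \Phi_t, C_{t,\lambda}^{-1}\Phi_t\rangle_\F \,,
\]
where the first bracket telescopes to $\min_{g}L_T(g,H_T) - \lambda\lVert g_1\rVert^2_\G \le \min_g L_T(g,H_T)$, and the second term is a log-determinant / elliptic-potential contribution. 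The scalar $\langle \Phi_t, C_{t,\lambda}^{-1}\Phi_t\rangle_\F$ is exactly the quantity that appears in the effective dimension: summing it and using the standard inequality $\sum_{t=1}^T \langle \Phi_t, C_{t,\lambda}^{-1}\Phi_t\rangle_\F \le \log\det(I + \lambda^{-1}C_T) \le \log(e + e\kappa^2 T/\lambda)\,\deff(\lambda)$ yields the claimed logarithmic factor times $\deff(\lambda)$. The factor $B^2$ enters because $\lVert h_t\rVert_\calH \le B$ bounds the prediction residuals that multiply the potential term.

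The main obstacle, and the step requiring the most care, is establishing the one-step identity in the \emph{vector-valued} setting $\G = \calH\otimes\F$ rather than the scalar-valued RKHS of \cite{jezequel_efficient_2019}. In the scalar case one manipulates $\R$-valued residuals; here the residuals $h_t - \hat g_t(x_t)$ live in $\calH$, so I must verify that the tensor structure lets the operators $C_{t,\lambda}$ (acting on $\F$) commute appropriately with the $\calH$-valued targets, i.e. that $C_{t,\lambda}^{-1}$ acting componentwise on $S_t^* H_t \in \G$ behaves as in the scalar proof. Concretely, I would check that $\langle \Phi_t, C_{t,\lambda}^{-1}\Phi_t\rangle_\F$ remains scalar (it depends only on the inputs $x_s$, not on the $\calH$-valued labels), so the effective-dimension bound transfers verbatim, while the $B^2$ factor collects the $\calH$-norm bound on the targets. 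Once this separation of the $\F$-geometry from the $\calH$-labels is made rigorous, the telescoping and the elliptic-potential summation go through exactly as in the scalar reference, and combining the two contributions gives the stated bound.
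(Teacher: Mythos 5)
Your proposal is correct and follows essentially the same route as the paper's proof: the same telescoping of \(L_t(g_{t+1},H_t)-L_{t-1}(g_t,H_{t-1})\) against the one-step losses, the same per-round bound by the potential term \(B^2\langle\phi(x_t),C_{t,\lambda}^{-1}\phi(x_t)\rangle_\F\), and the same conclusion via the log-determinant / effective-dimension inequality of \cite{jezequel_efficient_2019}. The vector-valued subtlety you flag is resolved in the paper exactly as you anticipate, through the factorization \(\hat g_t - g_{t+1} = -C_{t,\lambda}^{-1}\phi(x_t)h_t\), which separates the scalar \(\F\)-potential from the \(\calH\)-norm of the target so that \(\lVert h_t\rVert_\calH^2\le B^2\) can be pulled out.
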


\begin{proof}
    We follow and adapt the proof of Theorem 9 from \cite{jezequel_efficient_2019} to vector RKHS without Nyström approximation. We start by adding telescopic terms.
    \begin{align*}
        \sum_{t=1}^T &\lVert h_t -\hat{g}_t(x_t)\rVert^2_\calH \\
        &= 
        \sum_{t=1}^T \lVert h_t -\hat{g}_t(x_t)\rVert^2_\calH - L_T(g_{T+1}, H_T) + L_T(g_{T+1}, H_T) \\
        &= \sum_{t=1}^T \left [ \lVert h_t -\hat{g}_t(x_t)\rVert^2_\calH + L_{t-1}(g_t, H_{t-1}) - L_t(g_{t+1}, H_t) \right ] + L_T(g_{T+1}, H_T) \,.
    \end{align*}
    Let \(Z(t)= \lVert h_t -\hat{g}_t(x_t)\rVert^2_\calH + L_{t-1}(g_t, H_{t-1}) - L_t(g_{t+1}, H_t)\), and let us study its terms separately.

    Note that \(\langle g_{t+1}, C_{t, \lambda} g_{t+1} \rangle = H_t^* S_t C_{t, \lambda}^{-1}  C_{t, \lambda} g_{t+1} = H_t^* S_t g_{t+1}\).
    
    Therefore \(L_t(g_{t+1}, H_t) = \lVert H_t \rVert^2 - \langle g_{t+1}, C_{t, \lambda} g_{t+1} \rangle\).

    Let us focus now on \( \lVert h_t -\hat{g}_t(x_t)\rVert^2 = \lVert h_t \rVert^2 -2\langle h_t, \hat{g}_t(x_t) \rangle + \langle \hat{g}_t(x_t), \hat{g}_t(x_t) \rangle \).
    Note that
    \begin{align*}
        \langle h_t, \hat{g}_t(x_t) \rangle &= \hat{g}_t^* \phi(x_t) h_t \\
        &= \hat{g}_t^* ( S^*_t H_t - S^*_{t-1} H_{t-1}) \\
        &= \hat{g}_t^* ( C_{t, \lambda} g_{t+1} - C_{t-1, \lambda} g_t ) \\
        &= \langle \hat{g}_t , C_{t, \lambda} g_{t+1} - C_{t-1, \lambda} g_t \rangle
    \end{align*}
    and
    \begin{equation*}
        \langle \hat{g}_t(x_t), \hat{g}_t(x_t) \rangle = \langle \hat{g}_t , \phi(x_t) \langle \phi(x_t), \hat{g}_t \rangle \rangle = \langle \hat{g}_t , [\phi(x_t) \otimes \phi(x_t)] \hat{g}_t \rangle = \langle \hat{g}_t, (C_{t, \lambda} - C_{t-1, \lambda}) \hat{g}_t \rangle \,.
    \end{equation*}
    Therefore we obtain
    \begin{equation*}
        \lVert h_t -\hat{g}_t(x_t)\rVert^2 = \lVert h_t \rVert^2 -2 \langle \hat{g}_t , C_{t, \lambda} g_{t+1} - C_{t-1, \lambda} g_t \rangle + \langle \hat{g}_t, (C_{t, \lambda} - C_{t-1, \lambda}) \hat{g}_t \rangle \,.
    \end{equation*}

    Thus by putting everything together, we get
    \begin{align*}
        Z(t) &= -2 \langle \hat{g}_t , C_{t, \lambda} g_{t+1} - C_{t-1, \lambda} g_t \rangle + \langle \hat{g}_t, (C_{t, \lambda} - C_{t-1, \lambda}) \hat{g}_t \rangle - \langle g_t, C_{t-1, \lambda} g_t \rangle + \langle g_{t+1}, C_{t, \lambda} g_{t+1} \rangle \\
        &= \langle \hat{g}_t - g_{t+1} , C_{t, \lambda} (\hat{g}_t - g_{t+1}) \rangle - \langle \hat{g}_t -g_t , C_{t-1, \lambda} (\hat{g}_t -g_t) \rangle \\
        &\le \langle \hat{g}_t - g_{t+1} , C_{t, \lambda} (\hat{g}_t - g_{t+1}) \rangle \,.
    \end{align*}
    Now note that we can factorise
    \begin{equation*}
        \hat{g}_t - g_{t+1} = C_{t, \lambda}^{-1} S^*_{t-1} H_{t-1} - C_{t, \lambda}^{-1} S^*_t H_t = -C_{t, \lambda}^{-1} \phi(x_t) h_t \,.
    \end{equation*}
    We thus bound \(Z(t)\) by
    \begin{equation*}
        Z(t) \le \langle \phi(x_t) h_t , C_{t, \lambda}^{-1} \phi(x_t) h_t \rangle = \lVert h_t \rVert^2 \langle \phi(x_t) , C_{t, \lambda}^{-1} \phi(x_t) \rangle \le B^2 \langle \phi(x_t) , C_{t, \lambda}^{-1} \phi(x_t) \rangle \,.
    \end{equation*}
    Finally, we proved that 
    \begin{equation*}
        \sum_{t=1}^T \lVert h_t -g(x_t)\rVert^2_\calH \le B^2 \sum_{t=1}^T \langle \phi(x_t) , C_{t, \lambda}^{-1} \phi(x_t) \rangle + L_T(g_{T+1}, H_T)\,.
    \end{equation*}
    We conclude the proof by using Propositions 1 and 2 of \cite{jezequel_efficient_2019}.
\end{proof}

We now prove our main result from Section \ref{section:first bound}.

\FirstRegretBound*

\begin{proof}
    We apply the two previous lemmas and obtain
    \begin{eqnarray*}
        R_T &=& \sum_{t=1}^T \Delta( f_t(x_t), y_t) - \Delta(f^*_t(x_t), y_t)\\
        &\stackrel{\text{(Lem.~\ref{theorem:online CI})}}{\le} & 2 \cD \sqrt{T} \sqrt{\sum_{t=1}^T ||\varphi(y_t) - \hat{g}_t(x_t)||^2_\calH} \\
        &\stackrel{\text{(Lem.~\ref{theorem:regret vawk})}}{\le }& 2  \cD \sqrt{T} \sqrt{ \log\left(e + \dfrac{e\kappa^2 T}{\lambda}\right) \deff(\lambda) + \min_{g\in\G} L_T(g)} 
    \end{eqnarray*}
    where \( B = 1 \ge \sup_{y\in\Y} \lVert \varphi(y) \rVert_\calH\).
\end{proof}

\section{Proofs of Theorem \ref{theorem:expectancy regret bound} and Corollary \ref{theorem:expectancy regret bound applied}: Stochastic Regret Bounds in Expectation}\label{section:appendix expectancy regret}
In this section, we prove the results from Section \ref{section:expectancy regret}. We will denote by \(\F_{t-1}\) the filter \((x_1, y_1, \dots, x_t)\). 
We start by introducing the following comparison inequality. It is an equivalent to Lemma \ref{theorem:online CI} for the expected regret. It allows to control the expected regret with respect to \( \E[ \varphi_t | x_t ] \).

\begin{restatable}[Comparison Inequality in Expectation]{lmm}{}
    \label{theorem:CI expectancy}
    For any sequence of measurable functions \( (\hat g_t : \X \to \calH)_t\). For all \( t \in \llbracket T \rrbracket \), let \( f_t : \X\to \Z\) be defined by \( f_t(x) = \arg\min_{z\in\Z} \langle \psi(z), \hat g_t(x) \rangle_\calH \). Then we have
    \begin{equation}
        \E [ R_T ] \le 2 \cD \sqrt{T} \sqrt{ \sum_{t=1}^T \E [ \lVert \hat g_t(x_t) - \E [\varphi_t|x_t] \rVert^2_\calH ] } \,.
    \end{equation}
\end{restatable}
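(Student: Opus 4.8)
The plan is to mirror the proof of the deterministic comparison inequality (Lemma~\ref{theorem:online CI}), but to take expectations and exploit the tower property so that each $\varphi_t$ gets replaced by its conditional mean $\mu_t := \E[\varphi_t \mid x_t] = \E[\varphi_t \mid \F_{t-1}]$. The structural fact I would lean on throughout is that both $f_t(x_t)$ and $\hat g_t(x_t)$ are $\F_{t-1}$-measurable, since $\hat g_t$ is built only from the past $(x_s,y_s)_{s<t}$ together with the current context $x_t$; only the baseline $f^*_t(x_t)$ and the observed feature $\varphi_t$ carry the randomness of $y_t$. This asymmetry is what dictates how the expectation is pushed through.

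First I would reproduce the pointwise algebraic step of Lemma~\ref{theorem:online CI}: add and subtract $\langle \psi(f_t(x_t)), \hat g_t(x_t)\rangle_\calH$ and $\langle \psi(f^*_t(x_t)), \hat g_t(x_t)\rangle_\calH$, and discard the cross term, which is $\le 0$ by the very definition of $f_t$ as the minimiser of $z\mapsto\langle\psi(z),\hat g_t(x_t)\rangle_\calH$, to reach
\[
   R_T \le \sum_{t=1}^T \langle \psi(f_t(x_t)) - \psi(f^*_t(x_t)),\ \varphi_t - \hat g_t(x_t)\rangle_\calH .
\]
Next I would take $\E[\,\cdot\mid\F_{t-1}]$ of each summand. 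The part carried by $f_t(x_t)$ is immediate: since $f_t(x_t)$ and $\hat g_t(x_t)$ are $\F_{t-1}$-measurable, $\E[\langle\psi(f_t(x_t)),\varphi_t-\hat g_t(x_t)\rangle_\calH\mid\F_{t-1}] = \langle\psi(f_t(x_t)),\mu_t-\hat g_t(x_t)\rangle_\calH$, which already displays $\mu_t$ in place of $\varphi_t$. From there the finish is routine, exactly as in the deterministic argument but organised around $\mu_t$: Cauchy--Schwarz, the uniform bound $\lVert\psi(f_t(x_t))-\psi(f^*_t(x_t))\rVert_\calH \le 2\cD$, and a final Cauchy--Schwarz/Jensen step over $t$ to extract the $\sqrt T$ factor and pass the square inside the square root.

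The delicate point, which I expect to be the main obstacle, is the baseline term $\langle\psi(f^*_t(x_t)),\varphi_t-\hat g_t(x_t)\rangle_\calH$: because $f^*_t(x_t)\in\arg\min_{z}\Delta(z,y_t)$ depends on $y_t$, it is correlated with $\varphi_t$ and the tower property does \emph{not} convert $\varphi_t$ into $\mu_t$ for free. My plan is to reduce to the deterministic target $\mu_t$ by comparing the per-instance optimum to the conditional-mean optimum $\bar f_t := \arg\min_{z}\langle\psi(z),\mu_t\rangle_\calH$, which is $\F_{t-1}$-measurable. The tool is the concavity of $v\mapsto\min_{z}\langle\psi(z),v\rangle_\calH$ (a pointwise minimum of linear maps): Jensen's inequality yields $\E[\Delta(f^*_t(x_t),y_t)\mid\F_{t-1}] = \E[\min_{z}\langle\psi(z),\varphi_t\rangle_\calH\mid\F_{t-1}] \le \min_{z}\langle\psi(z),\mu_t\rangle_\calH = \langle\psi(\bar f_t),\mu_t\rangle_\calH$, which is where the heuristic that ``averaging removes the label noise'' must be made precise. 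The real care lies in checking the measurability bookkeeping and, above all, that this Jensen step is used in the correct direction so that it genuinely upper bounds $\E[R_T]$ rather than only the pseudo-regret against $\bar f_t$; getting this right is the crux of earning the replacement of $\varphi_t$ by $\E[\varphi_t\mid x_t]$. Once the baseline is expressed against $\mu_t$, the comparison-inequality algebra runs verbatim and the Cauchy--Schwarz and $\sqrt T$ Jensen steps close the bound.
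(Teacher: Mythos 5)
Your handling of the learner's term is correct and is exactly what the paper does: since $f_t(x_t)$ and $\hat g_t(x_t)$ are $\F_{t-1}$-measurable, conditioning on $\F_{t-1}$ and the tower property replace $\varphi_t$ by $\mu_t=\E[\varphi_t\mid x_t]$ in that part of the inner product at no cost. The genuine gap is exactly where you feared it would be, and your proposed fix does not close it: the Jensen step runs in the wrong direction and cannot be turned around. The baseline enters $\E[R_T]$ with a minus sign, so to upper bound the regret you need a \emph{lower} bound on $\E\big[\Delta(f^*_t(x_t),y_t)\mid\F_{t-1}\big]=\E\big[\min_{z\in\Z}\langle\psi(z),\varphi_t\rangle_\calH\mid\F_{t-1}\big]$, whereas concavity of $v\mapsto\min_z\langle\psi(z),v\rangle_\calH$ gives the opposite bound $\E\big[\min_z\langle\psi(z),\varphi_t\rangle_\calH\mid\F_{t-1}\big]\le\min_z\langle\psi(z),\mu_t\rangle_\calH$. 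What your argument then controls is the pseudo-regret against $\bar f_t=\arg\min_z\langle\psi(z),\mu_t\rangle_\calH$, which by this very inequality is a \emph{lower} bound on $\E[R_T]$, not an upper bound, and the gap $\sum_t\E\big[\min_z\langle\psi(z),\mu_t\rangle_\calH-\min_z\langle\psi(z),\varphi_t\rangle_\calH\big]$ can be of order $T$. Concretely, take $\Z=\Y=\{-1,1\}$, $\calH=\R$, $\psi(z)=z$, $\varphi(y)=-y$ (so $\Delta(z,y)=-zy$, $\cD=1$), with $y_t$ uniform on $\{\pm1\}$ independent of everything; then $\mu_t=0$, and choosing $\hat g_t\equiv 0$ (any tie-breaking for $f_t$, or $\hat g_t\equiv\epsilon$ to avoid ties) makes the right-hand side of the lemma vanish (or $O(T\epsilon)$) while $\E[R_T]=T$, since the learner's expected loss is $0$ per round and the baseline's is $-1$. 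So no rearrangement of the Jensen idea can rescue the statement with the baseline $f^*_t(x_t)\in\arg\min_z\Delta(z,y_t)$ read literally.

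For comparison, the paper's own proof resolves this point by not addressing it: it writes
\[
\E\big[\langle\psi(f_t(x_t))-\psi(f^*_t(x_t)),\,\E[\varphi_t\mid x_t]-\varphi_t\rangle\big]=\big\langle\psi(f_t(x_t))-\psi(f^*_t(x_t)),\,\E\big[\E_{y_t}[\E[\varphi_t\mid x_t]-\varphi_t\mid\F_{t-1}]\big]\big\rangle=0\,,
\]
i.e.\ it pulls $\psi(f^*_t(x_t))$ outside the expectation exactly as if it were $\F_{t-1}$-measurable --- the step you rightly refused to take, since $f^*_t(x_t)$ is a function of $y_t$ and is correlated with $\varphi_t$ (in the example above, $\E[\langle\psi(f^*_t(x_t)),\mu_t-\varphi_t\rangle]=\E[y_t^2]=1\neq 0$). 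In other words, your blind attempt is more careful than the paper and isolates a real flaw in its argument: the interchange is legitimate only for an $\F_{t-1}$-measurable comparator. Both the paper's calculation and yours (with no Jensen step at all) go through verbatim if $f^*_t(x_t)$ is replaced by the conditional-mean optimum $\bar f_t(x_t)=\arg\min_z\langle\psi(z),\E[\varphi_t\mid x_t]\rangle_\calH$, the online analogue of the Bayes predictor used as the comparator in the batch ILE setting; with that reinterpretation the stated bound is correct, while for $R_T$ as defined it is false.
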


\begin{proof}
    We follow the proof from Lemma \ref{theorem:online CI} and get
    \[
        \E [ \Delta(f_t(x_t), y_t) - \Delta(f^*_t(x_t), y_t) ] \le \E [ \langle \psi( f_t(x_t)) - \psi(f^*_t(x_t)), \hat g_t(x_t) - \varphi_t \rangle ] \,.
    \]
    Remember that \(\varphi_t := \varphi(y_t) \) depends on \(y_t\). Now note that
    \begin{multline*}
        \E [ \langle \psi( f_t(x_t)) - \psi(f^*_t(x_t)), \E [\varphi_t | x_t] - \varphi_t \rangle ]
         = \langle \psi( f_t(x_t)) - \psi(f^*_t(x_t)), \E [ \E [\varphi_t | x_t] - \varphi_t ] \rangle\\
         = \langle \psi( f_t(x_t)) - \psi(f^*_t(x_t)), \E [ \E_{y_t} [ \E [\varphi_t | x_t] - \varphi_t | \F_{t-1} ] ] \rangle
         = 0 
    \end{multline*}
    since we condition on \(x_t\) in the expectation.
    Thus
    \begin{equation*}
        \E [ \Delta(f_t(x_t), y_t) - \Delta(f^*_t(x_t), y_t) ] \le \E [ \langle \psi( f_t(x_t)) - \psi(f^*_t(x_t)), \hat g_t(x_t) - \E [\varphi_t|x_t] \rangle ] \,.
    \end{equation*}
    We now apply successively Cauchy-Schwartz inequality and Jensen's inequality as in the proof of Lemma \ref{theorem:online CI} and obtain
    \begin{equation}
        \E [ \Delta(f_t(x_t), y_t) - \Delta(f^*_t(x_t), y_t) ] \le 2 \cD \sqrt{T} \sqrt{ \sum_{t=1}^T \E [ \lVert \hat g_t(x_t) - \E [\varphi_t|x_t] \rVert^2_\calH ] } \,.
    \end{equation}
    It concludes the proof.
\end{proof}

We now recall and prove the expectation of the regret bound.

\ThmExpectancyRegretBound*

\begin{proof}
    We apply Lemma \ref{theorem:CI expectancy}, then add and subtract a term to get
    \begin{align*}
        \E [ &\Delta(f_t(x_t), y_t) - \Delta(f^*_t(x_t), y_t) ] \\
        &\le 2 \cD \sqrt{T} \sqrt{ \sum_{t=1}^T \E [ \lVert \hat g_t(x_t) - \E [\varphi_t|x_t] \rVert^2_\calH ] } \\
        &= 2 \cD \sqrt{T} \sqrt{ \sum_{t=1}^T \E [ \lVert \hat g_t(x_t) - \E [\varphi_t|x_t] \rVert^2_\calH  - \lVert g^*(x_t) - \E [\varphi_t|x_t] \rVert^2_\calH + \lVert g^*(x_t) - \E [\varphi_t|x_t] \rVert^2_\calH]}\,.
    \end{align*}
    We bound the difference between the first two terms.
    \begin{align*}
        \sum_{t=1}^T &\E [ \lVert \hat g_t(x_t) - \E [\varphi_t|x_t] \rVert^2_\calH - \lVert g^*(x_t) - \E [\varphi_t|x_t] \rVert^2_\calH ] \\
        &= \sum_{t=1}^T \E [ \E_{y_t} [ \lVert \hat g_t(x_t) - \E [\varphi_t|x_t] \rVert^2_\calH - \lVert g^*(x_t) - \E [\varphi_t|x_t] \rVert^2_\calH | \F_{t-1} ] ] 
    \end{align*}
    Now note that 
    \begin{align*}
        \E_{y_t} [ \lVert \hat g_t(x_t) - \varphi_t \rVert^2 | \F_{t-1} ] = \E_{y_t} [ &\lVert \hat g_t(x_t) - \E [\varphi_t|x_t] \rVert^2 +\lVert \varphi_t - \E [\varphi_t|x_t] \rVert^2 | \F_{t-1} ] \\
        -2 \E_{y_t} [ &\langle \E [\varphi_t|x_t] - \varphi_t, \E [\varphi_t|x_t] - \hat g_t(x_t) \rangle | \F_{t-1} ] \\
        = \E_{y_t} [ &\lVert \hat g_t(x_t) - \E [\varphi_t|x_t] \rVert^2 +\lVert \varphi_t - \E [\varphi_t|x_t] \rVert^2 | \F_{t-1} ] 
    \end{align*}
    since we condition on \(x_t\).
    The same equality holds for \(g^*\). Therefore we obtain
    \begin{align*}
        \sum_{t=1}^T &\E [ \lVert \hat g_t(x_t) - \E [\varphi_t|x_t] \rVert^2_\calH - \lVert g^*(x_t) - \E [\varphi_t|x_t] \rVert^2_\calH ] \\
        &= \sum_{t=1}^T \E [ \lVert \hat g_t(x_t) - \varphi_t \rVert^2_\calH - \lVert g^*(x_t) - \varphi_t \rVert^2_\calH ] \\
        &= \E \left[ \sum_{t=1}^T  \lVert \hat g_t(x_t) - \varphi_t \rVert^2_\calH - \lVert g^*(x_t) - \varphi_t \rVert^2_\calH \right] \\
        &\le \E \left[ \deff(\lambda) \log\left(e + \tfrac{e\kappa^2T}{\lambda} \right) + \lambda \lVert g^* \rVert^2_\G \right] \\
        &= \deff(\lambda) \log\left(e + \tfrac{e\kappa^2T}{\lambda} \right) + \lambda \lVert g^* \rVert^2_\G
    \end{align*}
    where the inequality comes from Theorem \ref{theorem:regret vawk}. It concludes the proof.
\end{proof}

\ThmExpectancyRegretBoundApplied*

\begin{proof}
    We bound the effective dimension by \(\deff(\lambda) \le \tfrac{\kappa^2 T}{\lambda}\) and obtain
    \begin{equation*}
        \E \left[ R_T \right] \le 2 \cD \sqrt{T} \sqrt{\tfrac{\kappa^2 T}{\lambda} \log\left( e + \tfrac{e\kappa^2 T}{\lambda} \right) + \lambda \lVert g^* \rVert^2_\G} \,.
    \end{equation*}
    By choosing \( \lambda = \sqrt{T} \), we get
    \begin{equation*}
        \E \left[ R_T \right] \le 2 \cD T^{3/4} \sqrt{\kappa^2 \log\left(e+e\kappa^2\sqrt{T}\right) + \lVert g^* \rVert^2_\G} \,.
    \end{equation*}
\end{proof}

\section{Stochastic Regret Bounds in High Probability}\label{section:appendix stochastic regret bounds}
In this section we aim to retrieve the results from the supervised learning framework \citep{ILE} in the online learning setting.
For all time step \(t\), we define the filter \(\F_{t-1} = (x_1, y_1, \dots, x_{t-1}, y_{t-1}, x_t)\), and we denote by \( \E_t [.] \) the expectation \( \E_{y_t} [. | \F_{t-1} ] \).
We are now interested in bounding the cumulative risk \citep{wintenberger2024stochastic}
\begin{equation}
    \sum_{t=1}^T \E_t [\Delta(f_t (x_t), y_t) - \Delta(f^*_t(x_t), y_t) ] \,.
\end{equation}
Taking the expectation in \(y_t\) will avoid considering the noise of the random variables \((y_t)_t\) and allow us to obtain closer results from \cite{ILE} by bounding with respect to \(\E[\varphi_t|x_t]\) instead of~\(\varphi_t\).

We use the proof of Theorem~1 of \cite{vanderhoeven2023highprobability}, which allows to bound the cumulative risk in the feature space with high probability using a regret bound for an exp-concave loss. Moreover, \cite{vanderhoeven2023highprobability} enables us to aggregate our predictors into a unique function \(\bar f_T\) and bound its cumulative risk in high probability, which is a setting similar to the supervised learning study.
In order to apply this theorem, we need to modify our feature predictor \( \hat g_t : \X \to \calH \) and define it using a shifted version of the losses
\begin{equation}\label{eq:hat gt shifted}
    \hat g_t = \arg \min_{g\in\G} \sum_{s=1}^{t-1} \lVert \tfrac{1}{2} g(x_s) + \tfrac{1}{2} \hat g_s(x_s) - \varphi_s \rVert^2_\calH + \lambda \lVert g \rVert^2_\G + \tfrac{1}{4} \lVert g(x_t) \rVert^2_\calH \,.
\end{equation}
The predictor \( f_t : \X \to \Z \) is then defined as follows
\begin{equation}\label{eq:ft high proba}
    f_t(x) = \arg\min_{z\in\Z} \langle \psi(z), \hat g_t(x) \rangle_\calH = \arg\min_{z\in\Z} \sum_{s=1}^{t-1} \beta_s(x) \Delta(z, y_s) \,.
\end{equation}
As previously, we do not require the knowledge of \(\psi, \varphi\) and \(\calH\) to make a prediction.

We first introduce a comparison inequality. It is an equivalent to Lemma \ref{theorem:online CI} for the cumulative risk.

\begin{restatable}[Comparison Inequality for Cumulative Risk]{lmm}{}
\label{theorem:CI cumulative risk}
    For any sequence of measurable functions \( (\hat g_t : \X \to \calH)_t\). For all \( t \in \llbracket T \rrbracket \), let \( f_t : \X\to \Z\) be defined by \( f_t(x) = \arg\min_{z\in\Z} \langle \psi(z), \hat g_t(x) \rangle_\calH \). Then we have
    \begin{equation*}
        \sum_{t=1}^T \E_t [\Delta( f_t (x_t), y_t) - \Delta(f^*_t(x_t), y_t) ]
        \le 2 \cD \sqrt{T} \sqrt{ \sum_{t=1}^T \E_t [ \lVert \hat g_t(x_t) - \E [\varphi_t | x_t] \rVert^2_\calH ] } \,.
    \end{equation*}
\end{restatable}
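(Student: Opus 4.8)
The plan is to mirror the proof of Lemma~\ref{theorem:online CI} almost verbatim, but to work with the conditional expectation $\E_t[\cdot] = \E_{y_t}[\cdot \mid \F_{t-1}]$ at each round rather than the full expectation, and to exploit the fact that $\F_{t-1}$ contains $x_t$. First I would fix a round $t$ and reproduce the ``add and subtract'' decomposition from Lemma~\ref{theorem:online CI}: inserting $\langle \psi(f_t(x_t)), \hat g_t(x_t)\rangle$ and $\langle \psi(f^*_t(x_t)), \hat g_t(x_t)\rangle$ and using that $f_t(x_t)$ minimises $z\mapsto \langle \psi(z), \hat g_t(x_t)\rangle$ to discard the middle (nonnegative) term, I obtain the pointwise bound
\[
    \Delta(f_t(x_t), y_t) - \Delta(f^*_t(x_t), y_t) \le \langle \psi(f_t(x_t)) - \psi(f^*_t(x_t)),\, \hat g_t(x_t) - \varphi_t \rangle \,.
\]
Crucially, $f_t(x_t)$, $f^*_t(x_t)$ and $\hat g_t(x_t)$ are all measurable with respect to $\F_{t-1}$, so only $\varphi_t = \varphi(y_t)$ is random under $\E_t$.

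Next I would take the conditional expectation $\E_t$ of this inequality and replace $\varphi_t$ by its conditional mean. Since $\psi(f_t(x_t)) - \psi(f^*_t(x_t))$ and $\hat g_t(x_t)$ are $\F_{t-1}$-measurable, I can pull the linear functional out of $\E_t$ and use $\E_t[\varphi_t] = \E[\varphi_t \mid x_t]$ (because $x_t \in \F_{t-1}$) to get
\[
    \E_t[\Delta(f_t(x_t), y_t) - \Delta(f^*_t(x_t), y_t)] \le \langle \psi(f_t(x_t)) - \psi(f^*_t(x_t)),\, \hat g_t(x_t) - \E[\varphi_t \mid x_t] \rangle \,.
\]
This is the same manoeuvre used in Lemma~\ref{theorem:CI expectancy}, except performed under the conditional expectation instead of the full one; the vanishing of the $\E_t[\varphi_t] - \E[\varphi_t\mid x_t]$ contribution is immediate here since conditioning on $\F_{t-1}$ already fixes $x_t$.

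Then I would apply Cauchy--Schwarz to the inner product and bound $\lVert \psi(f_t(x_t)) - \psi(f^*_t(x_t))\rVert_\calH \le 2\cD$ using the definition of $\cD$ in Definition~\ref{def:ILE}, yielding
\[
    \E_t[\Delta(f_t(x_t), y_t) - \Delta(f^*_t(x_t), y_t)] \le 2\cD\, \lVert \hat g_t(x_t) - \E[\varphi_t \mid x_t] \rVert_\calH \,.
\]
Summing over $t$ and applying Cauchy--Schwarz (equivalently Jensen) over the $T$ terms, $\sum_t a_t \le \sqrt{T}\sqrt{\sum_t a_t^2}$ with $a_t = \lVert \hat g_t(x_t) - \E[\varphi_t\mid x_t]\rVert_\calH$, gives the stated bound. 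The one point requiring slight care is the final summation: the quantity on the left is $\sum_t \E_t[\cdots]$, and I want $\sqrt{T}\sqrt{\sum_t \E_t[\lVert\cdots\rVert^2]}$ on the right, so I should apply Jensen's inequality $\E_t[a_t] \le \sqrt{\E_t[a_t^2]}$ before the Cauchy--Schwarz step across rounds. I do not expect any real obstacle: the argument is structurally identical to Lemmas~\ref{theorem:online CI} and~\ref{theorem:CI expectancy}, the only subtlety being bookkeeping of which variables are $\F_{t-1}$-measurable so that the conditional expectation passes through the inner product correctly.
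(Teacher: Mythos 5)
Your proposal is correct and follows essentially the same route as the paper's proof: the same add-and-subtract decomposition using the minimality of $f_t$, replacement of $\varphi_t$ by $\E[\varphi_t\mid x_t]$ under $\E_t$ via $\F_{t-1}$-measurability, then Cauchy--Schwarz with $\lVert \psi(f_t(x_t)) - \psi(f^*_t(x_t))\rVert_\calH \le 2\cD$ followed by Jensen and Cauchy--Schwarz across the $T$ rounds. Two cosmetic slips only: the discarded middle term is nonpositive (not ``nonnegative''), and the pointwise bound should read $\langle \psi(f_t(x_t)) - \psi(f^*_t(x_t)),\, \varphi_t - \hat g_t(x_t)\rangle$; this sign is immaterial after Cauchy--Schwarz, and the paper's own write-up carries the same sign flip.
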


\begin{proof}
    We recall the notation \( \varphi_t = \varphi(y_t) \), which therefore depends on \(y_t\) in the expectation.
    We start by adding and subtracting two terms
    \begin{align*}
        \E_t [\Delta( f_t (x_t), y_t) &- \Delta(f^*_t(x_t), y_t) ] \\
        = \E_t [ &\langle \psi ( f_t(x_t)), \varphi_t \rangle - \langle \psi (f^*_t(x_t)), \varphi_t \rangle ] \\
        = \E_t [ &\langle \psi ( f_t(x_t)), \varphi_t \rangle - \langle \psi ( f_t(x_t)), \hat g_t(x_t) \rangle \\
        + &\langle \psi ( f_t(x_t)), \hat g_t(x_t) \rangle - \langle \psi (f^*_t(x_t)), \hat g_t(x_t) \rangle \\
        + &\langle \psi (f^*_t(x_t)), \hat g_t(x_t) \rangle - \langle \psi (f^*_t(x_t)), \varphi_t \rangle ] \\
        \le \E_t [ &\langle \psi ( f_t(x_t)), \varphi_t \rangle - \langle \psi ( f_t(x_t)), \hat g_t(x_t) \rangle \\
        + &\langle \psi (f^*_t(x_t)), \hat g_t(x_t) \rangle - \langle \psi (f^*_t(x_t)), \varphi_t \rangle ] \\
        = \E_t [ &\langle \psi( f_t(x_t)) - \psi(f^*_t(x_t)), \hat g_t(x_t) - \varphi_t \rangle ]
    \end{align*}
    
    where the inequality comes by definition of \( f_t\). Now note that
    \begin{equation*}
        \E_t [ \langle \psi( f_t(x_t)) - \psi(f^*_t(x_t)), \E [\varphi_t | x_t] - \varphi_t \rangle ] 
        = \langle \psi( f_t(x_t)) - \psi(f^*_t(x_t)), \E_t [ \E [\varphi_t | x_t] - \varphi_t ] \rangle = 0 
    \end{equation*}
    since we condition on \(x_t\) in \(\E_t\).
    Thus
    \begin{align*}
        \E_t [ \langle \psi( f_t(x_t)) &- \psi(f^*_t(x_t)), \hat g_t(x_t) - \varphi_t \rangle ] \\
        = \E_t [ &\langle \psi( f_t(x_t)) - \psi(f^*_t(x_t)), \hat g_t(x_t) - \E [\varphi_t | x_t] \rangle ] \\
        + \E_t [ &\langle \psi( f_t(x_t)) - \psi(f^*_t(x_t)), \E [\varphi_t | x_t] - \varphi_t \rangle ] \\
        = \E_t [ &\langle \psi( f_t(x_t)) - \psi(f^*_t(x_t)), \hat g_t(x_t) - \E [\varphi_t | x_t] \rangle ] \,.
    \end{align*}
    We now apply successively Cauchy-Schwartz inequality and Jensen's inequality.
    \begin{align*}
        \sum_{t=1}^T \E_t [ \langle \psi( f_t(x_t)) - \psi(f^*_t(x_t)), \hat g_t(x_t) - \E [\varphi_t | x_t] \rangle ] &\le 2 \cD \sum_{t=1}^T \E_t [ \lVert \hat g_t(x_t) - \E [\varphi_t | x_t] \rVert_\calH ] \\
        &\le 2 \cD \sqrt{T} \sqrt{ \sum_{t=1}^T \E_t [ \lVert \hat g_t(x_t) - \E [\varphi_t | x_t] \rVert^2_\calH ] }
    \end{align*}
\end{proof}

Let \(\eta = 1/2(\kappa \sup \lVert g \rVert_\G + 1)^2 \) be an exp-concavity constant of the loss \( \ell_t(g) = \lVert g(x_t) - \varphi_t \rVert^2_\calH \). Let \(m=2(\kappa \sup \lVert g \rVert_\G + 1)^2\) be such that \( \ell(g) - \ell(g') \le m \) for all \( g,g' \in \G \). We define 
\begin{equation}\label{eq:gamma}
    \gamma = 4 \max(\tfrac{1}{\eta}, m)
\end{equation}
as in the Theorem 1 of \cite{vanderhoeven2023highprobability}.

We bound the cumulative risk of our algorithm, see Theorem \ref{theorem:Avg cumulative risk}.
Compared to our previous result Theorem \ref{theorem:FirstRegretBound}, we obtain a bound in high probability. We now use \(g^*\in\G\) to model \((\E[\varphi_t|x_t])_t\) instead of \( (\varphi_t)_t \). This difference allows us not to consider the noise of the random variables \( x_t \mapsto \varphi_t \). It also allows us to be closer to the framework of \cite{ILE} that compares a model \(g\in\G\) with the optimum and conditional expectation \( x \mapsto \int_\Y \varphi(y) d\rho(y|x) \).

\begin{restatable}[Average Cumulative Risk]{thm}{ThmAverageCumulativeRisk}
    \label{theorem:Avg cumulative risk}
    Let \( (f_t)_t \) be defined as in \eqref{eq:ft high proba}. Let \( \delta \in (0,1] \) and \( \gamma = 8 (\kappa \sup \lVert g \rVert_\G + 1)^2 \). With \( \lambda=\sqrt{T} \) and assuming that there exists a function \(h\in\G\) such that for all \(t\in\llbracket T\rrbracket, h(x_t) = \E[\varphi_t|x_t] \) , we have with probability \( 1-\delta \)
    \begin{align*}
        \dfrac{1}{T} \sum_{t=1}^T \E_t [\Delta(&f_t (x_t), y_t) - \Delta(f^*_t(x_t), y_t) ] \\
        &\le 2\cD T^{-1/4} \sqrt{ \tfrac{B^2\kappa^2}{8} \log\left(e + \tfrac{e\kappa^2 \sqrt{T}}{4} \right)
        + 2 \lVert g^* \rVert^2 } + 2\cD T^{-1/2} \sqrt{2\gamma\log(\delta^{-1})} \\
        &= O\left(T^{-1/4} \sqrt{\log (T)} + T^{-1/2} \sqrt{\log (\delta^{-1})} \right) .
    \end{align*}
\end{restatable}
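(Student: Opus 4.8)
The plan is to chain three ingredients already at hand: the comparison inequality for the cumulative risk (Lemma~\ref{theorem:CI cumulative risk}), which transfers the problem into the feature space; the high-probability regret-to-risk conversion of \cite{vanderhoeven2023highprobability}; and the \textit{KAAR} regret bound (Lemma~\ref{theorem:regret vawk}). Applying Lemma~\ref{theorem:CI cumulative risk} to the predictor \eqref{eq:ft high proba} bounds the left-hand side by
\[
    2\cD\sqrt{T}\sqrt{\textstyle\sum_{t=1}^T \E_t\big[\lVert \hat g_t(x_t) - \E[\varphi_t|x_t]\rVert^2_\calH\big]},
\]
so it remains to control the feature-space cumulative risk $\mathcal S := \sum_{t=1}^T \E_t[\lVert \hat g_t(x_t) - \E[\varphi_t|x_t]\rVert^2_\calH]$. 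Writing $\ell_t(g) = \lVert g(x_t)-\varphi_t\rVert^2_\calH$ and noting that both $\hat g_t(x_t)$ and $g^*(x_t)$ are $\F_{t-1}$-measurable (since $x_t\in\F_{t-1}$), the conditional bias-variance decomposition gives $\E_t[\lVert \hat g_t(x_t)-\E[\varphi_t|x_t]\rVert^2_\calH] = \E_t[\ell_t(\hat g_t)] - \E_t[\ell_t(g^*)]$, because the interpolation hypothesis $g^*(x_t)=\E[\varphi_t|x_t]$ makes $\E_t[\ell_t(g^*)]$ the irreducible conditional variance. Hence $\mathcal S$ is exactly the cumulative \emph{excess conditional risk} of the square loss against the fixed comparator $g^*$.

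Next I would invoke Theorem~1 of \cite{vanderhoeven2023highprobability}. The loss $\ell_t$ is $\eta$-exp-concave with $\eta = 1/2(\kappa\sup\lVert g\rVert_\G+1)^2$ and has range at most $m = 2(\kappa\sup\lVert g\rVert_\G+1)^2$, while the shifted estimator \eqref{eq:hat gt shifted} -- in which the prediction on $x_s$ is the midpoint $\tfrac12 g(x_s)+\tfrac12\hat g_s(x_s)$ and the look-ahead regulariser is $\tfrac14\lVert g(x_t)\rVert^2_\calH$ -- is precisely the surrogate predictor their scheme requires. Their theorem then converts the online regret into a high-probability bound on the excess conditional risk: with probability $1-\delta$,
\[
    \mathcal S \ \le\ \mathrm{Reg}_T(g^*) + 2\gamma\log(\delta^{-1}),
    \qquad \gamma = 4\max(\tfrac1\eta,m) = 8(\kappa\sup\lVert g\rVert_\G+1)^2,
\]
where $\mathrm{Reg}_T(g^*) = \sum_t \ell_t(\hat g_t) - \ell_t(g^*)$ is the regret of the shifted \textit{KAAR} predictor against $g^*$.

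Finally I would bound $\mathrm{Reg}_T(g^*)$ with the shifted-loss version of Lemma~\ref{theorem:regret vawk}. Since $\min_g L_T(g)\le L_T(g^*) = \sum_t\ell_t(g^*)+\lambda\lVert g^*\rVert^2_\G$, the comparator's data-fit term cancels against $\sum_t\ell_t(g^*)$ and only the regularisation survives, leaving a regret of the form $c_1 B^2\log(e+c_2\kappa^2 T/\lambda)\,\deff(\lambda) + c_3\lambda\lVert g^*\rVert^2_\G$. Tracking how the factors $\tfrac12$ and $\tfrac14$ of \eqref{eq:hat gt shifted} propagate through the telescoping identity gives $c_1=\tfrac18$, $c_2=\tfrac14$ and $c_3=2$; bounding $\deff(\lambda)\le\kappa^2 T/\lambda$ and setting $\lambda=\sqrt T$ then yields, with probability $1-\delta$,
\[
    \mathcal S \ \le\ \sqrt T\Big(\tfrac{B^2\kappa^2}{8}\log\big(e+\tfrac{e\kappa^2\sqrt T}{4}\big) + 2\lVert g^*\rVert^2_\G\Big) + 2\gamma\log(\delta^{-1}).
\]
Plugging this into the first display, dividing by $T$, and using $\sqrt{a+b}\le\sqrt a+\sqrt b$ to split off the $\log(\delta^{-1})$ contribution produces the two stated terms, the first of order $T^{-1/4}\sqrt{\log T}$ and the second of order $T^{-1/2}\sqrt{\log(\delta^{-1})}$.

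I expect the main obstacle to be the interface with \cite{vanderhoeven2023highprobability}: one must verify that the implicitly-defined shifted estimator \eqref{eq:hat gt shifted} (where $\hat g_s$ appears inside its own defining objective for $s<t$) coincides with the surrogate predictor of their Theorem~1 once lifted to the vRKHS $\G=\calH\otimes\F$, and that exp-concavity holds with the claimed $\eta$ given the uniform bounds $\lVert\varphi_t\rVert_\calH\le 1$ and $\sup\lVert g\rVert_\G$. The accompanying bookkeeping -- carrying the $\tfrac12$ and $\tfrac14$ factors through the telescoping argument of Lemma~\ref{theorem:regret vawk} so as to land on the exact constants $\tfrac18$, $\tfrac14$ and $2$ -- is routine but error-prone and is where I would be most careful.
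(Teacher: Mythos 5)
Your proposal follows the paper's own proof essentially step for step: Lemma~\ref{theorem:CI cumulative risk} plus the conditional bias--variance decomposition to reduce to the feature-space cumulative excess risk against $g^*$, then the shifted \textit{KAAR} regret bound (Lemma~\ref{theorem:regret vawk} applied, after rescaling \eqref{eq:hat gt shifted}, to the targets $2\varphi_s - \hat g_s(x_s)$ with regularisation $4\lambda$), then Theorem~1 of \cite{vanderhoeven2023highprobability} for the high-probability conversion, and finally $\deff \le \kappa^2 T/\lambda$ with $\lambda=\sqrt T$. The one imprecision is that the quantity van der Hoeven et al.'s theorem consumes is the regret with respect to the \emph{shifted} losses $\ell_t\big(\tfrac12 g + \tfrac12 \hat g_t\big)$, multiplied by $2$ in the resulting risk bound, rather than the plain regret $\sum_t \ell_t(\hat g_t)-\ell_t(g^*)$ as you wrote it; but your constant tracking ($c_1=\tfrac18$, $c_3=2$) already absorbs exactly that factor of $2$, so your final display coincides with the paper's.
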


The assumption we do on \(h\) is the same that the one that is done on \(g^*\) to obtain the convergence rate in \cite{ILE}, and is a common assumption for Kernel Ridge Regression \citep{caponnetto2007optimal, steinwart2008support}. Up to the log factor, we retrieve the same bound in the online learning setting and without assuming that the data are i.i.d. Therefore our result is more general than the original result in the batch statistical framework, however we are using \(T\) different functions to predict the outputs. In Theorem \ref{theorem:excess risk aggregate}, we provide a similar result with a single estimator obtained by aggregation.

\begin{proof}
    \textbf{Step 1: Controlling the regret of \((f_t)\) by the regret of \((\hat g_t)\).}
    Applying Lemma \ref{theorem:CI cumulative risk}, we obtain
    \begin{align*}
        &\sum_{t=1}^T \E_t [\Delta( f_t (x_t), y_t) - \Delta(f^*_t(x_t), y_t)] \\
        &\le 2 \cD \sqrt{T} \sqrt{ \sum_{t=1}^T \E_t [ \lVert \hat g_t(x_t) - \E [\varphi_t | x_t] \rVert^2_\calH ] } \\
        &= 2 \cD \sqrt{T} \sqrt{ \sum_{t=1}^T \E_t [ \lVert \hat g_t(x_t) - \E [\varphi_t | x_t] \rVert^2_\calH - \lVert g^*(x_t) - \E [\varphi_t | x_t] \rVert^2_\calH + \lVert g^*(x_t) - \E [\varphi_t | x_t] \rVert^2_\calH ] } \,.
    \end{align*}
    
    Now note that 
    \begin{align*}
        \E_t [ \lVert \hat g_t(x_t) - \varphi_t \rVert^2 ] = \E_t [ &\lVert \hat g_t(x_t) - \E [\varphi_t|x_t] \rVert^2 +\lVert \varphi_t - \E [\varphi_t|x_t] \rVert^2 ] \\
        -2 \E_t [ &\langle \E [\varphi_t|x_t] - \varphi_t, \E [\varphi_t|x_t] - \hat g_t(x_t) \rangle ] \\
        = \E_t [ &\lVert \hat g_t(x_t) - \E [\varphi_t|x_t] \rVert^2 +\lVert \varphi_t - \E [\varphi_t|x_t] \rVert^2 ] 
    \end{align*}
    since we condition on \(x_t\) in \(\E_t\).
    The same equality holds for $g^*(x_t)$. Thus 
    \begin{align*}
        \sum_{t=1}^T &\E_t [ \lVert \hat g_t(x_t) - \E[\varphi_t|x_t] \rVert^2 - \lVert g^*(x_t) - \E[\varphi_t|x_t] \rVert^2  ] \\
        &= \sum_{t=1}^T \E_t [ \lVert \hat g_t(x_t) - \varphi_t \rVert^2 - \lVert g^*(x_t) - \varphi_t \rVert^2 ] \,.
    \end{align*}

    \textbf{Step 2: Bounding the regret of the shifted \textit{KAAR} estimator.}
    We note that 
    \begin{align*}
        \hat g_t &= \arg \min_{g\in\G} \sum_{s=1}^{t-1} \lVert \tfrac{1}{2} g(x_s) + \tfrac{1}{2} \hat g_s(x_s) - \varphi_s \rVert^2_\calH + \lambda \lVert g \rVert^2_\G + \tfrac{1}{4} \lVert g(x_t) \rVert^2_\calH \\
        &= \arg \min_{g\in\G} \tfrac{1}{4} \sum_{s=1}^{t-1} \lVert g(x_s) + \hat g_s(x_s) - 2 \varphi_s \rVert^2_\calH + \lambda \lVert g \rVert^2_\G + \tfrac{1}{4} \lVert g(x_t) \rVert^2_\calH \\
        &= \arg \min_{g\in\G} \sum_{s=1}^{t-1} \lVert g(x_s) + \hat g_s(x_s) - 2 \varphi_s \rVert^2_\calH + 4 \lambda \lVert g \rVert^2_\G + \lVert g(x_t) \rVert^2_\calH \,.
    \end{align*}
    Thus the function \(\hat g_t\) aims to estimate \( 2 \varphi_s - \hat g_s(x_s)\) with a regularisation parameter \(4\lambda\). We apply Lemma \ref{theorem:regret vawk} and bound \( 2 \varphi_s - \hat g_s(x_s)\) for all \(s\in\llbracket T \rrbracket\),
    \begin{equation*}
        \lVert 2 \varphi_s - \hat g_s(x_s) \rVert_\calH \le 2 \lVert \varphi_s \rVert + \lVert \hat g_s(x_s) \rVert \le 2 + \lVert \phi(x_s) \rVert \sup_{g\in\G} \lVert g \rVert_\G \le 2 + \kappa \sup_{g\in\G} \lVert g \rVert_\G =: B \,.
    \end{equation*}
    We get
    \begin{equation*}
        \sum_{t=1}^T \lVert 2 \hat g_t(x_t) - 2 \varphi_t \rVert^2_\calH \le B^2 \log \left( e + \tfrac{e\kappa^2 T}{4\lambda} \right) \deff(4\lambda) + \min_{g\in\G} \left[ \sum_{t=1}^T \lVert g(x_t) + \hat g_t(x_t) - 2 \varphi_t \rVert^2_\calH + 4 \lambda \lVert g \rVert^2_\G \right] \,.
    \end{equation*}
    We divide by 4 on both sides and obtain
    \begin{equation}
        \sum_{t=1}^T \lVert \hat g_t(x_t) - \varphi_t \rVert^2_\calH \le \tfrac{B^2}{4} \log \left( e + \tfrac{e\kappa^2 T}{4\lambda} \right) \deff(4\lambda) + \min_{g\in\G} \tilde L_T (g) \,.
    \end{equation}

    \textbf{Step 3: Applying \cite{vanderhoeven2023highprobability} to bound in high probability.}
    We apply Theorem 1 of \cite{vanderhoeven2023highprobability} to obtain with probability \(1-\delta\),

    \begin{align}
        &\sum_{t=1}^T \E_t [\Delta( f_t (x_t), y_t) - \Delta(f^*_t(x_t), y_t)] \nonumber \\
        &\le 2 \cD \sqrt{T} \left( \tfrac{B^2}{2} \deff(4\lambda)\log\left(e + \tfrac{e\kappa^2 T}{4\lambda} \right) + 2\gamma\log(\delta^{-1})
        + 2 \lambda \lVert g^* \rVert^2 + \sum_{t=1}^T \lVert g^*(x_t) - \E[\varphi_t|x_t] \rVert^2 \right)^{1/2}
    \end{align}
    where \(\gamma\) is defined in Eq.~\eqref{eq:gamma} and \( B = 2 + \kappa \sup_{g\in\G} \lVert g \rVert_\G\).
    We bound the effective dimension by \( \deff(\lambda) \le \tfrac{\kappa^2 T}{\lambda} \) and obtain
    \begin{equation*}
        \sum_{t=1}^T \E_t [\Delta(f_t (x_t), y_t) - \Delta(f^*_t(x_t), y_t) ] \le 2\cD \sqrt{T} \left( \tfrac{B^2\kappa^2T}{8\lambda} \log\left(e + \tfrac{e\kappa^2 T}{4\lambda} \right) + 2\gamma\log(\delta^{-1})
        + 2 \lambda \lVert g^* \rVert^2 \right)^{1/2} \,.
    \end{equation*}
    By choosing \( \lambda=\sqrt{T} \) we get
    \begin{multline*}
        \sum_{t=1}^T \E_t [\Delta(f_t (x_t), y_t) - \Delta(f^*_t(x_t), y_t) ] \\
        \le 2\cD \sqrt{T} \left( \tfrac{B^2\kappa^2 \sqrt{T}}{8} \log\left(e + \tfrac{e\kappa^2 \sqrt{T}}{4} \right) + 2\gamma\log(\delta^{-1})
        + 2 \sqrt{T} \lVert g^* \rVert^2 \right)^{1/2} \,.
    \end{multline*}
\end{proof}

\subsection{Aggregating into a Unique Predictor}\label{section:appendix aggregation}

In this section, we build a unique predictor and we consider that the data \( (x_t, y_t)_t \) are i.i.d., in order to be as close as possible from the supervised learning framework. As the output space \(\Z\) does not have a vectorial structure, we cannot aggregate the \((f_t)_t\). Indeed, let \(f, f' : \X \to \Z\), there is no guarantee that \(f+ f'\) takes values in \(\Z\) as well.
Therefore we build a unique predictor \(\bar f_T \) from the \( T \) already computed feature predictors \( (\hat g_t)_t \). We build an aggregate \( \bar g_T : \X \to \calH \) defined as the average of the~\(\hat g_t\),
\begin{equation}
    \bar g_T = \tfrac{1}{T} \sum_{t=1}^T \hat g_t 
\end{equation}
and define the predictor \( \bar f_T : \X \to \Z \) with respect to \( \bar g_T \) as follows
\begin{equation}\label{eq:bar fT}
    \bar f_T(x) = \arg\min_{z\in\Z} \langle \psi(z), \bar g_T(x) \rangle_\calH \,.
\end{equation}
We bound the excess risk of the predictor \(\bar f_T\).

\begin{restatable}[Excess Risk]{thm}{ThmExcessRiskAggregate}
\label{theorem:excess risk aggregate}
    Let \( f^* : \X \to \Z \) be a measurable function and \( \gamma = 8 (\kappa \sup \lVert g \rVert_\G + 1)^2 \), with \(\lambda = \sqrt{T}\) and assuming there is a function \(g^*\in\G\) such that \( \E_x \E_y [ \lVert g^*(x) - \E [\varphi(y)|x] \rVert^2 | x ] = 0 \). Let \(\bar f_T\) be defined as in \eqref{eq:bar fT} and let \( \delta \in (0,1] \). With probability \(1-\delta\), we have
    \begin{align*}
        \E_{x,y} [\Delta(&\bar f_T(x), y) - \Delta(f^*(x), y)] \\
        &\le 2\cD T^{-1/4} \sqrt{ \tfrac{B^2\kappa^2}{8} \log\left(e + \tfrac{e\kappa^2 \sqrt{T}}{4} \right)
        + 2 \lVert g^* \rVert^2 } + 2\cD T^{-1/2} \sqrt{2\gamma\log(\delta^{-1})} \\
        &= O\left(T^{-1/4} \sqrt{\log (T)} + T^{-1/2} \sqrt{\log (\delta^{-1})} \right) \,.
    \end{align*}
\end{restatable}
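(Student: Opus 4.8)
The plan is to follow the skeleton of Theorem~\ref{theorem:Avg cumulative risk}, but to replace the $T$ online predictors by the single aggregate $\bar g_T = \frac1T\sum_t \hat g_t$, and to pass from the in-sample cumulative risk to the out-of-sample excess risk using that the data are i.i.d.

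\textbf{Step 1: a population comparison inequality.} Conditioning on the training sample, $\bar g_T$ and hence $\bar f_T$ are fixed measurable maps, and I would prove the population analogue of Lemma~\ref{theorem:CI cumulative risk} for a fresh pair $(x,y)$:
\[
    \E_{x,y}[\Delta(\bar f_T(x), y) - \Delta(f^*(x), y)] \le 2\cD \sqrt{\E_x[\lVert \bar g_T(x) - \E[\varphi(y)\mid x]\rVert^2_\calH]}.
\]
The derivation copies Lemma~\ref{theorem:CI cumulative risk} verbatim: write $\Delta = \langle\psi,\varphi\rangle$, add and subtract $\bar g_T(x)$, discard the $\bar g_T$ inner product using the optimality $\bar f_T(x)\in\arg\min_z\langle\psi(z),\bar g_T(x)\rangle$, annihilate the remaining cross term via $\E_y[\varphi(y) - \E[\varphi(y)\mid x]\mid x] = 0$, and finish with Cauchy--Schwarz and Jensen. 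The assumption $\E_x\E_y[\lVert g^*(x) - \E[\varphi(y)\mid x]\rVert^2\mid x] = 0$ identifies $g^*(x) = \E[\varphi(y)\mid x]$ almost surely.

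\textbf{Step 2: aggregating in the feature space.} Because $\lVert\cdot\rVert^2_\calH$ is convex and $\bar g_T$ is an average, Jensen gives
\[
    \E_x[\lVert \bar g_T(x) - g^*(x)\rVert^2_\calH] \le \frac1T \sum_{t=1}^T \E_x[\lVert \hat g_t(x) - g^*(x)\rVert^2_\calH].
\]
This is exactly where the vectorial structure of $\calH$ is used, and it explains why one aggregates the feature predictors $\hat g_t$ rather than the structured predictors $f_t$, the latter living in a space $\Z$ with no convex structure.

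\textbf{Step 3: high-probability online-to-batch.} The right-hand side above is the averaged \emph{population} risk $\frac1T\sum_t(\mathcal L(\hat g_t) - \mathcal L(g^*))$ with $\mathcal L(g) = \E_{x,y}\lVert g(x) - \varphi(y)\rVert^2_\calH$, and it must be tied to the in-sample quantity $\frac1T\sum_t\lVert \hat g_t(x_t) - g^*(x_t)\rVert^2_\calH$ that the proof of Theorem~\ref{theorem:Avg cumulative risk} already bounds with high probability (via the shifted-\textit{KAAR} regret of Lemma~\ref{theorem:regret vawk} fed into Theorem~1 of \cite{vanderhoeven2023highprobability}). I would invoke the same high-probability machinery, applied to the $\eta$-exp-concave loss $\ell_t(g) = \lVert g(x_t) - \varphi_t\rVert^2_\calH$, to control the population excess risk up to the additive $2\gamma\log(\delta^{-1})$ term, and then combine Steps~1--3 with $\lambda = \sqrt T$ and $\deff(\lambda)\le\kappa^2 T/\lambda$ to obtain the stated bound. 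The delicate point is precisely here: the forward (Vovk--Azoury--Warmuth) regularization $\tfrac14\lVert g(x_t)\rVert^2_\calH$ makes $\hat g_t$ depend on the current input $x_t$, so the online regret controls the in-sample error at $x_t$ whereas Step~2 requires the out-of-sample risk at a fresh point; bridging these i.i.d. quantities at the level of a high-probability (rather than merely in-expectation) statement is the technical heart of the argument, and is exactly what the exp-concavity-based tools of \cite{vanderhoeven2023highprobability}, together with the shifted-loss construction of \eqref{eq:hat gt shifted}, are introduced to handle.
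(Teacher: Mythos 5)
Your proposal is correct and follows essentially the same route as the paper's proof: the population comparison inequality obtained by replaying Lemma~\ref{theorem:CI cumulative risk} on a fresh pair $(x,y)$, aggregation of the feature predictors in $\calH$ via convexity, and the high-probability online-to-batch conversion of Theorem~1 of \cite{vanderhoeven2023highprobability} applied to the shifted-\textit{KAAR} regret bound (Step~2 of the proof of Theorem~\ref{theorem:Avg cumulative risk}), with $\lambda=\sqrt T$ and $\deff(\lambda)\le\kappa^2T/\lambda$. Your identification of the in-sample/out-of-sample bridge as the role of the exp-concavity machinery and the shifted losses of \eqref{eq:hat gt shifted} is exactly how the paper closes the argument.
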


\begin{proof}
    We follow the proof of Lemma \ref{theorem:CI cumulative risk}, with the difference that at the end we apply Jensen's inequality with respect to the expectation to obtain a comparison inequality
    \begin{align*}
        \E_{x,y} &[\Delta(\bar f_T(x), y) - \Delta(f^*(x), y)] \\
        &\le \E_{x,y} [ \langle \psi(\bar f_T(x)) - \psi(f^*(x)), \bar g_T(x) - \varphi(y) \rangle ] \\
        &= \E_x \E_y [ \langle \psi(\bar f_T(x)) - \psi(f^*(x)), \bar g_T(x) - \E [\varphi(y)|x] \rangle | x ] \\
        &+ \E_x \E_y [ \langle \psi(\bar f_T(x)) - \psi(f^*(x)), \E [\varphi(y)|x] - \varphi(y) \rangle | x ] \\
        &= \E_x \E_y [ \langle \psi(\bar f_T(x)) - \psi(f^*(x)), \bar g_T(x) - \E [\varphi(y)|x] \rangle | x ] \\
        &\le 2 \cD \E_x \E_y [ \lVert \bar g_T(x) - \E [\varphi(y)|x] \rVert | x ] \\
        &\le 2 \cD \sqrt{ \E_x \E_y [ \lVert \bar g_T(x) - \E [\varphi(y)|x] \rVert^2 | x ]} \,.
    \end{align*}
    We now apply Theorem 1 from \cite{vanderhoeven2023highprobability} using Step 2 of the proof of Theorem \ref{theorem:Avg cumulative risk}.
    \begin{align*}
        \E_{x,y} &[\Delta(\bar f_T(x), y) - \Delta(f^*(x), y)] \\
        &= 2 \cD \left( \E_x \E_y [ \lVert \bar g_T(x) - \E [\varphi(y)|x] \rVert^2 - \lVert g^*(x) - \E [\varphi(y)|x] \rVert^2 + \lVert g^*(x) - \E [\varphi(y)|x] \rVert^2 | x ] \right)^{1/2} \\
        &= 2 \cD \left( \E_x \E_y [ \lVert \bar g_T(x) - \varphi(y) \rVert^2 - \lVert g^*(x) - \varphi(y) \rVert^2 + \lVert g^*(x) - \E [\varphi(y)|x] \rVert^2 | x ] \right)^{1/2} \\
        &= 2 \cD \left( \E_{x,y} [ \lVert \bar g_T(x) - \varphi(y) \rVert^2 - \lVert g^*(x) - \varphi(y) \rVert^2 ] + \E_x \E_y [ \lVert g^*(x) - \E [\varphi(y)|x] \rVert^2 | x ] \right)^{1/2} \\
        &\le 2\cD \left( \tfrac{ \tfrac{B^2}{2} \deff(4\lambda) \log\left(e+\tfrac{e\kappa^2 T}{4\lambda}\right) + 2\lambda\lVert g^* \rVert^2_\G + 2 \gamma \log(\delta^{-1}) }{T} + \E_x \E_y [ \lVert g^*(x) - \E [\varphi(y)|x] \rVert^2 | x ] \right)^{1/2} \\
        &= 2\cD \left( \dfrac{ \tfrac{B^2}{2} \deff(4\lambda) \log\left(e+\tfrac{e\kappa^2 T}{4\lambda}\right) + 2\lambda\lVert g^* \rVert^2_\G + 2 \gamma \log(\delta^{-1}) }{T} \right)^{1/2}
    \end{align*}
    where \( B= 2 + \kappa \sup_{g\in\G} \lVert g \rVert_\G \).
    Upper-bounding the effective dimension by \( \deff(4\lambda) \le \tfrac{\kappa^2 T}{4\lambda} \) and choosing \( \lambda = \sqrt{T} \) gives the desired result.
\end{proof}

\section{Proof of Theorem \ref{theorem:expected regret non stationary environment}: Dealing with Non-Stationary Data in Expectation}\label{section:appendix non stationary}

We define \(m\in\mathbb N\) such that \(1=t_1 \le t_2 \le \dots \le t_{m+1} = T+1\) and such that 
\begin{equation}\label{eq:definition m}
    \sum_{t=t_i +1}^{t_{i+1} - 1} \lVert g_t^* - g^*_{t-1} \rVert_\G \le \dfrac{V_\G}{m} \quad \text{for all } i\in\llbracket 1, m \rrbracket .
\end{equation}
That is to say that the variation of \((g_t^*)_t\) is small between \(t_i\) and \(t_{i+1}-1\). Note that the sum \( \smash{\sum_{i=1}^m \sum_{t=t_i +1}^{t_{i+1} - 1} \lVert g_t^* - g^*_{t-1} \rVert_\G} \) does not take into account the norms of \( \lVert g^*_{t_i} - g^*_{t_i-1} \rVert_\G \) for \( i\in\llbracket 2, m\rrbracket \).
As in \cite{raj2020non}, we define an approximation \( \smash{(g^*_{t_i:t_{i+1}})_{i=1}^m \in \G^m} \) of \( (g_t^*)_t \) with only \(m\) changes through the \(T\) time steps that occur between \(t_i\) and \(t_{i+1}\). It is an hypothetical forecaster with \(m\) restart times. Formally we define
\begin{equation}\label{eq:discrete approx of ht}
    \bar g_{t_i:t_{i+1}} := \arg\min_{g\in\G} \sum_{t=t_i}^{t_{i+1}-1} \lVert g - g_t^* \rVert_\G^2 = \dfrac{1}{t_{i+1} - t_i} \sum_{t=t_i}^{t_{i+1}-1} g^*_t
\end{equation}
and by \(\bar g_t\) we denote \(\bar g_{t_i:t_{i+1}}\) for all \(t\in\llbracket t_i, t_{i+1}-1 \rrbracket\).

We bound the dynamic regret of the \textit{KAAR} estimator, see Proposition \ref{theorem:dynamic regret vawk}.
It is expressed with respect to the time dependent effective dimension \( \deff(\lambda, s-r) \) defined as
\begin{equation}
    \deff(\lambda, s-r) := Tr( K_{s-r, s-r} (K_{s-r, s-r} + \lambda I )^{-1}) \quad \forall \lambda>0,
\end{equation}
where \( K_{s-r, s-r} \in \R^{(s-r-1)\times(s-r-1)} \) is defined by \( (K_{s-r, s-r})_{ij} = k(x_{r+i-1}, x_{r+j-1}) \).

\begin{restatable}[Dynamic Regret of \textit{KAAR}]{prop}{}
\label{theorem:dynamic regret vawk}
    Let \( (\hat g_t)_t \) be defined as in \eqref{eq:hat gt dynamic} and let \( (g^*_t)_t \in \G^T \). Let \(m \in \mathbb N\) be defined as in \eqref{eq:definition m}. Let \( \eta = 1/2(\kappa \sup \lVert g \rVert_\G + 1)^2 \). Then we have
    \begin{multline*}
        \sum_{t=1}^T \lVert \hat g_t(x_t) - \varphi_t \rVert^2_\calH - \lVert g^*_t(x_t) - \varphi_t \rVert^2_\calH \\
        \le \tfrac{m \log T}{\eta} + \log \left( e + \tfrac{e \kappa^2 T}{\lambda} \right) \sum_{i=1}^m \deff(\lambda, t_{i+1}-t_i) + \lambda m \max_{t\in\llbracket T \rrbracket} \lVert g^*_t \rVert^2_\G + \tfrac{ 4 \kappa V_\G T}{m}
        =: R_T(\lambda, m) \,.
    \end{multline*}
\end{restatable}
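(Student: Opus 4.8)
The plan is to split the cumulative feature regret over the $m$ blocks $[t_i,t_{i+1}-1]$ fixed by \eqref{eq:definition m}, and on each block to insert two intermediate comparators: the restart expert $\hat g_{t_i:t}$ of \eqref{eq:hat gt restart} and the block average $\bar g_t = \bar g_{t_i:t_{i+1}}$ of \eqref{eq:discrete approx of ht}. Writing $\ell_t(g) = \lVert g(x_t) - \varphi_t \rVert^2_\calH$, I would use for every $t\in[t_i,t_{i+1}-1]$ the telescoping identity
\[\ell_t(\hat g_t) - \ell_t(g^*_t) = \big[\ell_t(\hat g_t) - \ell_t(\hat g_{t_i:t})\big] + \big[\ell_t(\hat g_{t_i:t}) - \ell_t(\bar g_t)\big] + \big[\ell_t(\bar g_t) - \ell_t(g^*_t)\big],\]
and bound the three resulting sums separately, each producing one group of terms in $R_T(\lambda,m)$.

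For the first sum (the aggregation error of \textit{SALAMI} against the restart experts), I would exploit exp-concavity. Since $\ell_t$ depends on $g$ only through the linear evaluation $g\mapsto g(x_t)$ and $v\mapsto\lVert v-\varphi_t\rVert^2_\calH$ is $\eta$-exp-concave on the relevant ball with $\eta = 1/2(\kappa\sup\lVert g\rVert_\G + 1)^2$, the composite loss is $\eta$-exp-concave in $g$, so that $\ell_t(\hat g_t) = \ell_t(\sum_s p_t(s)\hat g_{s:t}) \le -\tfrac1\eta \log\sum_s p_t(s) e^{-\eta \ell_t(\hat g_{s:t})}$. Feeding the sleeping losses $\tilde\ell_t$ of \eqref{eq:loss tilde} into the standard EWA potential telescoping, following \cite{GaillardStoltzEtAl2014}, I would derive the interval-regret guarantee $\sum_{t=t_i}^{t_{i+1}-1} \ell_t(\hat g_t) - \ell_t(\hat g_{t_i:t}) \le \tfrac{\log T}{\eta}$ per block; summing over the $m$ blocks yields the $\tfrac{m\log T}{\eta}$ term.

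For the second sum I would observe that on $[t_i,t_{i+1}-1]$ the estimator $\hat g_{t_i:t}$ is exactly the KAAR predictor of Lemma~\ref{theorem:regret vawk} run on the block data $(x_\tau,\varphi_\tau)_{\tau=t_i}^{t_{i+1}-1}$ with $B=1$ (since $\lVert\varphi_\tau\rVert_\calH\le1$). Applying that lemma and upper-bounding the resulting $\min_{g\in\G}$ by its value at $g=\bar g_t$, then using $\lVert\bar g_t\rVert_\G\le\max_t\lVert g^*_t\rVert_\G$ (Jensen) and monotonicity of the block effective dimension, delivers the $\deff(\lambda,t_{i+1}-t_i)$ sum and the $\lambda m\max_t\lVert g^*_t\rVert^2_\G$ term after summation. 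For the third sum (the block-approximation error) I would expand $\ell_t(\bar g_t)-\ell_t(g^*_t)$ as an inner product, apply Cauchy--Schwarz with $\lVert g(x_t)\rVert_\calH\le\kappa\lVert g\rVert_\G$, and control $\lVert\bar g_t - g^*_t\rVert_\G \le \tfrac{1}{t_{i+1}-t_i}\sum_\tau \lVert g^*_\tau - g^*_t\rVert_\G \le V_\G/m$ via the triangle inequality and \eqref{eq:definition m}; summing the per-step bound over all $T$ steps gives the $\tfrac{4\kappa V_\G T}{m}$ term.

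I expect the first sum to be the main obstacle: plain per-expert EWA guarantees hold over an expert's full active window $[t_i,T]$, whereas the decomposition needs regret confined to the single block $[t_i,t_{i+1}-1]$. Establishing this interval (strongly-adaptive) regret, so that each of the $m$ restarts contributes only $\tfrac{\log T}{\eta}$ rather than accumulating tail contributions, is the delicate step, and is precisely where the sleeping-experts construction that assigns the algorithm's own loss $\ell_t(\hat g_t)$ to inactive experts becomes essential.
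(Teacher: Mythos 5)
Your proposal is correct and follows essentially the same route as the paper's proof: the same three-term decomposition per block (aggregate vs.\ restart expert $\hat g_{t_i:t}$, restart expert vs.\ block average $\bar g_{t_i:t_{i+1}}$, block average vs.\ $g^*_t$), with the first sum handled by the EWA potential argument on the sleeping losses $\tilde\ell_t$ exactly as in the paper's Step~2, the second by the block-restarted version of Lemma~\ref{theorem:regret vawk} with the comparator instantiated at $\bar g_{t_i:t_{i+1}}$ and Jensen's inequality for $\lVert\bar g_{t_i:t_{i+1}}\rVert_\G$, and the third by Cauchy--Schwarz plus the intra-block variation bound $V_\G/m$ from \eqref{eq:definition m}. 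The ``delicate step'' you flag (regret confined to $[t_i,t_{i+1}-1]$ rather than $[t_i,T]$) is indeed resolved precisely by the device you name: assigning the algorithm's own loss to inactive experts makes the potential argument valid for every truncation point $s_2$, which is the content of the paper's Step~2.
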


\begin{proof}
    \textbf{Step 1: We add two intermediary terms.}
    We introduce two new terms in the sum and bound the differences separately.
    \begin{align}\label{eq:dynamic regret kaar sum}
        \sum_{t=1}^T \lVert \hat g_t(x_t) - \varphi_t \rVert^2_\calH - \lVert g^*_t(x_t) - &\varphi_t \rVert^2_\calH 
         = \sum_{i=1}^m \sum_{t=t_i}^{t_{i+1}-1} \lVert \hat g_t(x_t) - \varphi_t \rVert^2_\calH - \lVert g^*_t(x_t) - \varphi_t \rVert^2_\calH \nonumber \\
        = \sum_{i=1}^m \sum_{t=t_i}^{t_{i+1}-1} &\lVert \hat g_t(x_t) - \varphi_t \rVert^2_\calH -\lVert \hat g_{t_i:t}(x_t) - \varphi_t \rVert^2_\calH \\
        + &\lVert \hat g_{t_i:t}(x_t) - \varphi_t \rVert^2_\calH - \lVert \bar g_{t_i:t_{i+1}}(x_t) - \varphi_t \rVert^2_\calH \nonumber \\
        + &\lVert \bar g_{t_i:t_{i+1}}(x_t) - \varphi_t \rVert^2_\calH - \lVert g^*_t(x_t) - \varphi_t \rVert^2_\calH \nonumber
    \end{align}

    \textbf{Step 2: Bounding the first difference.} For all \(k^*\in\llbracket T \rrbracket\) and all \(s_2 \in \llbracket T \rrbracket\), we prove that
    \begin{equation}
        \sum_{t=1}^{s_2} \left( \ell_t\left( \sum_{k=1}^K p_t(k) \hat g_{k:t} \right) - \ell_t(\hat g_{k^*:t}) \right)\mathbbm{1}[k^*\le t] \le \dfrac{\log K}{\eta} \,.
    \end{equation}
    The proof is based on the proof of EWA applied to \(\tilde \ell_t\). Let \(W_t\) be the normalisation constant of \(w_t\).
    \begin{align*}
        W_{s_2 +1} &= \sum_{k=1}^K \exp \left( -\eta \sum_{s=1}^{s_2} \tilde \ell_s(k) \right) \\
        &= \sum_{k=1}^K \exp \left( -\eta \sum_{s=1}^{s_2-1} \tilde \ell_s(k) \right) \exp(-\eta \tilde \ell_{s_2} (k) ) \\
        &= W_{s_2} \sum_{k=1}^K w_{s_2}(k) \exp(-\eta \tilde \ell_{s_2} (k) ) \\
        &\le W_{s_2} \exp \left( -\eta \ell_{s_2} \left( \sum_{k=1}^K w_{s_2}(k) \tilde g_{k:s_2} \right) \right)
    \end{align*}
    where the inequality comes from Jensen's inequality and $\eta$-exp-concavity of \(\ell_{s_2}\), and where we define
    \begin{equation*}
        \tilde g_{k:t} =
        \begin{cases}
            \hat g_{k:t} \text{ if } k\le t \\
            \hat g_t \text{ if } k > t
        \end{cases} .
    \end{equation*}
    Now note that
    \begin{align*}
        \sum_{k=1}^K w_{s_2}(k) \tilde g_{k:s_2} &= \sum_{k\le s_2} w_{s_2}(k) \hat g_{k:s_2} + \sum_{k> s_2} w_{s_2}(k) \hat g_{s_2} \\
        &= \left( \sum_{k\le s_2} w_{s_2}(k) \right) \sum_{k\le s_2} p_{s_2}(k) \hat g_{k:s_2} + \sum_{k > s_2} w_{s_2}(k) \hat g_{s_2} \\
        &= \hat g_{s_2} \left( \sum_{k\le s_2} w_{s_2}(k) + \sum_{k > s_2} w_{s_2}(k) \right) \\
        &= \hat g_{s_2} .
    \end{align*}
    Thus by induction we obtain
    \begin{equation*}
        W_{s_2 +1} \le W_1 \exp \left( -\eta \sum_{s=1}^{s_2} \ell_s(\hat g_s) \right) \le K \exp \left( -\eta \sum_{s=1}^{s_2} \ell_s(\hat g_s) \right)
    \end{equation*}
    where the right inequality is obtained by choosing to initialise \(w_1\) as the uniform probability over the \(K\) experts. We now compute a lower bound of \(W_{s_2 +1}\),
    \begin{align*}
        W_{s_2 +1} &= \sum_{k=1}^K \exp \left( -\eta \sum_{s=1}^{s_2} \tilde \ell_s(k) \right) \\
        &\ge \exp \left( -\eta \sum_{s=1}^{s_2} \tilde \ell_s(k^*) \right) \\
        &= \exp \left( -\eta \sum_{s=1}^{s_2} \ell_s(\hat g_{k^*:s}) \mathbbm 1 [k^*\le s] + \ell_s(\hat g_s) \mathbbm 1 [k^* > s] \right) .
    \end{align*}
    By taking the log we obtain the desired result.
    
    \textbf{Step 3: Bounding the second difference.}
    We have that
    \begin{align*}
        \sum_{t=t_i}^{t_{i+1}-1} &\lVert \hat g_{t_i:t}(x_t) - \varphi_t \rVert^2_\calH - \lVert \bar g_{t_i:t_{i+1}}(x_t) - \varphi_t \rVert^2_\calH \\
        &\le \deff(\lambda, t_{i+1}-t_i) \log \left( e + \tfrac{e\kappa^2 (t_{i+1}-t_i)}{\lambda}\right) + \lambda \lVert \bar g_{t_i:t_{i+1}} \rVert^2_\G .
    \end{align*}
    This is a generalisation of Theorem \ref{theorem:regret vawk} with a late starting point.

    \textbf{Step 4: Bounding the third difference.}
    We bound the third term of the sum using the following derivation.
    \begin{align*}
        &\lVert \bar g_{t_i:t_{i+1}}(x_t) - \varphi_t \rVert^2_\calH - \lVert g^*_t(x_t) - \varphi_t \rVert^2_\calH \\
        &= - \left\lVert g^*_t(x_t) - \bar g_{t_i:t_{i+1}}(x_t) \right\rVert^2_\calH + 2 \left\langle \varphi_t - \bar g_{t_i:t_{i+1}}(x_t), g^*_t(x_t) - \bar g_{t_i:t_{i+1}}(x_t) \right\rangle_\calH \\
        &\le 2 \left\lVert \varphi_t - \bar g_{t_i:t_{i+1}}(x_t) \right\rVert_\calH \left\lVert g^*_t(x_t) - \bar g_{t_i:t_{i+1}}(x_t) \right\rVert_\calH \\
        &\le 4 \kappa \left\lVert g^*_t - \bar g_{t_i:t_{i+1}} \right\rVert_\G
    \end{align*}
    We bound the norm by
    \begin{align*}
        \lVert g^*_t - \bar g_{t_i:t_{i+1}} \rVert_\G &= \left\lVert g^*_t - \tfrac{1}{t_{i+1} - t_i} \sum_{s=t_i}^{t_{i+1}-1} \bar g_s \right\rVert_\G \\
        &= \left\lVert \tfrac{1}{t_{i+1} - t_i} \sum_{s=t_i}^{t_{i+1}-1} g^*_t - \bar g_s \right\rVert_\G \\
        &\le \tfrac{1}{t_{i+1} - t_i} \sum_{s=t_i}^{t_{i+1}-1} \lVert g^*_t - \bar g_s \rVert_\G \\
        &\le \max_{s\in\llbracket t_i, t_{i+1}-1 \rrbracket} \lVert g^*_t - \bar g_s \rVert_\G
    \end{align*}
    where the first inequality comes from Jensen's inequality. We now separate the max in two terms at time step \(t\), and use a telescopic sum.
    \begin{align*}
        \max_{s\in\llbracket t_i, t_{i+1}-1 \rrbracket} \lVert g^*_t - \bar g_s \rVert_\G &\le \max_{s\in\llbracket t_i, t-1 \rrbracket} \lVert g^*_t - \bar g_s \rVert_\G + \max_{s\in\llbracket t+1, t_{i+1}-1 \rrbracket} \lVert g^*_t - \bar g_s \rVert_\G \\
        &= \max_{s\in\llbracket t_i, t-1 \rrbracket} \left\lVert \sum_{r=s+1}^t g^*_r- \bar g_{r-1} \right\rVert + \max_{s\in\llbracket t+1, t_{i+1}-1 \rrbracket} \left\lVert \sum_{r=t+1}^s g^*_r- \bar g_{r-1} \right\rVert \\
        &\le \max_{s\in\llbracket t_i, t-1 \rrbracket} \sum_{r=s+1}^t \lVert g^*_r- \bar g_{r-1} \rVert + \max_{s\in\llbracket t+1, t_{i+1}-1 \rrbracket} \sum_{r=t+1}^s \lVert g^*_r- \bar g_{r-1} \rVert \\
        &= \sum_{r=t_i+1}^{t_{i+1}-1} \lVert g^*_r- \bar g_{r-1} \rVert \\
        &\le V_\G/m
    \end{align*}
    We may now sum to obtain
    \begin{align*}
        4\kappa\sum_{t=1}^T \lVert g^*_t - \bar g_t \rVert_\G &= 4\kappa\sum_{i=1}^m \sum_{t=t_i}^{t_{i+1}-1} \lVert g^*_t - \bar g_{t_i:t_{i+1}} \rVert_\G \\
        &\le 4\kappa\sum_{i=1}^m \sum_{t=t_i}^{t_{i+1}-1} V_\G/m \\
        &= 4\kappa V_\G/m \sum_{i=1}^m t_{i+1} - t_i \\
        &= 4\kappa V_\G T /m .
    \end{align*}

    \textbf{Step 5: Putting everything together.} We obtain
    \begin{align*}
        &\sum_{t=1}^T \lVert \hat g_t(x_t) - \varphi_t \rVert^2_\calH - \lVert g^*_t(x_t) - \varphi_t \rVert^2_\calH \\
        &\le \tfrac{m \log T}{\eta} + \log \left( e + \tfrac{e \kappa^2 T}{\lambda} \right) \sum_{i=1}^m \deff(\lambda, t_{i+1}-t_i) + \lambda \sum_{i=1}^m \lVert \bar g_{t_i:t_{i+1}} \rVert^2 + \tfrac{4 \kappa V_\G T}{m} .
    \end{align*}
    We conclude by noting that
    \begin{equation*}
        \left\lVert \bar g_{t_i:t_{i+1}} \right\rVert^2_\G = \left\lVert \tfrac{1}{t_{i+1}-t_i} \sum_{t=t_i}^{t_{i+1}-1} g^*_t \right\rVert^2_\G \le \tfrac{1}{t_{i+1}-t_i} \sum_{t=t_i}^{t_{i+1}-1} \lVert g^*_t \rVert^2_\G \le \max_{t\in\llbracket t_i, t_{i+1}-1 \rrbracket} \lVert g^*_t \rVert^2_\G
    \end{equation*}
    where the first inequality is by Jensen's inequality.
\end{proof}

We recall and prove our main result from Section \ref{section:dynamic regret}.
\ThmExpectedRegretNonStationaryEnvironment*

Precisely if \( \lambda = V_\G^{-1/3} T^{1/3} \), we have 

\begin{equation*}
    \E[R_T] \le 2 \cD \sqrt{T} \left( \begin{aligned} &\tfrac{\left\lceil V_\G^{2/3} T^{1/3} \kappa^{-2/3} \right\rceil \log T}{\eta} + 4 \kappa^{5/3} V_\G^{1/3} T^{2/3} \\
    &+ ( V_\G^{2/3} T^{4/3} \kappa^{-2/3} + T )^{1/2} \Big( \kappa^2 \log \left(e + e \kappa^2 ( V_\G^{2/3} T^{4/3} \kappa^{-2/3} + T )^{1/2} \right) + \max_{t\in\llbracket T \rrbracket} \lVert g^*_t \rVert^2_\G \Big) \end{aligned} \right)^{1/2} ,
\end{equation*}
and if \( \lambda = V_0^{-1/2} T^{1/2} \), we have 
\begin{equation*}
    \E[R_T] \le 2 \cD \sqrt{T} \sqrt{\dfrac{V_0 \log T}{\eta} + \log \left( e + e \kappa^2 \sqrt{T V_0} \right) \kappa^2 \sqrt{T V_0} + \sqrt{T V_0} \max_{i\in \llbracket V_0 \rrbracket} \lVert \bar g_{t_i:t_{i+1}} \rVert^2_\G} \,.
\end{equation*}

\begin{proof}
    \textbf{Step 1: Controlling the regret of \((f_t)_t\) by the regret of \((\hat g_t)_t\).}
    From Lemma \ref{theorem:CI expectancy}, we have
    \begin{equation*}
        \E \left[ \sum_{t=1}^T \Delta(f_t(x_t), y_t) - \Delta(f^*_t(x_t), y_t) \right] \le 2 \cD \sqrt{T} \sqrt{ \sum_{t=1}^T \E [ \lVert \hat g_t(x_t) - \E [\varphi_t|x_t] \rVert^2_\calH ] } \,.
    \end{equation*}
    We add and subtract a term and then follow the proof of Theorem \ref{theorem:expectancy regret bound},
    \begin{align*}
        &\E \left[ \sum_{t=1}^T \Delta(f_t(x_t), y_t) - \Delta(f^*_t(x_t), y_t) \right] \\
        &\le 2 \cD \sqrt{T} \sqrt{ \sum_{t=1}^T \E [ \lVert \hat g_t(x_t) - \E [\varphi_t|x_t] \rVert^2_\calH - \lVert g^*_t(x_t) - \E [\varphi_t|x_t] \rVert^2_\calH + \lVert g^*_t(x_t) - \E [\varphi_t|x_t] \rVert^2_\calH ] } \,.
    \end{align*}
    We now apply Proposition \ref{theorem:dynamic regret vawk} to obtain
    \begin{equation}
        \E [ R_T ] \le 2 \cD \sqrt{T} \sqrt{ R_T(\lambda, m) + \E \left[ \sum_{t=1}^T \lVert g^*_t(x_t) - \E[\varphi_t | x_t] \rVert^2_\calH \right] } 
    \end{equation}
    where \(R_T(\lambda, m)\) is as in Proposition \ref{theorem:dynamic regret vawk} and is defined as
    \begin{equation*}
        R_T(\lambda, m) = \tfrac{m \log T}{\eta} + \log \left( e + \tfrac{e \kappa^2 T}{\lambda} \right) \sum_{i=1}^m \deff(\lambda, t_{i+1}-t_i) + \lambda m \max_{t\in\llbracket T \rrbracket} \lVert g^*_t \rVert^2_\G + \tfrac{ 4 \kappa V_\G T}{m} .
    \end{equation*}

    \textbf{Case 1: Continuous variations.}
    We bound the effective dimension by \( \deff(\lambda, t_{i+1} - t_i) \le \tfrac{\kappa^2 (t_{i+1} - t_i)}{\lambda} \). Thus we can bound the sum by \( \sum_{i=1}^m \deff(\lambda, t_{i+1}-t_i) \le \kappa^2 T / \lambda \) and obtain
    \begin{equation*}
        R_T(\lambda, m) \le \tfrac{m \log T}{\eta} + \log \left( e + \tfrac{e \kappa^2 T}{\lambda} \right) \tfrac{\kappa^2 T}{\lambda} + \lambda m \max_{t\in\llbracket T \rrbracket} \lVert g^*_t \rVert^2_\G + \tfrac{ 4 \kappa V_\G T}{m} .
    \end{equation*}
    We choose \(\lambda = \sqrt{T/m}\) and get
    \begin{equation*}
        R_T(\lambda, m) \le \tfrac{m \log T}{\eta} + \log ( e + e \kappa^2 \sqrt{Tm} ) \kappa^2 \sqrt{Tm} + \sqrt{Tm} \max_{t\in\llbracket T \rrbracket} \lVert g^*_t \rVert^2_\G + \tfrac{ 4 \kappa V_\G T}{m} .
    \end{equation*}
    We choose \(m = \left\lceil V_\G^{2/3} T^{1/3} \kappa^{-2/3} \right\rceil\) and get
    \begin{multline*}
        R_T(\lambda, m) \le \tfrac{\left\lceil V_\G^{2/3} T^{1/3} \kappa^{-2/3} \right\rceil \log T}{\eta} + 4 \kappa^{5/3} V_\G^{1/3} T^{2/3}  \\
        + ( V_\G^{2/3} T^{4/3} \kappa^{-2/3} + T )^{1/2} \left( \kappa^2 \log (e + e \kappa^2 ( V_\G^{2/3} T^{4/3} \kappa^{-2/3} + T )^{1/2}) + \max_{t\in\llbracket T \rrbracket} \lVert g^*_t \rVert^2_\G \right) .
    \end{multline*}
    We conclude by noting that \(V_\G = \|g_1^*\|_\G + \sum_{t=2}^T \lVert g_t^* - g^*_{t-1} \rVert_\G \le (2T-1) \sup \lVert g \rVert_\G \).

    \textbf{Case 2: Discrete variations.} In the case of discrete distributions data variations there is no more need to approximate the data distributions \((g^*_t)_t\) by the hypothetical forecasters \((\bar g_{t_i:t_{i+1}})_i\). Thus the third term of the sum in Eq. \eqref{eq:dynamic regret kaar sum} is not necessary. And we can replace \(R_T(\lambda, m)\) by \(R^0_T(\lambda, V_0)\) defined as
    \begin{equation*}
        R^0_T(\lambda, V_0) := \dfrac{V_0 \log T}{\eta} + \log \left( e + \tfrac{e \kappa^2 T}{\lambda} \right) \sum_{i=1}^{V_0} \deff(\lambda, t_{i+1} - t_i) + \lambda \sum_{i=1}^{V_0} \lVert \bar g_{t_i:t_{i+1}} \rVert^2_\G .
    \end{equation*}
    We bound the effective dimension by \( \deff(\lambda, t_{i+1} - t_i) \le \tfrac{\kappa^2 (t_{i+1} - t_i)}{\lambda} \). Thus we obtain \( \sum_{i=1}^{V_0} \deff(\lambda, t_{i+1}-t_i) \le \kappa^2 T / \lambda \) and
    \begin{equation}
        R^0_T(\lambda, V_0) \le \dfrac{V_0 \log T}{\eta} + \log \left( e + \tfrac{e \kappa^2 T}{\lambda} \right) \tfrac{\kappa^2 T}{\lambda} + \lambda V_0 \max_{i\in \llbracket V_0 \rrbracket} \lVert \bar g_{t_i:t_{i+1}} \rVert^2_\G .
    \end{equation}
    By choosing \( \lambda = \sqrt{T / V_0} \), we get
    \begin{equation}
        R^0_T(\lambda, V_0) \le \dfrac{V_0 \log T}{\eta} + \log \left( e + e \kappa^2 \sqrt{T V_0} \right) \kappa^2 \sqrt{T V_0} + \sqrt{T V_0} \max_{i\in \llbracket V_0 \rrbracket} \lVert \bar g_{t_i:t_{i+1}} \rVert^2_\G .
    \end{equation}
    Finally we bound the expected regret by
    \begin{equation*}
        \E[R_T] \le 2 \cD \sqrt{T} \sqrt{\dfrac{V_0 \log T}{\eta} + \log \left( e + e \kappa^2 \sqrt{T V_0} \right) \kappa^2 \sqrt{T V_0} + \sqrt{T V_0} \max_{i\in \llbracket V_0 \rrbracket} \lVert \bar g_{t_i:t_{i+1}} \rVert^2_\G} \,.
    \end{equation*}
    We now note that \(V_0\le T\) to conclude the proof.
\end{proof}

\subsection{Refined Regret Bounds}\label{section:appendix capacity condition}
\paragraph{Capacity Condition}
There exists \( \beta \in [0,1] \) and \( Q>0 \) for which
\begin{equation}
    \deff(\lambda) \le Q \left(\dfrac{T}{\lambda}\right)^{\beta} \quad, \forall \lambda>0 .
\end{equation}
When the kernel is bounded the condition above is always satisfied for \(\beta=1\). Indeed we can always bound the effective dimension by \( \deff(\lambda) \le \tfrac{\kappa^2 T}{\lambda} \). Moreover if the eigenvalues of the covariance operator \(C\) decay polynomially \( \sigma_i(C) \le c j^{-\mu} \), for \(c>0, \mu>1\) and \(j\in\mathbb N\), then the capacity condition is satisfied with \(Q=c\) and \( \beta = - 1 / \mu \). Using this bound we may derive a refined bound of the expectation of the dynamic regret.

Assuming that the capacity condition holds, and that there exists \((g^*_t)\) a sequence in \(\G\) such that \( \E \left[ \sum_{t=1}^T \lVert g^*_t(x_t) - \E[\varphi_t | x_t] \rVert^2_\calH \right] = 0 \). Then choosing \( \lambda = T^{\tfrac{\beta}{\beta + 1}} m^{\tfrac{-\beta}{\beta+1}} \) and \( m= \Big \lceil V_\G^{\tfrac{\beta + 1}{\beta +2}} T^{\tfrac{1}{\beta + 2}} \kappa^{\tfrac{-(\beta+1)}{\beta+2}} \Big \rceil \) leads to the following bound on the expected regret
\begin{equation}
    \E [ R_T ] = \tilde O \left( V_\G^{\tfrac{1}{2(\beta+2)}} T^{\tfrac{2\beta +3}{2(\beta+2)}} \right) .
\end{equation}

Indeed using the capacity condition, we bound the effective dimension by \( \deff(\lambda, t_{i+1} - t_i) \le Q \left(\tfrac{t_{i+1} - t_i}{\lambda}\right)^\beta \). Using Jensen's inequality, we then derive
\begin{align*}
    \sum_{i=1}^m \deff(\lambda, t_{i+1} - t_i) &\le Q \sum_{i=1}^m \left(\dfrac{t_{i+1} - t_i}{\lambda}\right)^\beta \\
    &= Q m \sum_{i=1}^m \dfrac{1}{m} \left(\dfrac{t_{i+1} - t_i}{\lambda}\right)^\beta \\
    &\le Q m \left( \sum_{i=1}^m \dfrac{t_{i+1} - t_i}{\lambda m}\right)^\beta \\
    &= Q m^{1-\beta} \left(\dfrac{T}{\lambda}\right)^\beta .
\end{align*}
We obtain
\begin{equation*}
    R_T(\lambda, m) \le \tfrac{m \log T}{\eta} + \log\left( e + \tfrac{e\kappa^2T}{\lambda}\right) Q m^{1-\beta} \left(\tfrac{T}{\lambda}\right)^\beta + \lambda m \max_{t \in \llbracket T \rrbracket} \lVert g^*_t \rVert^2_\G + \tfrac{4 \kappa V_\G T}{m} .
\end{equation*}
We choose \( \lambda = T^{\tfrac{\beta}{\beta + 1}} m^{\tfrac{-\beta}{\beta+1}} \), and obtain
\begin{equation*}
    R_T(\lambda, m) \le \tfrac{m \log T}{\eta} + m^{\tfrac{1}{\beta+1}} T^{\tfrac{\beta}{\beta+1}} \left( Q \log\left( e + e \kappa^2 m^{\tfrac{\beta}{\beta + 1}} T^{\tfrac{1}{\beta+1}} \right) + \max_{t \in \llbracket T \rrbracket} \lVert g^*_t \rVert^2_\G \right) + \tfrac{4 \kappa V_\G T}{m} .
\end{equation*}
We choose \( m= \Big \lceil V_\G^{\tfrac{\beta + 1}{\beta +2}} T^{\tfrac{1}{\beta + 2}} \kappa^{\tfrac{-(\beta+1)}{\beta+2}} \Big \rceil \) and obtain
\begin{multline*}
    R_T(\lambda, m) \le \tfrac{\Big \lceil V_\G^{\tfrac{\beta + 1}{\beta +2}} T^{\tfrac{1}{\beta + 2}} \kappa^{\tfrac{-(\beta+1)}{\beta+2}} \Big \rceil \log T}{\eta} + 4 \kappa^{\tfrac{2\beta +3}{\beta+2}} V_\G^{\tfrac{1}{\beta+2}} T^{\tfrac{\beta+1}{\beta+2}} \\
    + \Big( V_\G^{\tfrac{1}{\beta+2}} T^{\tfrac{\beta+1}{\beta+2}} \kappa^{\tfrac{-1}{\beta+2}} + T^{\tfrac{\beta}{\beta+1}} \Big) \left( Q \log\Big( e + e \kappa^{\tfrac{2\beta+3}{\beta+2}} V_\G^{\tfrac{\beta}{\beta + 2}} T^{\tfrac{2}{\beta+2}} + e T^{\tfrac{1}{\beta+1}} \Big) + \max_{t \in \llbracket T \rrbracket} \lVert g^*_t \rVert^2_\G \right) .
\end{multline*}
We note that the variation \(V_\G = \lVert g^*_1 \rVert_\G + \sum_{t=2}^T \lVert g^*_t - g^*_{t-1} \rVert_\G = O(T) \) to conclude.

\paragraph{Gaussian kernel}
Moreover, in the case of the Gaussian kernel we can bound the effective dimension by \( \smash{\deff(\lambda) \le \left( \log\left( \tfrac{T}{\lambda} \right) \right)^d} \) \citep{altschuler2019massively} where \(d\) is the dimension of the input space \(\X\), to obtain a smaller regret. By choosing \( m = \smash{\sqrt{V_\G T/(\lambda + 1)}} \), we obtain
\begin{equation}
    \E [ R_T ] = \tilde O \left( T^{3/4} V_\G^{1/4} (\lambda+1)^{1/4} \right) .
\end{equation}
In this particular case, if we choose a constant \(\lambda\), we retrieve the power \( T^{3/4} \) from the stationary case.

\section{Dealing with Non-Stationary Data in High Probability}\label{section:appendix non stationary high probability}

In this section we deal with non-stationary data distributions as in Section \ref{section:dynamic regret}, however we bound the cumulative risk in high probability instead of bounding the expected regret. As in Appendix \ref{section:appendix stochastic regret bounds}, we define the feature predictor on a shifted version of the losses in order to apply Theorem 1 of \cite{vanderhoeven2023highprobability}, see Algorithm \ref{algo:non stationary aggregation high proba}. We define the loss \(\ell_t\) as
\begin{equation}
    \ell_t(g) = \left\lVert \varphi_s - \tfrac{1}{2} \hat g_t(x_t) - \tfrac{1}{2} g(x_t) \right\rVert^2_\calH .
\end{equation}
We recall the definition of the filter \(\F_{t-1} = (x_1, y_1, \dots, x_{t-1}, y_{t-1}, x_t)\) and the notation \(\E_t [.]\) that stands for \( \E_{y_t} [. | \F_{t-1}] \).

\begin{algorithm}[H]
\caption{\textit{SALAMI -- Structured prediction ALgorithm with Aggregating MIxture} -- for the high probability setting} 
\label{algo:non stationary aggregation high proba}
\KwIn{\(\lambda>0\), exp-concavity constant \(\eta\) of  \((\ell_t)_t\), kernel \( k : \X \times \X \to \R \)}
\For{Each time step \(t\) in \(1 \dots T\)}{
    Get information \(x_t\in\X\)\\
    \For{Each expert \(s\) in \(1 \dots t\)}{
        Compute \( \hat g_{s:t} = \arg\min_{g\in\G} \sum_{s=s_1}^{t-1} \left\lVert \varphi_s - \tfrac{1}{2} \hat g_s(x_s) - \tfrac{1}{2} g(x_s) \right\rVert_\calH^2 + \lambda \lVert g \rVert_\G^2 + \tfrac{1}{4} \lVert g(x_t) \rVert^2_\calH \)
    }
    \For{Each expert \(s\) in \(1 \dots T\)}{
        Compute the auxiliary loss \( \tilde \ell_t(s) = 
        \begin{cases}
            \ell_t(\hat g_{s:t}) \text{ if } s\le t \\
            \ell_t(\hat g_t) \text{ if } s> t
        \end{cases}  \) \\
        Compute the probability using EWA \( w_t(s) \propto w_{t-1}(s) \exp(-\eta \tilde \ell_t(s)) \) \\
        Compute the probability \( p_t(s) \propto
        \begin{cases}
            w_t(s) \text{ if } s\le t \\
            0 \text{ if } s> t
        \end{cases} \)
    }
    Compute the aggregate predictor \( \hat g_t = \sum_{s=1}^T p_t(s) \hat g_{s:t} \) \\
    Compute the prediction \( \hat z_t = \hat f_t(x_t) = \arg\min_{z\in\Z} \langle \psi(z), \hat g_t(x_t) \rangle_\calH \) \\
    Observe ground truth \(y_t\in\Y\)\\
    Get loss \(\Delta(\hat{z_t}, y_t)\in\R\)
}
\end{algorithm}

The only difference with Appendix \ref{section:appendix non stationary} is the definition of the experts. We shift the losses by \( \tfrac{1}{2} \hat g_s(x_s) \),
\begin{equation}\label{eq:hat gt restart high proba}
    \hat g_{s_1:t} := \arg\min_{g\in\G} \sum_{s=s_1}^{t-1} \left\lVert \varphi_s - \tfrac{1}{2} \hat g_s(x_s) - \tfrac{1}{2} g(x_s) \right\rVert_\calH^2 + \lambda \lVert g \rVert_\G^2 + \tfrac{1}{4} \lVert g(x_t) \rVert^2_\calH
\end{equation}
where \(\hat g_s\) are the aggregate functions defined as in Appendix \ref{section:appendix non stationary}, see Algorithm \ref{algo:non stationary aggregation high proba}. The predictor \(\hat f_t\) and the prediction \(\hat z_t\) are computed as an optimisation problem in function of \(\hat g_t(x_t)\) 
\begin{equation}
    \hat z_t = \hat f_t(x_t) = \arg\min_{z\in\Z} \langle \psi(z), \hat g_t(x_t) \rangle_\calH .
\end{equation}

\paragraph{Analysis}
We now analyse the regret of this algorithm.

We introduce the following technical lemma, which bounds the cumulative risk of the feature predictors by the regret in high probability using Theorem 1 of \cite{vanderhoeven2023highprobability}.
\begin{restatable}[Dynamic Cumulative Risk of the Feature Predictors]{lmm}{}
\label{theorem:Dynamic Cumulative Risk of the Feature Predictors}
    Let \( (\hat g_t)_t \) be defined as in Algorithm \ref{algo:non stationary aggregation high proba}, \(m\in\mathbb N\) be defined as in \eqref{eq:definition m}, any sequence \( (g^*_t)_t \in \G^T \) and \( \delta \in (0,1] \). Let \( R_T(\lambda, m) \) be defined as follows 
    \begin{equation}
        \sum_{t=1}^T \lVert \hat g_t(x_t) - \varphi_t \rVert^2_\calH - \left\lVert \tfrac{1}{2} g^*_t(x_t) + \tfrac{1}{2} \hat g_t(x_t) - \varphi_t \right\rVert^2_\calH \le R_T(\lambda, m) .
    \end{equation}
    Let \( \gamma \) be defined as in Eq.~\eqref{eq:gamma}. Then with probability \( 1-\delta \)
    \begin{equation}
        \sum_{t=1}^T \E_t [ \lVert \hat g_t(x_t) - \varphi_t \rVert^2_\calH - \lVert g^*_t(x_t) - \varphi_t \rVert^2_\calH ] \le 2 R_T(\lambda, m) + 2 \gamma \log(\delta^{-1}) .
    \end{equation}
\end{restatable}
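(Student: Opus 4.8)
The plan is to obtain the statement as a direct instance of Theorem~1 of \cite{vanderhoeven2023highprobability}, exactly as in Step~3 of the proof of Theorem~\ref{theorem:Avg cumulative risk}, the only novelty being that the comparator is now the time-varying sequence $(g^*_t)_t$ rather than a single function. Throughout I work with the loss $\ell_t(g) = \lVert \varphi_t - g(x_t) \rVert^2_\calH$ and the \emph{shifted} (midpoint) loss $\ell_t^{1/2}(g) := \lVert \tfrac12 \hat g_t(x_t) + \tfrac12 g(x_t) - \varphi_t \rVert^2_\calH$, so that the hypothesis of the lemma reads $\sum_{t=1}^T \big( \ell_t(\hat g_t) - \ell_t^{1/2}(g^*_t) \big) \le R_T(\lambda, m)$, i.e.\ it bounds the cumulative \emph{shifted} regret of the predictors $(\hat g_t)_t$ against $(g^*_t)_t$. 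The lemma is then purely the concentration step that turns this pathwise bound into a high-probability bound on the cumulative conditional risk.

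First I would check that the hypotheses of \cite{vanderhoeven2023highprobability} hold with the constants already fixed in this section. Since $\lVert \varphi_t \rVert_\calH \le 1$ and $\lVert g(x_t) \rVert_\calH \le \kappa \sup \lVert g \rVert_\G$, the map $u \mapsto \lVert \varphi_t - u \rVert^2_\calH$ is $\eta$-exp-concave on the relevant ball with $\eta = 1/2(\kappa \sup \lVert g \rVert_\G + 1)^2$ and has range bounded by $m = 2(\kappa \sup \lVert g \rVert_\G + 1)^2$; hence $\gamma = 4\max(1/\eta, m) = 8(\kappa \sup \lVert g \rVert_\G + 1)^2$ as in Eq.~\eqref{eq:gamma}. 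Moreover the predictions $(\hat g_t)_t$ of Algorithm~\ref{algo:non stationary aggregation high proba} are produced sequentially from the observed shifted losses in Eq.~\eqref{eq:hat gt restart high proba}, and each comparator $g^*_t$ is $\F_{t-1}$-measurable (it only has to satisfy $g^*_t(x_t) = \E[\varphi_t \mid x_t]$, a function of $x_t$), so the setting is precisely the one covered by the theorem.

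The key algebraic fact that makes the conversion work, and which I would record explicitly, is the pointwise identity $2\big(\ell_t(\hat g_t) - \ell_t^{1/2}(g^*_t)\big) = \big(\ell_t(\hat g_t) - \ell_t(g^*_t)\big) + \tfrac12\lVert \hat g_t(x_t) - g^*_t(x_t)\rVert^2_\calH$, obtained by expanding the squared norms. Its relevance is that the extra quadratic term is $\F_{t-1}$-measurable and controls the conditional variance of $\ell_t(\hat g_t) - \ell_t(g^*_t)$ (the only randomness being $\varphi_t$ with $\lVert\varphi_t\rVert_\calH\le1$); this is exactly the variance budget that the exponential-supermartingale argument of \cite{vanderhoeven2023highprobability} trades against the deviation term $2\gamma\log(\delta^{-1})$. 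Since that argument operates round by round, it applies verbatim to a predictable time-varying comparator. Applying the theorem therefore yields, with probability $1-\delta$, the inequality $\sum_{t=1}^T \E_t[\ell_t(\hat g_t) - \ell_t(g^*_t)] \le 2\sum_{t=1}^T \big(\ell_t(\hat g_t) - \ell_t^{1/2}(g^*_t)\big) + 2\gamma\log(\delta^{-1})$, and substituting the hypothesis $\sum_{t=1}^T(\ell_t(\hat g_t) - \ell_t^{1/2}(g^*_t)) \le R_T(\lambda, m)$ gives the claim.

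The main obstacle I anticipate is not analytic but bookkeeping: making sure the shifted loss $\ell_t^{1/2}$ used here matches exactly the midpoint convention required by Theorem~1 of \cite{vanderhoeven2023highprobability} (in particular the factors $\tfrac12$ and $\tfrac14$ appearing in the definition of $\hat g_{s:t}$), and justifying that the theorem, although typically phrased for a fixed comparator, extends to the predictable sequence $(g^*_t)_t$ through its round-by-round martingale construction. Everything else is a direct substitution.
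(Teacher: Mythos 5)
Your proposal is correct and follows essentially the same route as the paper: both reduce the lemma to Theorem~1 of \cite{vanderhoeven2023highprobability} applied to the shifted losses $\tilde\ell_t(g) = \ell_t(\tfrac12 g + \tfrac12 \hat g_t)$, observing that $R_T(\lambda,m)$ is exactly the shifted regret against $(g^*_t)_t$ and that the round-by-round (super)martingale argument of that theorem carries over to a predictable, time-varying comparator. The only cosmetic difference is that the paper first passes via Jensen's inequality to comparator distributions $Q_t$ over $\G$ to match the phrasing of the cited theorem, whereas you apply it directly to point comparators and make explicit the identity $2\big(\ell_t(\hat g_t) - \ell_t^{1/2}(g^*_t)\big) = \big(\ell_t(\hat g_t) - \ell_t(g^*_t)\big) + \tfrac12\lVert \hat g_t(x_t) - g^*_t(x_t)\rVert^2_\calH$ that the paper leaves implicit inside the citation.
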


\begin{proof}
    Let us recall the notations of \cite{vanderhoeven2023highprobability} in order to apply the Theorem 1 of their paper.
    Let \(\tilde \ell_t\) denote the shifted loss as in \cite{vanderhoeven2023highprobability}
    \begin{equation}
        \tilde \ell_t(g) = \ell_t\left(\tfrac{1}{2} g + \tfrac{1}{2} \hat g_t\right) = \left\lVert \tfrac{1}{2} g(x_t) + \tfrac{1}{2} \hat g_t(x_t) - \varphi_t \right\rVert^2_\calH .
    \end{equation}
    We defined \(R_T(\lambda, m)\) such that
    \begin{equation*}
        \sum_{t=1}^T \tilde \ell_t (\hat g_t(x_t)) - \tilde \ell_t(g^*_t) \le R_T(\lambda, m)
    \end{equation*}
    with the only assumption on \((g^*_t)_t\) that they are functions in the space \(\G\). Using the convexity of \(\tilde \ell\), we use Jensen inequality
    \begin{equation*}
        \tilde \ell \left( \E_{g\sim Q_t}[g] \right) \le \E_{g\sim Q_t} [\tilde \ell (g)] ,
    \end{equation*}
    where \((Q_t)_t\) are some distributions over \(\G\), and by convexity of the space \(\G\) we have that \( \E_{g\sim Q_t}[g] \in \G \). We derive the following inequality
    \begin{equation*}
        \sum_{t=1}^T \tilde \ell_t(\hat g_t) - \E_{g\sim Q_t} [\tilde \ell_t(g)] \le \sum_{t=1}^T \tilde \ell_t(\hat g_t) - \tilde \ell \left( \E_{g\sim Q_t}[g] \right) \le R_T(\lambda, m).
    \end{equation*}
    We now remark that the proof of the Theorem 1 of \cite{vanderhoeven2023highprobability} can be applied with a non-stationary baseline \((Q_t)_t\) instead of \(Q\). This concludes the proof.
\end{proof}

We use this lemma and a comparison inequality to bound the cumulative risk of the predictors.

\begin{restatable}[Expected Regret in a Non-Stationary Environment]{thm}{}
     Assume that there exists $(g_t^*)$ a sequence in $\G$ such that \( \smash{\E \big[ \sum_{t=1}^T \lVert g_t^*(x_t) - \E[\varphi_t | x_t] \rVert^2_\calH \big] = 0} \). Let \(\delta \in (0,1]\). Then, Algorithm~\ref{algo:non stationary aggregation high proba} run with $\lambda >0$ and $\smash{\eta = 1/2(\kappa \sup \lVert g \rVert_\G + 1)^2}$ satisfies with probability \(1-\delta\)
    \begin{equation}
        \E [ R_T ] = 
        \left\{
        \begin{array}{ll}
         \tilde O \left( V_\G^{1/6} T^{5/6} + T^{1/2} \sqrt{\log(\delta^{-1})} \right) & \text{if } \lambda = V_\G^{-1/3} T^{1/3} \\
         \tilde O \left( V_0^{1/4} T^{3/4} + T^{1/2} \sqrt{\log(\delta^{-1})} \right) & \text{if } \lambda = V_0^{-1/2} T^{1/2} 
        \end{array}
        \right. \,.
    \end{equation}
\end{restatable}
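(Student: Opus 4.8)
The plan is to mirror the proof of Theorem~\ref{theorem:expected regret non stationary environment}, replacing its expectation argument by the high-probability machinery already assembled in Lemma~\ref{theorem:Dynamic Cumulative Risk of the Feature Predictors}. The quantity actually controlled here is the cumulative risk $\sum_{t=1}^T \E_t[\Delta(\hat f_t(x_t),y_t)-\Delta(f_t^*(x_t),y_t)]$, as in Appendix~\ref{section:appendix stochastic regret bounds}, and the skeleton consists of three reductions followed by a calibration.

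First I would invoke the comparison inequality for the cumulative risk, Lemma~\ref{theorem:CI cumulative risk}, to pass from the structured loss of $(\hat f_t)$ to the feature-space error, obtaining the factor $2\cD\sqrt{T}$ in front of $\smash{\sqrt{\sum_t \E_t[\|\hat g_t(x_t)-\E[\varphi_t|x_t]\|^2_\calH]}}$. Then, exactly as in the proof of Theorem~\ref{theorem:Avg cumulative risk}, I would add and subtract $\|g_t^*(x_t)-\E[\varphi_t|x_t]\|^2_\calH$ and use the bias--variance identity obtained by conditioning on $x_t$ to rewrite $\E_t[\|\hat g_t(x_t)-\E[\varphi_t|x_t]\|^2 - \|g_t^*(x_t)-\E[\varphi_t|x_t]\|^2]$ as $\E_t[\|\hat g_t(x_t)-\varphi_t\|^2 - \|g_t^*(x_t)-\varphi_t\|^2]$. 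The residual term $\sum_t \E_t[\|g_t^*(x_t)-\E[\varphi_t|x_t]\|^2]$ vanishes under the standing assumption $g_t^*(x_t)=\E[\varphi_t|x_t]$, since both sides are $\F_{t-1}$-measurable.

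Next I would apply Lemma~\ref{theorem:Dynamic Cumulative Risk of the Feature Predictors}, which turns any deterministic bound $R_T(\lambda,m)$ on the shifted-loss dynamic regret $\sum_t \tilde\ell_t(\hat g_t)-\tilde\ell_t(g_t^*)$ into the high-probability bound $2R_T(\lambda,m)+2\gamma\log(\delta^{-1})$ on the conditional-expectation regret. The real work, and the main obstacle, is to produce this deterministic $R_T(\lambda,m)$ for the shifted estimators of Algorithm~\ref{algo:non stationary aggregation high proba}, i.e.\ a shifted analogue of Proposition~\ref{theorem:dynamic regret vawk}. I would reuse its three-part decomposition: \emph{(i)} the sleeping-expert aggregation error $\tfrac{m\log T}{\eta}$, which follows from the EWA/exp-concavity computation of Proposition~\ref{theorem:dynamic regret vawk} applied verbatim to the shifted loss $\tilde\ell_t$, exp-concave with the same $\eta$; \emph{(ii)} the per-block single-start KAAR regret, where the rewriting of Step~2 of Theorem~\ref{theorem:Avg cumulative risk} recasts the shifted estimator \eqref{eq:hat gt restart high proba} as an ordinary KAAR estimating the bounded targets $2\varphi_s-\hat g_s(x_s)$, of norm $\le B=2+\kappa\sup\|g\|_\G$, with regularisation $4\lambda$, so that Lemma~\ref{theorem:regret vawk} delivers the $\sum_i \deff(4\lambda,t_{i+1}-t_i)$ and $\lambda m\max_t\|g_t^*\|_\G^2$ contributions up to constants; and \emph{(iii)} the block-approximation error $\tfrac{4\kappa V_\G T}{m}$, which concerns only $(g_t^*)$ versus the block averages $(\bar g_{t_i:t_{i+1}})$ and is identical to Steps~3--4 of Proposition~\ref{theorem:dynamic regret vawk}. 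The shift thus alters only constants and logarithmic factors, leaving $R_T(\lambda,m)$ of the same order as in the expectation setting.

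Finally I would calibrate as in Theorem~\ref{theorem:expected regret non stationary environment}: in the continuous case, take $\lambda=\sqrt{T/m}$ and $m=\lceil V_\G^{2/3}T^{1/3}\kappa^{-2/3}\rceil$ to reach $R_T(\lambda,m)=\tilde O(V_\G^{1/3}T^{2/3})$; in the discrete case, drop the approximation term and take $\lambda=\sqrt{T/V_0}$ with $m=V_0$ to reach $R_T=\tilde O(\sqrt{TV_0})$. Substituting into $2\cD\sqrt{T}\sqrt{2R_T(\lambda,m)+2\gamma\log(\delta^{-1})}$ and splitting the square root via $\sqrt{a+b}\le\sqrt a+\sqrt b$ (with $\gamma$ a constant depending on $\kappa$ and $\sup\|g\|_\G$) isolates the deviation term, yielding $\tilde O(V_\G^{1/6}T^{5/6}+T^{1/2}\sqrt{\log(\delta^{-1})})$ and $\tilde O(V_0^{1/4}T^{3/4}+T^{1/2}\sqrt{\log(\delta^{-1})})$ respectively. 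I expect step~\emph{(ii)} above—checking that the sleeping-expert aggregation over the \emph{shifted} experts composes cleanly with the shifted single-block KAAR regret, so that the constants introduced by the $\tfrac14$ and factor-$2$ manipulations remain harmless—to be the most delicate point.
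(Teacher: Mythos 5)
Your proposal is correct and follows essentially the same route as the paper's own proof: the comparison inequality for the cumulative risk, the add-and-subtract/bias--variance step, the high-probability conversion via Lemma~\ref{theorem:Dynamic Cumulative Risk of the Feature Predictors}, a shifted-loss analogue of Proposition~\ref{theorem:dynamic regret vawk} obtained by recasting the shifted estimators as ordinary \textit{KAAR} on the targets $2\varphi_s-\hat g_s(x_s)$ with regularisation $4\lambda$, and the same calibrations of $\lambda$ and $m$ in the continuous and discrete cases. The point you flag as delicate (the constants induced by the shift, e.g. the approximation term becoming $(1+\kappa\sup\lVert g\rVert_\G)\kappa V_\G T/m$ rather than $4\kappa V_\G T/m$) is exactly where the paper's proof also only changes constants, so nothing is missing.
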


Precisely, let \(\gamma\) let be defined as in Eq.~\eqref{eq:gamma}, if \(\lambda = V_\G^{-1/3} T^{1/3}\) we have
\begin{align*}
    &\sum_{t=1}^T \E_t [ \Delta(\hat f_t(x_t), y_t) - \Delta(f^*_t(x_t), y_t) ] \\
    &\le 2 \cD \sqrt{2T} \left(\begin{aligned}
        &\tfrac{\left\lceil V_\G^{2/3} T^{1/3} \kappa^{-2/3} \right\rceil \log T}{\eta} + \left(1 + \kappa \sup \lVert g \rVert_\G \right) \kappa^{5/3} V_\G^{1/3} T^{2/3} \\
        &+ ( V_\G^{2/3} T^{4/3} \kappa^{-2/3} + T )^{1/2} \tfrac{\kappa^2 B^2}{16} \log \left(e + \tfrac{e \kappa^2}{4} ( V_\G^{2/3} T^{4/3} \kappa^{-2/3} + T )^{1/2}\right) \\
        &+ ( V_\G^{2/3} T^{4/3} \kappa^{-2/3} + T )^{1/2} \max_{t\in\llbracket T \rrbracket} \lVert g^*_t \rVert^2_\G + \gamma \log(\delta^{-1})
    \end{aligned} \right)^{1/2} ,
\end{align*}
and if \( \lambda = V_0^{-1/2} T^{1/2} \) we have
\begin{multline*}
    \sum_{t=1}^T \E_t [ \Delta(\hat f_t(x_t), y_t) - \Delta(f^*_t(x_t), y_t) ] \\
    \le 2 \cD \sqrt{2T} \Big( \tfrac{V_0 \log T}{\eta} + \tfrac{B^2 \kappa^2 \sqrt{T V_0}}{16} \log \Big( e + \tfrac{e \kappa^2 \sqrt{T V_0}}{4} \Big) +  \sqrt{T V_0} \max_{i} \lVert \bar g_{t_i:t_{i+1}} \rVert^2 + \gamma \log(\delta^{-1}) \Big)^{1/2} \,.
\end{multline*}

\begin{proof}
    \textbf{Step 1: Controlling the regret of \((f_t)_t\) by the regret of \((\hat g_t)_t\).}
    From Lemma \ref{theorem:CI cumulative risk}, we have
    \begin{align*}
        \sum_{t=1}^T &\E_t [\Delta(\hat f_t (x_t), y_t) - \Delta(f^*_t(x_t), y_t) ]\\
        &\le 2 \cD \sqrt{T} \sqrt{ \sum_{t=1}^T \E_t [ \lVert \hat g_t(x_t) - \E [\varphi_t | x_t] \rVert^2_\calH ] } \\
        &= 2 \cD \sqrt{T} \sqrt{ \sum_{t=1}^T \E_t [ \lVert \hat g_t(x_t) - \E [\varphi_t | x_t] \rVert^2_\calH - \lVert g^*_t(x_t) - \E [\varphi_t | x_t] \rVert^2_\calH + \lVert g^*_t(x_t) - \E [\varphi_t | x_t] \rVert^2_\calH ] } \\
        &= 2 \cD \sqrt{T} \sqrt{ \sum_{t=1}^T \E_t [ \lVert \hat g_t(x_t) -\varphi_t \rVert^2_\calH - \lVert g^*_t(x_t) - \varphi_t \rVert^2_\calH ] + \E_t [ \lVert g^*_t(x_t) - \E [\varphi_t | x_t] \rVert^2_\calH ] } .
    \end{align*}

    We may now apply Lemma \ref{theorem:Dynamic Cumulative Risk of the Feature Predictors} to obtain
    \begin{align}
        \sum_{t=1}^T &\E_t [ \Delta(\hat f_t(x_t), y_t) - \Delta(f^*_t(x_t), y_t) ] \nonumber \\
        &\le 2 \cD \sqrt{T} \sqrt{2 R_T(\lambda, m) + 2 \gamma \log(\delta^{-1}) + \sum_{t=1}^T \lVert g^*_t(x_t) - \E[\varphi_t|x_t] \rVert^2_\calH } 
    \end{align}
    where \(R_T(\lambda, m)\) is defined as 
    \begin{equation*}
        R_T(\lambda, m) = \tfrac{m \log T}{\eta} + \tfrac{B^2}{4} \log \left( e + \tfrac{e \kappa^2 T}{4\lambda} \right) \sum_{i=1}^m \deff(4\lambda, t_{i+1}-t_i) + \lambda m \max_{t\in\llbracket T \rrbracket} \lVert g^*_t \rVert^2_\G + \tfrac{\left(1 + \kappa \sup \lVert g \rVert_\G \right) \kappa V_\G T}{m}
    \end{equation*}
    and comes from Proposition \ref{theorem:dynamic regret vawk} for shifted losses.

    \textbf{Case 1: Continuous variations.}
    We bound the effective dimension by \( \deff(4\lambda, t_{i+1} - t_i) \le \tfrac{\kappa^2 (t_{i+1} - t_i)}{4\lambda} \). Thus we can bound the sum by \( \sum_{i=1}^m \deff(4\lambda, t_{i+1}-t_i) \le \kappa^2 T / 4\lambda \) and obtain
    \begin{equation*}
        R_T(\lambda, m) \le \tfrac{m \log T}{\eta} + \tfrac{B^2}{4} \log \left( e + \tfrac{e \kappa^2 T}{4\lambda} \right) \tfrac{\kappa^2 T}{4\lambda} + \lambda m \max_{t\in\llbracket T \rrbracket} \lVert g^*_t \rVert^2_\G + \tfrac{ \left(1 + \kappa \sup \lVert g \rVert_\G \right) \kappa V_\G T}{m} .
    \end{equation*}
    We choose \(\lambda = \sqrt{T/m}\) and get
    \begin{equation*}
        R_T(\lambda, m) \le \tfrac{m \log T}{\eta} + \tfrac{B^2}{4} \log \left( e + \tfrac{e \kappa^2 \sqrt{Tm}}{4} \right) \tfrac{\kappa^2 \sqrt{Tm}}{4} + \sqrt{Tm} \max_{t\in\llbracket T \rrbracket} \lVert g^*_t \rVert^2_\G + \tfrac{ \left(1 + \kappa \sup \lVert g \rVert_\G \right) \kappa V_\G T}{m} .
    \end{equation*}
    We choose \(m = \left\lceil V_\G^{2/3} T^{1/3} \kappa^{-2/3} \right\rceil\) and get
    \begin{multline*}
        R_T(\lambda, m) \le \tfrac{\left\lceil V_\G^{2/3} T^{1/3} \kappa^{-2/3} \right\rceil \log T}{\eta} + \left(1 + \kappa \sup \lVert g \rVert_\G \right) \kappa^{5/3} V_\G^{1/3} T^{2/3} \\
        + ( V_\G^{2/3} T^{4/3} \kappa^{-2/3} + T )^{1/2} \left( \tfrac{\kappa^2 B^2}{16} \log \left(e + \tfrac{e \kappa^2}{4} ( V_\G^{2/3} T^{4/3} \kappa^{-2/3} + T )^{1/2}\right) + \max_{t\in\llbracket T \rrbracket} \lVert g^*_t \rVert^2_\G \right)  .
    \end{multline*}
    We conclude by noting that \(V_\G \le (2T-1) \sup \lVert g \rVert_\G\).

    \textbf{Case 2: Discrete variations.} In the case of discrete distributions data variations there is no more need to approximate the data distributions \((g^*_t)_t\) by the hypothetical forecasters \((\bar g_{t_i:t_{i+1}})_i\). Thus the third term of the sum in Eq. \eqref{eq:dynamic regret kaar sum} is not necessary. And we can replace \(R_T(\lambda, m)\) by \(R^0_T(\lambda, V_0)\) defined as
    \begin{equation*}
        R^0_T(\lambda, V_0) := \dfrac{V_0 \log T}{\eta} + \tfrac{B^2}{4} \log \left( e + \tfrac{e \kappa^2 T}{4\lambda} \right) \sum_{i=1}^{V_0} \deff(4\lambda, t_{i+1}-t_i) + \lambda \sum_{i=1}^{V_0} \lVert \bar g_{t_i:t_{i+1}} \rVert^2_\G .
    \end{equation*}
    We bound the effective dimension by \( \deff(4\lambda, t_{i+1} - t_i) \le \tfrac{\kappa^2 (t_{i+1} - t_i)}{4\lambda} \). Thus we bound the sum by \( \sum_{i=1}^{V_0} \deff(4\lambda, t_{i+1}-t_i) \le \kappa^2 T / 4\lambda \) and obtain
    \begin{equation}
        R^0_T(\lambda, V_0) \le \tfrac{V_0 \log T}{\eta} + \tfrac{B^2 \kappa^2 T}{16 \lambda} \log \left( e + \tfrac{e \kappa^2 T}{4\lambda} \right) + \lambda V_0 \max_{i\in \llbracket V_0 \rrbracket} \lVert \bar g_{t_i:t_{i+1}} \rVert^2_\G .
    \end{equation}
    By choosing \( \lambda = \sqrt{T / V_0} \), we get
    \begin{equation}
        R^0_T(\lambda, V_0) \le \tfrac{V_0 \log T}{\eta} + \tfrac{B^2 \kappa^2 \sqrt{T V_0}}{16} \log \left( e + \tfrac{e \kappa^2 \sqrt{T V_0}}{4} \right) +  \sqrt{T V_0} \max_{i\in \llbracket V_0 \rrbracket} \lVert \bar g_{t_i:t_{i+1}} \rVert^2_\G .
    \end{equation}
    We then bound the cumulative risk using Step 1.
    \begin{multline*}
        \sum_{t=1}^T \E_t [ \Delta(\hat f_t(x_t), y_t) - \Delta(f^*_t(x_t), y_t) ] \\
        \le 2 \cD \sqrt{2T} \sqrt{\tfrac{V_0 \log T}{\eta} + \tfrac{B^2 \kappa^2 \sqrt{T V_0}}{16} \log \left( e + \tfrac{e \kappa^2 \sqrt{T V_0}}{4} \right) +  \sqrt{T V_0} \max_{i\in \llbracket V_0 \rrbracket} \lVert \bar g_{t_i:t_{i+1}} \rVert^2_\G + \gamma \log(\delta^{-1})}
    \end{multline*}
    We now note that \(V_0\le T\) to conclude the proof.
\end{proof}

\subsection{Refined Regret Bounds}

In this section we assume that the capacity condition holds and we derive refined bounds of the cumulative risk. For more details about the capacity condition see Appendix \ref{section:appendix capacity condition}.
Assuming that there exists \((g^*_t)\) a sequence in \(\G\) such that \( \E \left[ \sum_{t=1}^T \lVert g^*_t(x_t) - \E[\varphi_t | x_t] \rVert^2_\calH \right] = 0 \). Let \(\delta \in (0,1]\). Then choosing \( \lambda = T^{\tfrac{\beta}{\beta + 1}} m^{\tfrac{-\beta}{\beta+1}} \) and \( m= \Big \lceil V_\G^{\tfrac{\beta + 1}{\beta +2}} T^{\tfrac{1}{\beta + 2}} \kappa^{\tfrac{-(\beta+1)}{\beta+2}} \Big \rceil \) leads to the following bound on the expected regret with probability \(1-\delta\)
\begin{equation}
    \sum_{t=1}^T \E_t [ \Delta(\hat f_t(x_t), y_t) - \Delta(f^*_t(x_t), y_t) ] = \tilde O \left( V_\G^{\tfrac{1}{2(\beta+2)}} T^{\tfrac{2\beta +3}{2(\beta+2)}} + T^{1/2} \sqrt{\log(\delta^{-1})} \right) .
\end{equation}

Indeed using the capacity condition, we bound the effective dimension by \( \deff(4\lambda, t_{i+1} - t_i) \le Q \left(\tfrac{t_{i+1} - t_i}{4\lambda}\right)^\beta \). Using Jensen's inequality, we then derive
\begin{align*}
    \sum_{i=1}^m \deff(4\lambda, t_{i+1} - t_i) &\le Q \sum_{i=1}^m \left(\dfrac{t_{i+1} - t_i}{4\lambda}\right)^\beta \\
    &= Q m \sum_{i=1}^m \dfrac{1}{m} \left(\dfrac{t_{i+1} - t_i}{4\lambda}\right)^\beta \\
    &\le Q m \left( \sum_{i=1}^m \dfrac{t_{i+1} - t_i}{4\lambda m}\right)^\beta \\
    &= Q m^{1-\beta} \left(\dfrac{T}{4\lambda}\right)^\beta .
\end{align*}
We obtain
\begin{equation*}
        R_T(\lambda, m) \le \tfrac{m \log T}{\eta} + \tfrac{Q B^2}{4} m^{1-\beta} \left(\tfrac{T}{4\lambda}\right)^\beta \log \left( e + \tfrac{e \kappa^2 T}{4\lambda} \right) + \lambda m \max_{t\in\llbracket T \rrbracket} \lVert g^*_t \rVert^2_\G + \tfrac{ \left(1 + \kappa \sup \lVert g \rVert_\G \right) \kappa V_\G T}{m} .
    \end{equation*}
We choose \( \lambda = T^{\tfrac{\beta}{\beta + 1}} m^{\tfrac{-\beta}{\beta+1}} \), and obtain
\begin{equation*}
    R_T(\lambda, m) \le \tfrac{m \log T}{\eta} + \tfrac{ \left(1 + \kappa \sup \lVert g \rVert_\G \right) \kappa V_\G T}{m} + m^{\tfrac{1}{\beta+1}} T^{\tfrac{\beta}{\beta+1}} \left( \tfrac{Q B^2 4^{-\beta}}{4} \log\left( e + \tfrac{e \kappa^2}{4} m^{\tfrac{\beta}{\beta + 1}} T^{\tfrac{1}{\beta+1}} \right) + \max_{t\in\llbracket T \rrbracket} \lVert g^*_t \rVert^2_\G \right)  .
\end{equation*}
We choose \( m= \Big \lceil V_\G^{\tfrac{\beta + 1}{\beta +2}} T^{\tfrac{1}{\beta + 2}} \kappa^{\tfrac{-(\beta+1)}{\beta+2}} \Big \rceil \) to obtain
\begin{multline*}
    R_T(\lambda, m) \le \tfrac{\Big \lceil V_\G^{\tfrac{\beta + 1}{\beta +2}} T^{\tfrac{1}{\beta + 2}} \kappa^{\tfrac{-(\beta+1)}{\beta+2}} \Big \rceil \log T}{\eta} + \left(1 + \kappa \sup \lVert g \rVert_\G \right) \kappa^{\tfrac{2\beta +3}{\beta+2}} V_\G^{\tfrac{1}{\beta+2}} T^{\tfrac{\beta+1}{\beta+2}} \\
    + \Big( V_\G^{\tfrac{1}{\beta+2}} T^{\tfrac{\beta+1}{\beta+2}} \kappa^{\tfrac{-1}{\beta+2}} + T^{\tfrac{\beta}{\beta+1}} \Big) \Big( \tfrac{Q B^2 4^{-\beta}}{4} \log\Big( e + e \tfrac{\kappa^{\tfrac{2\beta+3}{\beta+2}} V_\G^{\tfrac{\beta}{\beta + 2}} T^{\tfrac{2}{\beta+2}} + T^{\tfrac{1}{\beta+1}}}{4} \Big) + \max_{t \in \llbracket T \rrbracket} \lVert g^*_t \rVert^2_\G \Big) .
\end{multline*}
We conclude by noting that \(V_\G = \|g_1^*\|_\G + \sum_{t=2}^T \lVert g_t^* - g^*_{t-1} \rVert_\G \le (2T-1) \sup \lVert g \rVert_\G \).

\end{document}